\definecolor{darkred}{RGB}{165,42,42}
\definecolor{darkgreen}{RGB}{0,130, 0}%
\definecolor{darkblue}{RGB}{0,0,150}
\newcommand{\hideblock}[1]{\colorbox{gray!25}{$\displaystyle #1$}}
\newtheorem{theorem}{Theorem}
\newtheorem{lemma}[theorem]{Lemma}
\theoremstyle{definition}
\newtheorem{definition}{Definition}
\newtheoremstyle{lowspace}%
  {\topsep}%
  {-.15\baselineskip}%
  {\itshape}%
  {0pt}%
  {\bfseries}%
  {.}%
  { }%
  {\thmname{#1}\thmnumber{ #2}\textnormal{\thmnote{ (#3)}}}
\theoremstyle{lowspace}
\def \R{\mathbb{R}}
\newcommand{\methodname}{\textsc{EAGLE}}
\newcommand{\attn}{\textrm{Attn}}
\colorlet{algblue}{cyan!15}
\title{Linear Transformers Implicitly Discover\\ Unified Numerical Algorithms}%
\author{\begin{tabular}{c@{\qquad}c}
  Patrick Lutz$^*$ & Aditya Gangrade$^*$ \\ Boston University & Boston University \\ \texttt{plutz@bu.edu} & \texttt{gangrade@bu.edu} \\  & \\ Hadi Daneshmand\textsuperscript{\textdagger} & Venkatesh Saligrama \\ University of Virginia & Boston University \\ \texttt{dhadi@virginia.edu} & \texttt{srv@bu.edu} 
\end{tabular}}
\date{\vspace{-\baselineskip}}
\begin{document}

\maketitle
\def\thefootnote{*}\footnotetext{These authors contributed equally to this work}\def\thefootnote{\arabic{footnote}}
\def\thefootnote{\textdagger}\footnotetext{Research was performed at BU under a NSF TRIPODS post-doctoral grant}\def\thefootnote{\arabic{footnote}}

\begin{abstract}

We train a \emph{linear, attention-only} transformer on millions of \textit{masked-block} completion tasks: each prompt is a masked low-rank matrix whose missing block may be (i) a scalar prediction target or (ii) an unseen kernel slice for Nystr\"{o}m extrapolation. The model sees only input–output pairs and a mean-squared loss; it is given no normal equations, no handcrafted iterations, and no hint that the tasks are related. Surprisingly, after training, algebraic unrolling reveals the \emph{same} parameter-free update rule across \emph{three distinct computational regimes} (full visibility, rank-limited updates, and distributed computation). We prove that this rule achieves second-order convergence on full-batch problems, cuts distributed iteration complexity, and remains accurate with compute-limited attention. Thus, a transformer trained solely to patch missing blocks \emph{implicitly discovers} a unified, resource-adaptive iterative solver spanning prediction, estimation, and Nystr\"{o}m extrapolation—highlighting a powerful capability of in-context learning.

\end{abstract}

\section{Introduction}
Models trained on next-token prediction achieve strong performance across various NLP tasks, including question answering, summarization, and translation~\citep{radford2019language}. This multitask capability suggests an intriguing possibility: within structured mathematical contexts transformers might implicitly learn generalizable numerical algorithms solely through next-token prediction.

Next-token prediction can naturally be viewed as a form of matrix completion—inferring missing entries by exploiting dependencies in observed data. Prior work extensively explores matrix completion under computational constraints, such as distributed data access~\cite{chen2020communication}, limited communication bandwidth~\cite{ma2020communication}, and low-rank recovery from partial observations~\cite{candes2009exact,candes2010power,davenport2016overview}. This raises a natural question: how do transformers implicitly handle such computational constraints to perform multitask learning? 
\begin{center}
    \textit{Can a neural network \emph{invent} a numerical algorithm simply by learning to
fill in missing data?}
\end{center}
Transformers excel at \emph{in-context learning} (ICL), adapting to new tasks from a short prompt of examples~\cite{brown2020language,xie2021explanation}, and ICL serves as a simple subdomain to investigate the representational and learning properties of transformers. Recent studies suggest that for transformers trained on ICL tasks, the computations encoded by the weights resemble gradient-based methods~\cite{akyurek2022learning,vonoswald2023transformers,ahn2023transformers}, typically in single-task scenarios. However, this analogy is limited: \emph{gradient descent operates explicitly in parameter space, whereas transformers perform data-to-data transformations without direct access or updates to parameters}. Additionally, much of the exploratory focus in such work has been on the complexity of the regression task, while keeping the computational regime fixed. Thus, it remains unclear whether transformers implicitly develop distinct, unified numerical algorithms suited to diverse tasks and resource constraints. %

\textbf{Masked-block Completion and Architectural Masks:}
To probe this question, we train a linear transformer on masked-block completion tasks that hide one block of a low-rank matrix mirroring the classical Nystr\"{o}m completion problem. The transformer's task is to infer these missing blocks based solely on observable data and a mean-squared error objective—without explicit guidance about the underlying parameters or relationships between tasks. We impose three distinct architectural visibility constraints on transformers, each reflecting practical computational limitations common in large-scale optimization: \emph{centralized} (full visibility), \emph{distributed} (restricted communication), and \emph{computation-limited} (restricted complexity via low-dimensional attention). Each regime employs the same underlying transformer architecture, differing only minimally in their attention masks.

\textbf{Emergence of a Unified Algorithm:} Remarkably, despite training independently under these distinct computational constraints, we find that transformers implicitly uncover the same concise, two-line iterative update rule. This unified algorithm, termed {\methodname} emerges consistently across tasks and constraints, exhibiting strong theoretical and empirical properties: it achieves second-order convergence on full-batch problems, matches classical methods in centralized settings, significantly reduces communication complexity in distributed settings and remains accurate under computation-limited conditions. 

\textbf{Summary of Contributions.} The main contributions of this work are  \begin{itemize}[wide, nosep, topsep = -.5\baselineskip]

\item \textit{Unified masked-block completion benchmark:} A single training framework integrating multiple prediction or inference tasks into a unified interface.

\item \textit{Resource-aware transformer architectures:} Transformer-based models naturally adapt to centralized, distributed, and computation-limited environments through simple architectural constraints.
\item \textit{Implicit numerical algorithm discovery:} Transformers implicitly discover a unified, efficient numerical solver, {\methodname}, that achieves second-order convergence on full-batch problems and consistently matches or outperforms classical methods across completion, extrapolation, and estimation tasks under varying resource constraints. Our theoretical results uniformly apply across all these tasks, highlighting the broad applicability of the discovered algorithm.\vspace{0.5\baselineskip}
\end{itemize}

These findings position transformers trained on block completion as powerful tools for uncovering numerical algorithms, offering a promising route toward developing adaptive, data-driven solvers.

\section{Related Work}
We focus on literature most pertinent to viewing transformers as fixed data-to-data transforms whose forward pass exposes emergent algorithms. %

\textit{Implicit algorithm learning.}
Transformers have been shown to recover 1\textsuperscript{st}- and 2\textsuperscript{nd}-order methods for least squares, dual GD for optimal transport, and TD
updates for RL
\citep{vonoswald2023transformers,ahn2023transformers,daneshmand2024provable,wang2025transformers,fu2024transformers,giannou2023looped}.
Weight-level circuit studies reverse-engineer copy-and-addition “induction
heads’’ \citep{olah2022transformercircuits,nanda2023induction}, but are still
confined to token-manipulation algorithms.
RNNs and MLPs display related behavior
\citep{siegelmann1992computational,tong2024mlps}, yet attention yields
depth-aligned, easily inspected updates \citep{garg2023transformerslearnincontextcase}.
Most prior work is therefore limited to first-order rules and a single,
centralized computational regime.

\textit{Task vs.\ regime.} Earlier in-context studies
\citep{akyurek2022learning,bai2023transformers,min2021metaicl} vary the
regression task—sampling while the
hardware regime stays fixed. We take the opposite view: we \emph{fix} one
low-rank matrix-completion task and show that the same transformer discovers a
rule that adapts automatically as compute, memory, or communication
budgets are tightened.

\textit{Data-space vs.\ parameter-space.} Prior analyses tend interpret the forward pass of a transformer trained on an ICL task as updating an underlying hidden \emph{parameter} for some model of the training data relationships. However, in reality, a transformer is a \emph{data-to-data map}, with no explicit supervision of underlying model of parameters, which is the viewpoint we adopt. Weight-level ‘circuit’ studies \citep{olah2022transformercircuits,nanda2023induction} of trained transformers also take this view, but are focused on tasks like copying or addition; we instead are focused on numerical linear algebra tasks in varying computational settings. 

\textit{In-context regression limits.}
Theoretical lenses that view in-context learning as linear regression
\citep{akyurek2022learning,bai2023transformers} recover
gradient-descent dynamics with $\sqrt{\kappa}$ dependence on the condition number. \cite{vladymyrov2024linear} show that, with suitable parametrization, preconditioning yields second-order dynamics and $\log\kappa$ convergence. We show that transformer training naturally converges to such parametrizations in practice, with this behavior robust to bandwidth and memory constraints. Theoretically, we prove that the recovered {\methodname} method enjoys favorable guarantees hold across all studied settings, and further validate these empirically.

\textit{Links to classical numerical algorithms.} Structurally, the recovered {\methodname} iterations bear intimate relationship to the Newton-Schulz method for inverting positive matrices \citep{higham1997schulz}, but extend this to solve non-positive linear systems. In the centralised and sketched settings, this underlying relationship lends {\methodname} a second-order convergence rate, with iteration complexity depending only logarithmically on the condition number. This is particularly interesting since usual implementations of Nystr\"{o}m approximations for kernel methods all focus on Krylov-based solvers \citep{williams2001nystrom,halko2011finding,gittens2016revisiting}, and {\methodname} may lend new approaches to the same. %

\textit{Distributed and sketched solvers.}
Block-CG, communication-avoiding Krylov
\citep{demmel2013communication,hoefler2019sparse} and
sketch-and-solve techniques \citep{clarkson2017woodruff,woodruff2014sketching}
dominate practice but remain first-order, and have net communication requirements scaling with the (joint) covariance of the data. We characterise the distributed {\methodname} performance in terms of a `data diversity index', $\alpha,$ and find significantly smaller communication (and iteration) complexity when $\alpha^{-1}$ is much smaller that this joint condition number.

\section{Method}\label{sec:method}

We explain how data are generated and encoded, how architectural constraints enforce three resource regimes, and how we extract an explicit numerical solver from trained weights.

\textbf{Block Completion Setup.} We generate a low-rank matrix task by first drawing a matrix of the form 
\[
X \;=\;
\begin{bmatrix}
  A & C\\
  B & \hideblock{D}
\end{bmatrix}\in\R^{(d+d')\times(n+n')},
\quad
\operatorname{rank}(X)=\operatorname{rank}(A),\tag{1}
\]
and then feeding the transformer a prompt matrix $Z_0$ in which the lower-right grey block \(D\) is masked, i.e., set to zero.  
Given the visible blocks \(A, B, C\), the model must reconstruct
\(D\) with low \(\ell_{2}\) error.  This single prompt template
generalizes Nystr\"{o}m extrapolation and scalar regression
(Appx.~\ref{appx:methods}). We train on both exact samples and noisy variants obtained by adding Gaussian noise with variance \(\sigma^{2}\in\{0,10^{-2}\}\), so the low-rank assumption is approximate but realistic. Note that the Nystr\"{o}m approximation is the minimum-rank completion that matches observed entries, even if $X$ is full rank.

\begin{figure}[t]           %
  \centering
  \includegraphics[width=\textwidth]{./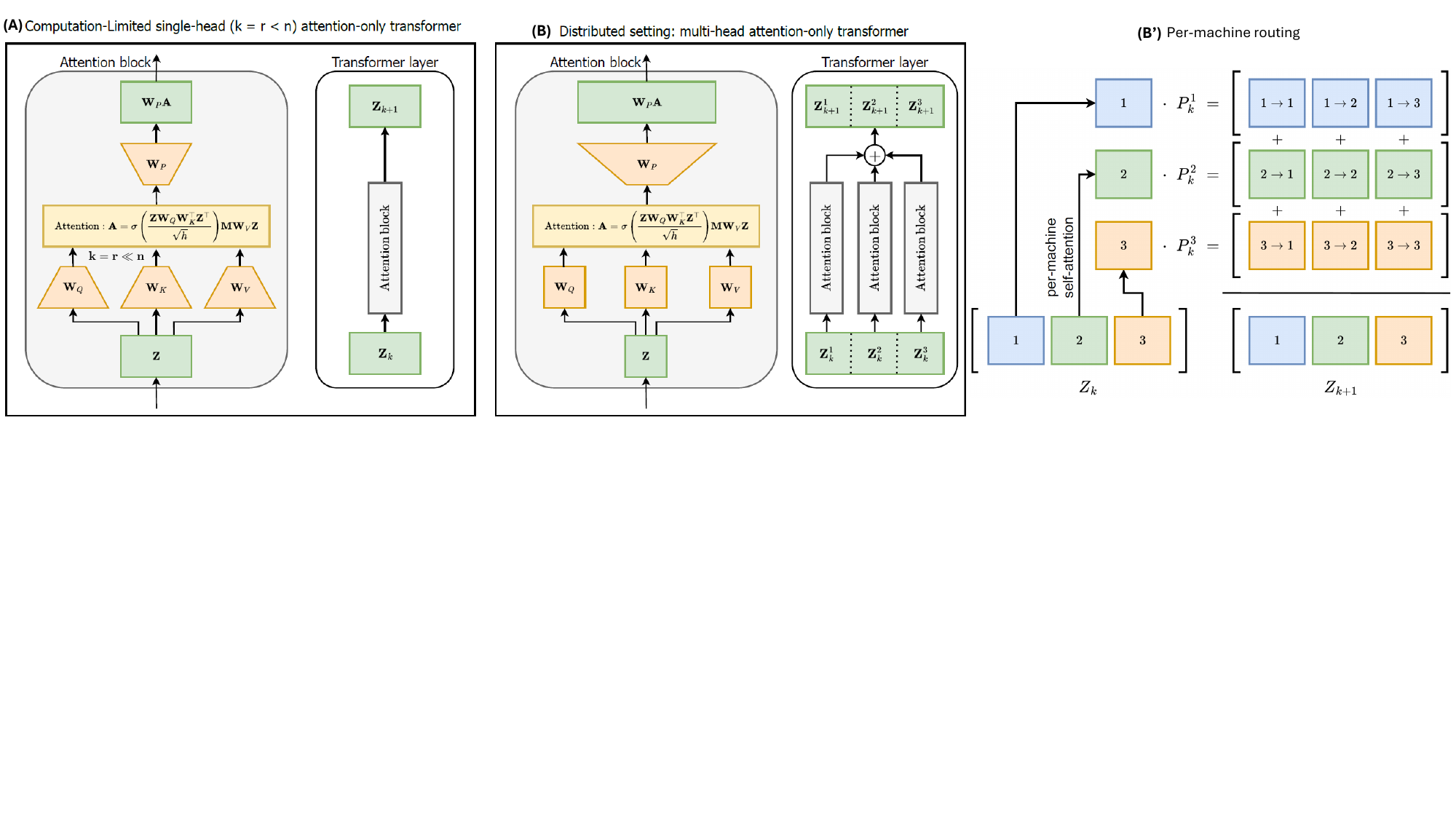}
 \caption{\footnotesize \textbf{Architectural regimes studied in this work.}
\textbf{(A)} \emph{Computation-limited}: a single-head attention-only transformer
whose query, key , value \& projections are restricted to a low-rank embedding of
dimension $k=r\ll n$, i.e., an explicit bottleneck that reduces the per-layer cost from
$\Theta(n^{2}d)$ to $\Theta(n\,r\,d)$; \textit{Unconstrained.} The embedding dimension is set to $k=n+n'$.  
\textbf{(B)} \emph{Distributed}: a multi-head transformer in which each head operates on data
stored on a separate machine; the heads run local attention and their outputs are
aggregated.
\textbf{(B$'$)} \emph{Per-machine routing}: detailed view of the distributed setting showing
how each machine $\,\mu\,$ forms its local projection $P^{\mu}_{k}$ and contributes to the
next-layer representation $Z_{k+1}$.}\vspace{-1\baselineskip}
\label{fig:architectures}
\end{figure}

\textbf{Data generation.}
We construct rank-$s$ data matrices $X\in\mathbb R^{(d+d')\times (n+n')}$ by sampling $R_1\in\mathbb R^{ (d+d') \times s}$ and $R_2\in\mathbb R^{(n + n') \times s}$, each with rows drawn independently from \(\mathcal N(0,\Sigma)\), and setting $X=R_1R_2^\top/\sqrt s$. 
To control the difficulty of the problem, we choose $\Sigma$ to be diagonal with entries $\Sigma_{ii}=\alpha^i$, where $\alpha<1$, inducing strong anisotropy. Unless stated otherwise, we use the following parameters throughout: $n=d=18$, $n'=d'=2$, $s=10$ and $\alpha=0.7$. This yields matrices $X$ with condition number on the order of $10^3$. Half the time, we add Gaussian noise of variance $0.01$ to $X$.

\textbf{Linear-attention transformer.}
We employ the linear-attention variant of transformers
\citep{ahn2023transformers,vonoswald2023transformers,wang2025transformers}.
Each layer~\(\ell\) applies multi-head attention with a residual skip connection, to transform $Z_0 = \begin{bmatrix} A & C\\ B & 0\end{bmatrix}$ to $Z_\ell := \begin{bmatrix} A_\ell & C_\ell \\ B_\ell & D_\ell \end{bmatrix}$ via the iterative structure
\begin{equation}\label{eqn:attention_definition}
Z_{\ell+1}=Z_{\ell}+\sum_{h \in [1:H]}\attn_{\ell}^{h}(Z_{\ell}),\quad
\attn_{\ell}^{h}(Z)=\bigl(ZW_{Q,\ell}^{h}(ZW_{K,\ell}^{h})^{\!\top} \odot M_\ell^h\bigr) Z\,W_{V,\ell}^{h}W_{P,\ell}^{h\,\top}, %
\end{equation}
where \(W_{Q},W_{K},W_{V},W_{P} \in \mathbb{R}^{n \times k}\) are query, key, value, and projection
matrices, and the fixed mask \(M_{\ell}^{h} = \begin{bmatrix} \mathbf{1}_{(d+d')}\mathbf{1}_d^\top & 0_{(d+d')\times d'} \end{bmatrix} \) blocks the flow of information from the incomplete $(C_\ell, D_\ell)$ column in $Z_\ell$ towards the visible column. Models are trained to minimize mean-squared error on \(D\);
training details are in \S\ref{appx:methods}.
A schematic of the architecture appears in Fig.~\ref{fig:architectures}(A).

\textbf{Computational Regimes via Architectural Constraints.} We study three distinct computational regimes— unconstrained (or centralized), computation-limited (low-dimensional attention), and distributed (restricted inter-machine communication)—each encoded into the architecture through explicit attention and dimensionality constraints. Figure~\ref{fig:architectures} summarizes these architectural regimes.

\textit{Unconstrained.} 
No visibility or dimension constraints are imposed; each token attends to 
all others.  We set the embedding dimension \(k=n+n'\).

\noindent \textit{Computation-Limited.} To emulate memory- or latency-bounded hardware we enforce a \emph{low-rank
attention} constraint: the query and key matrices have embedding size $k = r \ll n,$ and the value and projection to $r + n'$ (we use \(r=5\approx n/4\); see
Fig.~\ref{fig:architectures}A).
This bottleneck compresses the input and cuts the per-layer cost of recovered algorithms from
\(\Theta(nd^2)\) to \(\Theta(ndr)\).
(See \S\ref{appx:methods} for details).

\textit{Distributed Computation.} 
The prompt $Z_0$ distributes $X$ across $M$ machines through a block structure \[ Z_0 = \begin{bmatrix} \cdots & X^\mu & X^{\mu + 1} & \cdots \end{bmatrix} \in \mathbb{R}^{(d+d') \times M(n+n')}, \textit{ where } 
X^{\mu}=
\begin{bmatrix}
  A^{\mu} & C \\[2pt]
  B^\mu & \hideblock{D}
\end{bmatrix},
 \mu \in [1:M]
\]
Each head is assigned to a machine $\mu$, and all but the $\mu$th block in its query, key matrices are zeroed, enforcing local self-attention. Each $\mu$ then `transmits' information to others through the value, projection matrices, and incoming transmissions are summed to generate $Z_{\ell+1}$. The full algebraic form, matrix partitioning and communication setups are provided in \S\ref{appx:methods}.

\textbf{Algorithm extraction.}
We generate explicit algorithms from a trained transformer by progressively simplifying its weights until a concrete update rule emerges. The steps taken are (also see \S\ref{appx:methods}):
\begin{itemize}[wide, nosep, topsep = -.3\baselineskip]
\item \textit{Weight quantization.} Cluster component values and sparsify matrices by dropping the values below $\le \tau$. We then evaluate the performance with clustered weights to confirm no loss in performance. 
\item \textit{Matrix Property Tests,} Check whether resulting matrices are random, sparse, or low-rank. 
\item \textit{Scaling Laws.} Identify how transformer scales attention and value blocks layer-by-layer.
\end{itemize}

\noindent In the unconstrained and compute-limited settings, an architecture with one head per layer suffices, while in the distributed setting, we use one head per machine. 
After these simplifications every regime collapses to the \emph{same} two-line, parameter-free update rule (Eq.\,2 in §\,4), providing a direct algorithmic interpretation of the transformer's in-context computation.

\section{Emergent Algorithm}
\label{sec:emergent_alg}

\begin{wraptable}[12]{r}{.4\linewidth}
\vspace{-\baselineskip}
\begin{minipage}{\linewidth}
\centering
\begin{tabular}{@{}cccc@{}}
\toprule
layer & U$\times 10^4$  & D$\times 10^4$ & C-L$\times 10^4$ \\ \midrule
0    & 0.00 & 0.00    & 0.00     \\
1    & 2.36 & 0.62    & 0.01     \\
2    & 5.27 & 1.40    & 0.02     \\
3    & 3.70 & 1.28    & 0.13     \\ 
4    & 2.16 & 1.32    & 0.14   \\\bottomrule\\
\end{tabular}\vspace{-1.35\baselineskip}
\caption{\footnotesize Differences across iterations between transformer and extracted algorithm (squared Frobenius norm divided by size of $Z$) in the unconstrained (U), distributed (D), and compute-limited (C-L) settings. Mean \emph{times $10^4$} across 10 seeds is reported. }\label{table:transformer_vs_iteration}
\end{minipage}
\end{wraptable}

\textbf{One method, three resource regimes.}
Across all three architectural constraints, we find that the trained transformer solves the matrix completion task (Fig.~\ref{fig:alg_emergence}) while implicitly implementing the \emph{same} two-line scaling tranformation when its weights are algebraically interpreted. 
Algorithm~\ref{alg:unified} visualises the result, which we call $\methodname$ ({E}mergent {A}lgorithm for {G}lobal {L}ow-rank {E}stimation). The blue \textsc{Update} box is identical in all scenarios and across all machines, while the outer grey look captures the information fusion in the distributed setting (i.e., if $M > 1$). The matrix $S$ in \textsc{Update} is an orthogonal \emph{sketching matrix} (see below). Table~\ref{table:transformer_vs_iteration} shows that this extracted algorithm reproduces the exact layer-wise activations of the transformer to within $6 \cdot 10^{-4}$ error, demonstrating its fidelity. We now discuss how this algorithm is derived in the three settings, focusing on the noiseless case. However, our findings remain valid under modest data noise ($\sigma^2=0.01$, see \S\ref{appx:emergent_method_empirical_results}). 

\begin{algofigure}[!b]
    \centering
    \includegraphics[width=.95\linewidth]{./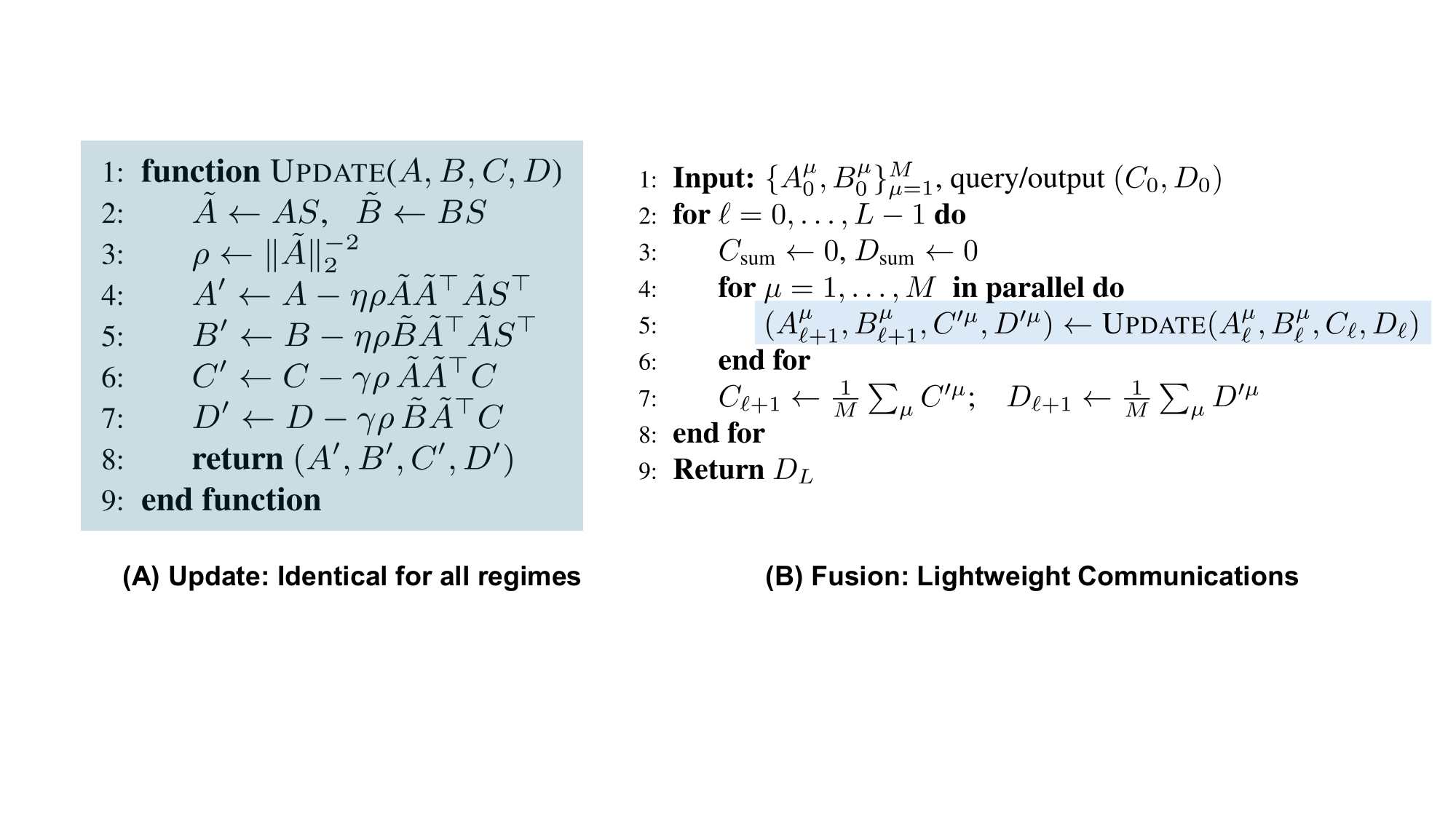}\vspace{-.5\baselineskip}
 \caption{\footnotesize \textbf{Emergent Algorithm for Global Low-rank Estimation ($\methodname$).}
\emph{Left:} the numerical kernel \textsc{Update} runs unchanged on every
machine.  \emph{Right:} the outer loop calls \textsc{Update}, and in the distributed regime ($M{>}1$), it averages the resulting query updates ($C'$) and outputs $(D')$. $M = 1$ in the unconstrained and computation-limited settings. In the former, $S = I_{n},$ while in the latter, $S \in \smash{\mathbb{R}^{(d+d') \times r}}$ is composed of random orthogonal rows. The values $\eta, \gamma$ are global constants (see below for their values).}\vspace{-1.5\baselineskip}
\label{alg:unified}
\end{algofigure}
\begin{figure}[t]
    \centering\vspace{-\baselineskip}
    \includegraphics[width=.5\textwidth]{./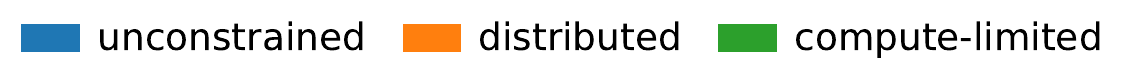}
    
    \begin{subfigure}{.3\textwidth}
    \includegraphics[width=\linewidth]{./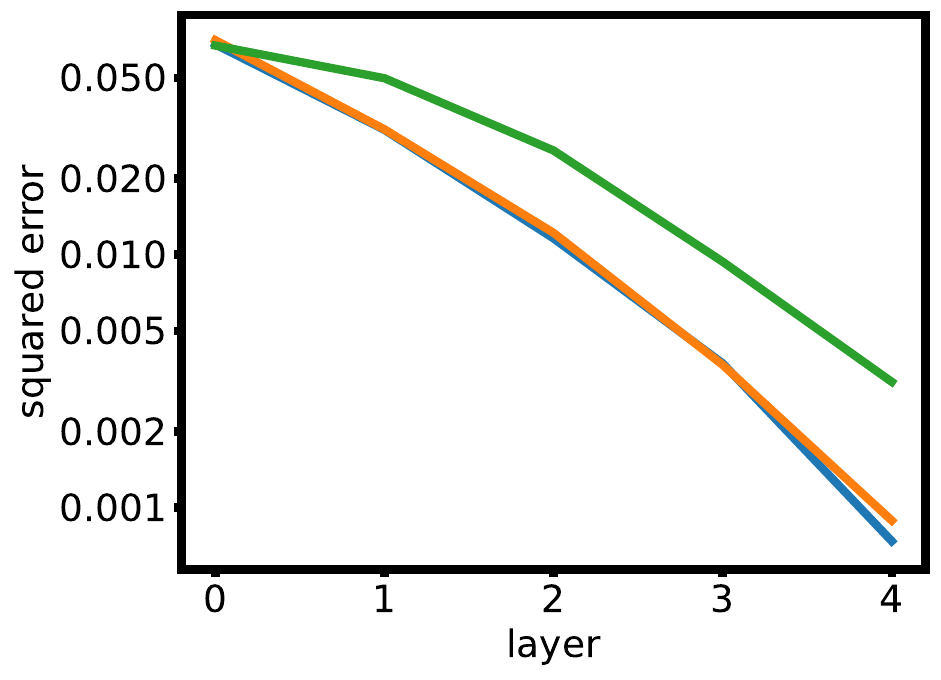}\vspace{-.3\baselineskip}
    \subcaption{Median test performance}
    \end{subfigure}\hfill
    \begin{subfigure}{.3\textwidth}
    \includegraphics[width=\linewidth]{./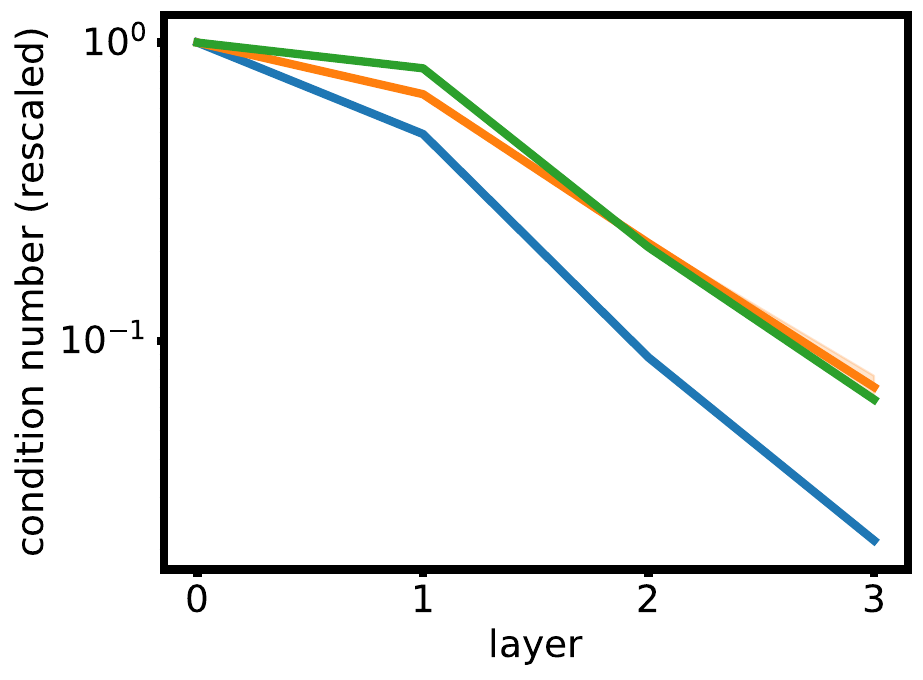}\vspace{-.3\baselineskip}
    \subcaption{Median condition number}
    \end{subfigure}\hfill
    \begin{subfigure}{.3\textwidth}
    \includegraphics[width=\linewidth]{./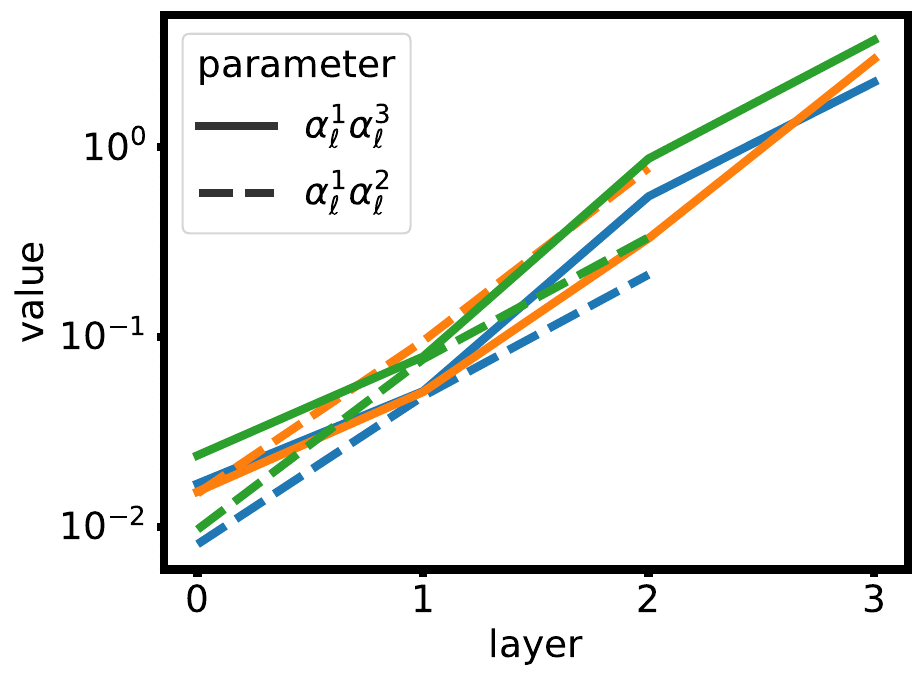}\vspace{-.3\baselineskip}
    \subcaption{Abstracted parameters}
    \end{subfigure}\vspace{-.5\baselineskip}
    \caption{\footnotesize The trained transformer solves matrix completion with a unified algorithm over all three computational settings. The evolution of key quantities throughout the transformer layers illustrate the remarkable similarity between the latent algorithms. Mean across 10 training seeds is reported. See \S\ref{appx:emergent_method_empirical_results} for plots with noisy data.}\vspace{-\baselineskip}
    \label{fig:alg_emergence}
\end{figure}

\begin{figure}[!b]
  \centering
\includegraphics[height = .23\linewidth, width = 0.7\linewidth]{./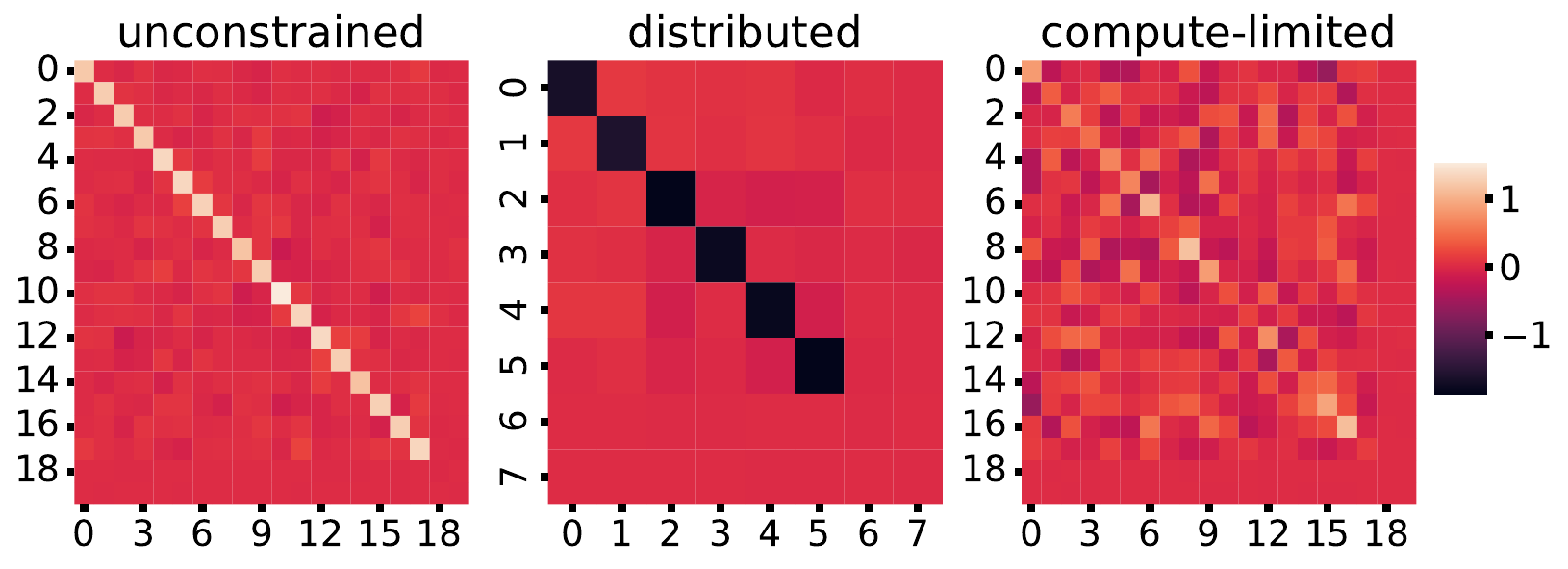}\\
    \includegraphics[height = 0.23\linewidth, width = 0.7\linewidth]{./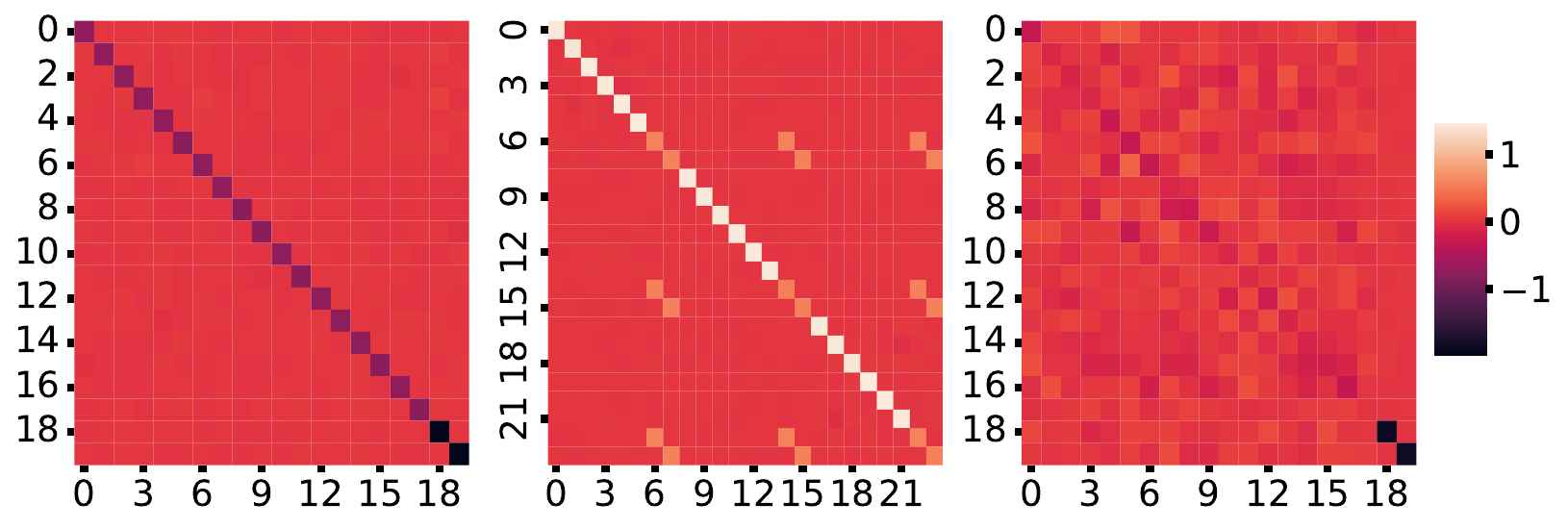}
  \caption{\footnotesize
  Block structure of $W_{QK,\ell}$ (\emph{top}) and $W_{VP,\ell}$ (\emph{bottom})
  learned in the three regimes. Example shown for
  $(n,d,d',n')=(18,18,2,2)$; distributed run uses $M=3$ workers with
  per-worker $n=6$, and all of the $\{W_{VP}^\mu\}_{\mu = 1,2,3}$ are collated together with each of the three block-wise rows corresponding to one head, while omitting all (null) non-$\mu$ blocks in $W_{QK}^\mu$. Off-diagonal blocks appear \emph{only} in $W_{VP,\ell}^\mu$s, and are identical suggesting structure of messaging. Statistics over 10 seeds are in \S\ref{appx:emergent_method_empirical_results}.}\vspace{-\baselineskip}
  \label{fig:QKVPGrid}
\end{figure}

\textbf{Unconstrained setting.}
Because token updates are not restricted, the weight patterns are the
cleanest to interpret here, and will re-appear in the
distributed and compute-limited regimes.
\begin{itemize}[wide, nosep, itemsep = .15\baselineskip]
    \item 
\textit{Emergent weight structure.}
During the extraction procedure we progressively pruned attention heads and
quantised weights while tracking validation loss.
Remarkably, \emph{one head per layer} is already sufficient: pruning from
eight to a single head has little effect on test MSE.  After pruning, the weight products
$\smash{W_{Q,\ell}W_{K,\ell}^{\!\top}}$ and $\smash{W_{V,\ell}W_{P,\ell}^{\!\top}}$ collapse to (almost) diagonal matrices. %
\[
W_{QK, \ell} := W_{Q,\ell}W_{K,\ell}^{\!\top}\;\approx\;
\operatorname{diag}\!\bigl(\alpha_\ell^1 I_{n},\;0_{n'}\bigr),
\qquad
W_{VP,\ell} := W_{V,\ell}W_{P,\ell}^{\!\top}\;\approx\;
\operatorname{diag}\!\bigl(\alpha_\ell^2 I_{n},\;\alpha_\ell^3 I_{n'}\bigr),
\]
where only the three scalars
$\smash{\alpha_\ell^{1,2,3}}$ vary from layer to layer.
Figure \ref{fig:QKVPGrid}\,(left) visualises a typical pair; layer-wise
statistics across 10 seeds are reported in \S\ref{appx:emergent_method_empirical_results}.
This near-block-diagonal structure lets us algebraically reduce the transformer’s forward step to the update shown in Fig.~\ref{alg:unified}.
\item 
\textit{From weights to \textsc{Update}.}
The additive steps in \textsc{Update} are due to the residual connection. Further, the diagonal forms of  $W_{QK,\ell}$ and $W_{VP,\ell}$ imply that (see \S\ref{appx:emergent_method_empirical_results}), the
attention block acts as
\begin{equation}
    \attn_\ell(Z_\ell)=
\begin{bmatrix}
  \alpha_\ell^1\alpha_\ell^2 A_\ell A_\ell^{\!\top}A_\ell &
  \alpha_\ell^1\alpha_\ell^3 A_\ell A_\ell^{\!\top}C_\ell\\[3pt]
  \alpha_\ell^1\alpha_\ell^2 B_\ell A_\ell^{\!\top}A_\ell &
  \alpha_\ell^1\alpha_\ell^3 B_\ell A_\ell^{\!\top}C_\ell
\end{bmatrix}.
\end{equation}
The form of the iterative update in Alg.~\ref{alg:unified} with $M = 1, S = I_{n}$ is then immediate.%

\item {\emph{Rationale for Weights}.} Further, the values $\alpha_\ell^{1,2,3}$ see two persistent structural effects: throughout, $\alpha_\ell^{1}\alpha_\ell^{2} \approx \eta \|A_\ell\|_2^{-2}$, and $\alpha_\ell^{1}\alpha_\ell^{3} \approx \gamma \|A_\ell\|_2^{-2},$ where $\|A_{\ell}\|_2$ is the spectral norm of $A_\ell$ (i.e., largest singular value, tuned to the largest such value typically seen in a training batch), and $\gamma, \eta$ are the \emph{constants} $\eta\approx1$ and $\gamma\approx 1.9$. Thus, the scale of these weights is determined by $\|A_\ell\|^{-2},$ which is used in a fixed way by the method.  Figure~\ref{fig:norm_save} shows that $\alpha_\ell^1 \alpha_\ell^2 \|A_\ell\|^2$ is constant across layers.

\item \emph{Understanding the method}. Note that the transformation $A \mapsto (I - \rho \eta AA^\top)A$ contracts large singular values of $A$ more than small singular values. The net effect of this repeated action on $A_\ell$ is that for large $\ell$, all initially nonzero singular values of $A_\ell$ converge to one another, i.e., the matrix $A$ becomes well-conditioned (see Fig.~\ref{fig:alg_emergence}). This aspect of the method is reminiscent of the Newton-Schulz method for matrix inversion \citep{schulz1933iterative,ben1965iterative,higham2008functions, fu2024transformers}. The overall structure of the iterations can then be seen as a `continuous conditioning update' for $A_\ell$. The iterations for $C_\ell, D_\ell$ are reminiscent of gradient descent, adapted to the varying $A_\ell$. 
\item \textit{Relation to prior work.}
\citet{vonoswald2023transformers} dubbed the same update “GD$^{++}$” in the
scalar case $(d'\!=\!n'\!=\!1)$ and interpreted it as pre-conditioned
gradient descent. In our opinion, there are two deficiencies in this viewing of the method. Firstly, gradient based methods implicitly assume that one is optimising a parameter (i.e., searching for a $W$ such that $B \approx W A$, and imputing $D \approx WC$), whereas the transformer has not been supervised in a manner that reveals the existence of an underlying parameter. In other words, a gradient descent type interpretation is not a behavioural (in the sense of \citet{willems1991paradigms}) description of the recovered method. Secondly, our analysis shows that unlike gradient descent, the recovered method is closer to a
\emph{Newton–Schulz conditioning loop} wrapped around direct prediction. For these reasons, rather than calling this underlying method ``GD$^{++}$'', we
refer to it as the \methodname\ method in the rest of the
paper.  \S\ref{appx:emergent_method_empirical_results} contrasts the two viewpoints in detail.
\end{itemize}

\textbf{Distributed setting.}
Following the unconstrained setting, we train with a single head per machine. 
\begin{itemize}[wide, nosep, itemsep = .15\baselineskip]
\item \emph{Communication Structure} As detailed in \S\ref{sec:method}, for each head, the block structure of the $((n + n')) \times (M(n+n'))$ value-projection matrix $W_{VP,\ell}^\mu$  governs what machine $\mu$ \emph{sends} to other machines. Fig.~\ref{fig:QKVPGrid} (middle) reveals two striking regularities in this matrix.
\begin{itemize}[wide, nosep, itemsep = 0.1\baselineskip, labelindent = 15pt]
  \item \textit{Within-machine blocks.}  
        Every diagonal block equals %
        $\operatorname{diag}(\alpha_\ell^1 I_n,\;\alpha_\ell^2 I_{n'})$—exactly
        the same form as in the unconstrained model, with the \emph{same}
        pair $(\alpha^1_\ell, \alpha^2_{\ell})$ across all machines.
  \item \textit{Across-machine blocks.}  
        Off-diagonal blocks are
        $\operatorname{diag}(0_n,\;\alpha^2_\ell I_{n'})$: i.e.\ only the
        incomplete columns $(C_\ell,D_\ell)$ are transmitted, and with the
        very same factor $\alpha^2_\ell$ as the within machine blocks.
\end{itemize}

\item 
\textit{Resulting algorithm.} Since $W_{QK,\ell}^\mu$ and the local block of $W_{VP,\ell}^{\mu}$ have the same structure as the unconstrained setting, each machine executes the same local iteration as in the \textsc{Update} method. Further, via the off-diagonal blocks in $W_{VP,\ell}^\mu,$ each machine transmits only the $O(n'(d+d'))$ entries of its update to $C_\ell, D_\ell$, captured in the $C',D'$ outputs in Alg.~\ref{alg:unified}. Since attention heads are simply added, every machine averages its received blocks, leading to 
\(
\smash{C_{\ell+1}=M^{-1}\!\sum_\mu C'^\mu},\;
\smash{D_{\ell+1}=M^{-1}\!\sum_\mu D'^\mu}.
\)
The shared columns are thus identical across all $\mu$ at all $\ell$, recovering Alg.~\ref{alg:unified} with $S=I_n$ and $M>1$.
\item 
\textit{Communication cost.}
Under the star topology in Alg.~\ref{alg:unified} each machine communicates $O\bigl((d+d')\,n'\bigr)$ floats per round; for scalar prediction
$(d'=n'=1)$ this is the advertised $2d$-float message.
Notably, the transformer recovers this sparse pattern \emph{without} any explicit communication constraint.  Exploring how stronger topological or bandwidth limits shape the learned algorithm is an open problem.%
\end{itemize}

\textbf{Computation-limited setting.}
Here the query, key, value and projection matrices are rank-constrained: $W_{Q,\ell},W_{K,\ell} \in \mathbb{R}^{(d+d') \times r}$ and $W_{V,\ell},W_{P,\ell}\! \in\!\R^{(d+d')\times(r+n')}$ with $r\ll n$. Similar low-rank constraints commonly arise when attention head dimensions are smaller than the transformer's overall embedding size.
\begin{itemize}[wide, nosep, itemsep = .15\baselineskip]
\item \textit{Emergent weight structure.}
Again, only one head per layer is needed. The $(n+n') \times (n+n')$ matrices $W_{QK,\ell}$ and $W_{VP,\ell}$ share three universal properties (Fig.~\ref{fig:QKVPGrid})%
\begin{itemize}[wide, nosep, itemsep = 0.1\baselineskip, labelindent = 15pt]
  \item \textit{Block-diagonal form.}  
        Both are block-diagonal; the cross blocks
        $(n\times n')$ and $(n'\times n)$ vanish as in the unconstrained
        regime.  The $(n'\times n')$ block of $W_{VP,\ell}$ is a
        scaled identity, exactly as before.  
  \item \textit{Rank-$r$ top left block.}  
        The leading $n\times n$ block of each matrix has numerical rank
        $r(<n)$ and the two blocks are similar up to a sign when rescaled to spectral norm 1 (relative difference: 0.28).
  \item \textit{Random sketch spectrum.}  
        The eigenvalues and entry distribution of the leading block match
        those of $SS^\top$ where $S$ is a random $n\times r$ orthogonal matrix
        (Fig.\;\ref{fig:sketch_weights_histogram}).
\end{itemize}
\item 
\textit{Sketching interpretation.}
The transformer therefore materializes an \emph{orthogonal row sketch} $S\in\R^{n\times r}$ within the `top left' blocks of its $W_{QK}$ and $W_{VP}$ matrices. This sketch acts upon the columns within the $(A_\ell, B_\ell)$ blocks of the state $Z_\ell$, and the output of the attention block is structured as
\begin{equation}
    \attn_\ell(Z)=
\begin{bmatrix}
 \alpha^1 (AS)(AS)^{\!\top}(AS)S^\top & \ \alpha^2 (AS)(AS)^\top C \\
 \alpha^1 (BS)(AS)^{\!\top}(AS)S^\top & \alpha^2 (BS) (AS)^\top C
\end{bmatrix},
\end{equation}
where $\alpha^1, \alpha^2$ are constants depending on $\|A\|^{-2}$. In other words, the update first computes the sketches $AS, BS \in \smash{\mathbb{R}^{d \times r}}$ of the `complete' columns, and then proceeds with the update as in the unconstrained case, lifting them back to $n \times d$ by the terminal $S^\top$. Setting $M=1$ and $S=S_\ell$ in Algorithm \ref{alg:unified} recovers the exact layer dynamics.
\item \textit{What the sketch buys.}
The contraction still needs
$\rho_\ell=\|\tilde A_\ell\|_2^{-2}/3$ but now
$\tilde A_\ell=AS$ is only $r$ columns wide, reducing the per-layer flop count from $O(n^2d)$ to $O(nrd)$. 
Although this sketched update causes the iteration complexity of the method to increase, the overall hope is that the lower per-iteration cost may translate into a better total runtime. This aligns with the original motivation in \cite{vaswani2017attention}, where low-rank constraints on $W_{Q,\ell}$, $W_{K,\ell}$, $W_{V,\ell}$, and $W_{P,\ell}$ were explicitly introduced for computational efficiency.
\end{itemize}

\begin{figure}[!t]
    \centering
    \begin{minipage}[b]{0.58\textwidth}
        \centering
    \includegraphics[width=.61\textwidth]{./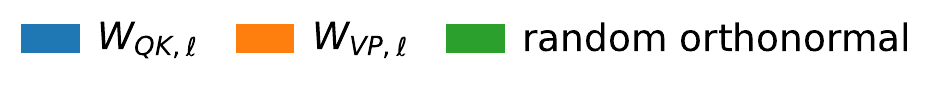}
    
    \begin{subfigure}{.48\textwidth}
        \includegraphics[width=\textwidth]{./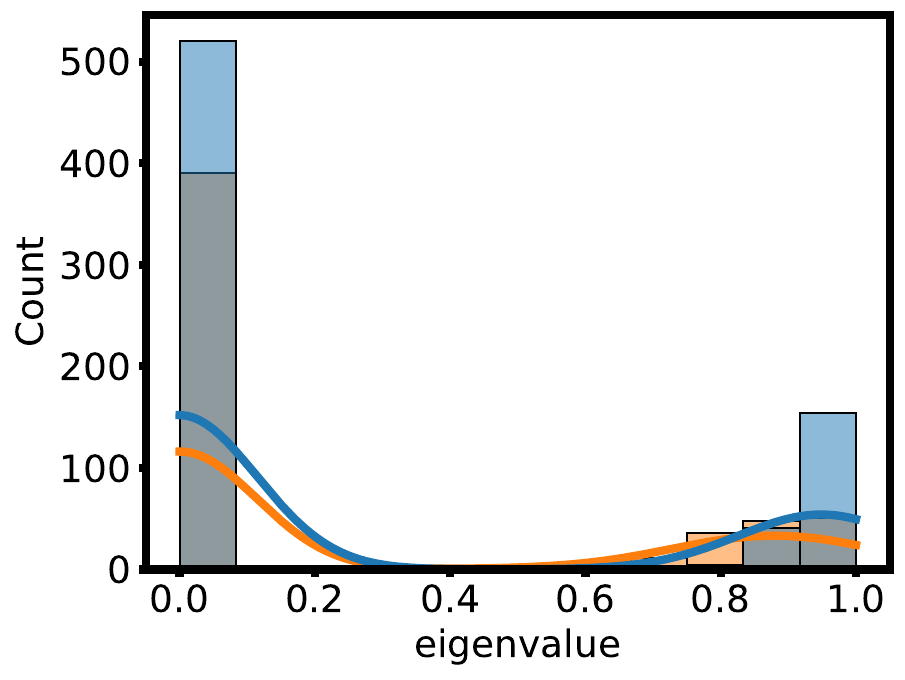}
    \end{subfigure}\hfill
    \begin{subfigure}{.48\textwidth}
        \includegraphics[width=\textwidth]{./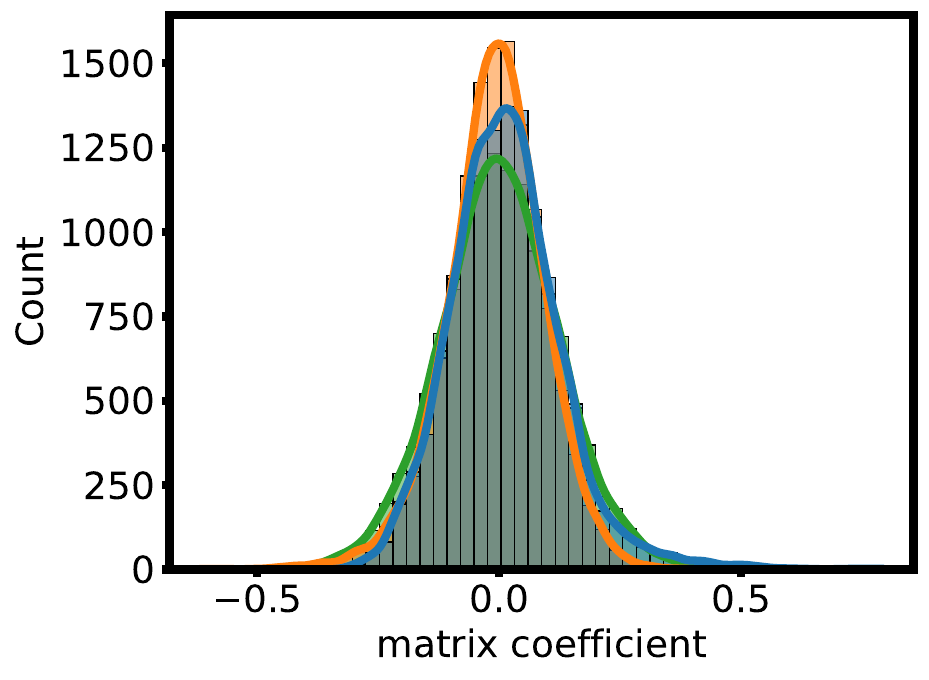}
    \end{subfigure}
    \caption{\footnotesize The computation-limited transformer implements pseudo-random sketching. The figure matches the candidate sketch matrices across all layers against common sketch characteristics (randomness, clustered eigenvalues). Left: Distribution of eigenvalues. Right: Distribution of coefficients.}
    \label{fig:sketch_weights_histogram}
    \end{minipage}
    \hfill
    \begin{minipage}[b]{0.4\textwidth}
        \centering
    \includegraphics[width=.7\linewidth]{./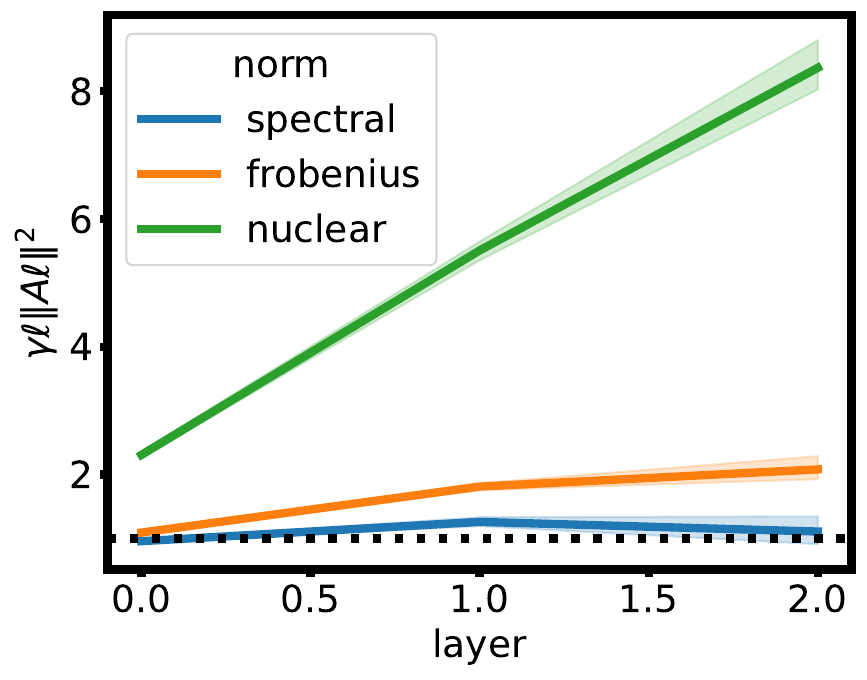}
    \caption{\footnotesize The transformer learns to normalize the batch-maximum spectral norm of the iterate $A_\ell$ observed during training. The plot reports $\alpha_\ell\beta_\ell\max_{b\in[B]}\|A_\ell^{(b)}\|^2$, where the maximum is taken over each batch. Results are averaged over 10 training seeds.}
    \label{fig:norm_save}
    \end{minipage}\vspace{-1.5\baselineskip}
\end{figure}

\newcommand{\cond}{\kappa}
\section{Evaluation of \methodname}
\label{sec:evaluation}
Having identified the $\methodname$ update underlying the transformer weights, we move on to studying how well the extracted algorithm performs in the three regimes consider, both theoretically and empirically. All ablations are deferred to \S\ref{appx:eval_details}, while theoretical proofs appear in \S\ref{appx:theory}. %
Unless stated otherwise, all numerical evaluations in this section use data sizes $n=d=240$ and $n'=d'=2$, condition number $\kappa(A)=10^2$, $\operatorname{rank}(A)=240$ and average performance across 50 runs is reported.

\subsection{Unconstrained (centralized) setting}\label{subsec:unconstrained}

\textbf{Second-order convergence guarantee.} We begin by discussing the theoretical convergence properties of $\methodname$ in the centralised case. Let $\cond(M)$ denote the condition number of a matrix $M$. The main result is

\begin{theorem}
\label{thm:central}
For any $X$, let $\hat{D}_*$ be the Nystr\"{o}m estimate for $D.$ If $\eta = 1/3, \gamma = 1,$ then under $\methodname$ with $S = I_n, M = 1,$ for any $\varepsilon > 0,$ there exists
\[ 
 L = O\!\bigl(\log\kappa
    + \log\log(\varepsilon^{-1} \sqrt{d'}\|W_*\|_F\|C\|_F)\bigr)
\]
such that $\forall \ell \ge L, \|D_L-\hat D_\star\|_F \le \varepsilon$, where $W_* = BA(AA^\top)^\dagger$ is the Nystr\"{o}m parameter (\S\ref{appx:nystrom_approximation}). 
\end{theorem}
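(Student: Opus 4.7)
The plan is to diagonalise the entire iteration in the SVD basis of $A$ and reduce the convergence question to a scalar recursion on the normalised singular values. Let $A = U \Sigma V^\top$ be a compact SVD. Because the $A$- and $B$-updates both amount to right-multiplication by the \emph{same} matrix $M_\ell := I + \alpha_\ell^1\alpha_\ell^2 A_\ell^\top A_\ell$, the singular-vector frames are preserved ($A_\ell = U \Sigma_\ell V^\top$ for every $\ell$) and the Nystr\"om relation $B_\ell = W_* A_\ell$ holds at every layer. Combined with $W_*(I - UU^\top) = 0$ (the Nystr\"om parameter kills the orthogonal complement of $\mathrm{col}(A)$), this rewrites the target as $\hat D_* = W_* C = \tilde B \bar C_0$, where $\tilde B := W_* U$ and $\bar C_0 := U^\top C$.

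Next I obtain a closed form for the residual $D_\ell - \hat D_*$. In the $U$-basis the $C$-update reduces to the diagonal recursion $\bar C_{k+1} = (I + \xi_k \Sigma_k^2)\bar C_k$ with $\xi_k := \alpha_k^1\alpha_k^3$, and the $D$-increment reads $\xi_k \tilde B \Sigma_k^2 \bar C_k$. The identity $\xi_k\Sigma_k^2 = (I + \xi_k\Sigma_k^2) - I$ telescopes the sum into
\[
D_\ell \;=\; \tilde B \Big(\prod_{j<\ell}(I + \xi_j \Sigma_j^2) - I\Big)\bar C_0.
\]
Carrying the signs through with $\gamma = 1$, each coordinate-wise factor reduces to $1 - t_{i,j}^2$, where $t_{i,j} := \sigma_{i,j}/\sigma_{\max,j} \in (0,1]$. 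Routine matrix-norm bounds then give $\|D_\ell - \hat D_*\|_F \le \|\tilde B\|_F \|\bar C_0\|_F \max_i \prod_{j<\ell}|1 - t_{i,j}^2|$, with $\|\tilde B\|_F \le \|W_*\|_F$ and $\|\bar C_0\|_F \le \|C\|_F$ since $U$ has orthonormal columns.

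The scalar dynamics is where the second-order behaviour enters. With $\eta = 1/3$, each $t_{i,j}$ evolves by the Newton--Schulz map $t_{j+1} = f(t_j) := (3t_j - t_j^3)/2$, whose unique positive fixed point is $t = 1$, with $f'(1) = 0$, and which satisfies the key algebraic identity $1 - f(t)^2 = (1-t^2)^2(4-t^2)/4$. Writing $v_j := 1 - t_j^2$, this yields the recursion $v_{j+1} = v_j^2(3 + v_j)/4 \le v_j^2$, i.e.\ quadratic decay once in the basin of attraction. For small $t$ the map acts as $t \mapsto (3/2)t$, so starting from $t_{i,0} \ge 1/\kappa$ it takes at most $L_1 = O(\log\kappa)$ layers before $v_{i,L_1} \le 1/2$ uniformly in $i$. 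Thereafter $v_{i,L_1 + k} \le 2^{-2^k}$, and $L_2 = O(\log\log(\sqrt{d'}\|W_*\|_F\|C\|_F/\varepsilon))$ additional layers suffice to drive $\max_i v_{i,L-1}$ below $\varepsilon/(\sqrt{d'}\|W_*\|_F\|C\|_F)$. Since $\prod_{j<\ell} v_{i,j} \le v_{i,\ell-1}$, a routine Frobenius/operator conversion then gives $\|D_L - \hat D_*\|_F \le \varepsilon$ for any $L \ge L_1 + L_2 + 1$, matching the theorem.

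The main technical subtlety is the scalar analysis: establishing that $\eta = 1/3$ is precisely the step-size producing both $f'(1) = 0$ and the clean factorisation $1 - f(t)^2 = (1-t^2)^2(4-t^2)/4$. This is what lifts the analysis from the mere linear contraction of $\|A_\ell\|_2$ itself to quadratic convergence of the full residual $D_\ell - \hat D_*$; without it one would recover only a $\log(\kappa/\varepsilon)$ rate. With the factorisation in hand, gluing the geometric-growth phase (at rate $3/2$) to the quadratic-convergence phase and propagating the bound through the telescoping product is standard bookkeeping.
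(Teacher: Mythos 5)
Your approach is essentially the same as the paper's, just carried out in the SVD basis of $A$ instead of working abstractly with the Gram matrix $\mathcal{E}_\ell = A_\ell A_\ell^\top$. The paper tracks $\theta_\ell = \kappa(\mathcal{E}_\ell) - 1$ and shows a geometric-then-quadratic decay recursion; you track $v_j = 1 - t_j^2$ per normalised singular-value ratio and exploit the factorisation $1 - f(t)^2 = (1-t^2)^2(4-t^2)/4$. These are interchangeable (for $t_{\min}$ near $1$, $\theta \approx v_{\min}$), and your closed-form factorisation is arguably a cleaner way to see where the Newton--Schulz quadratic rate comes from than the paper's long-multiplication bound in Lemma~\ref{lemma:decay_schedule_main}. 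The telescoping $D_\ell = W_*(I - N_\ell)C_0$ (Lemma~\ref{lemma:central_prediction_telescope} in the paper) is likewise the same identity you derive diagonally. Two minor points deserve attention.

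First, a sign slip in the telescoping. You give the $C$-update and $D$-increment the \emph{same} coefficient $\xi_k$, which is what the unrolled attention Eq.~(2) literally says. With $\xi_k < 0$ (needed so $C$ contracts), your closed form $D_\ell = \tilde B(\prod_{j<\ell}(I+\xi_j\Sigma_j^2) - I)\bar C_0$ converges to $-\tilde B\bar C_0 = -\hat D_*$, not $+\hat D_*$. The paper's own appendix iterations (\ref{eqn:appx_central_iteration}) use \emph{opposite} signs on those two updates, which is what makes $D_\ell \to W_* C_0$; you need to match that convention (your error bound implicitly assumes it, so the quantitative conclusion is unaffected, but the displayed formula is off by a sign and would contradict the theorem as written).

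Second, the claim that ``the Nystr\"om relation $B_\ell = W_* A_\ell$ holds at every layer'' is stronger than true in general: if $B$ has a component outside $\operatorname{row}(A)$ (the statement is for arbitrary $X$, not just the noiseless case), then $B_\ell = W_* A_\ell + R$ with $RA_\ell^\top = 0$. What your telescoping actually uses is only $B_\ell A_\ell^\top = W_* A_\ell A_\ell^\top$, i.e., the paper's invariant $\mathcal{V}_\ell = W_*\mathcal{E}_\ell$, which does hold unconditionally. Stating it at that level makes the argument correct for arbitrary $X$ without extra assumptions.

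With those two adjustments your proof is sound and gives the claimed $L = O(\log\kappa + \log\log(\varepsilon^{-1}\sqrt{d'}\|W_*\|_F\|C\|_F))$.
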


\emph{Mechanism.} We show Thm.~\ref{thm:central} in \S\ref{appx:theory} by arguing that each iteration shrinks $\kappa_\ell := \kappa(A_\ell)$ by at least a constant factor, and that once $\kappa_\ell \le 2,$ then $\kappa_\ell - 1$ decyas supergeometrically. Error is controlled by developing a telescoping series with terms decaying with $(\kappa_\ell -1)$. The $\log\log(\varepsilon^{-1})$ dependence in $L$, i.e., the quadratic rate, is related to the quadratic decay in $\kappa_\ell - 1,$ which in turn arises since the $\methodname$ iterations for $A_\ell$ bear a strong relationship to the classical Newton-Schulz method \citep[][Ch. 5,7]{higham2008functions}.
We note that a simple $\|D_{\ell}-D_{\ell-1}\|_F\!<\!\tau$ stopping rule suffices in practice.

\textit{Positioning vs.~classics.}
Direct inversion via QR-decomposition or SVD costs $\smash{O(\min n^2d,d^2n)}$ once, independent of $\kappa$.
Krylov methods such as the Conjugate Gradient method \citep[CG,][]{hestenes1952methods} and gradient descent (GD) need
$\kappa\log(\varepsilon^{-1})$ and $\kappa^2\log(\varepsilon^{-1})$ iterations and $O(nd)$
time per-iteration to compute matrix–vector products. 
By contrast, {\methodname} achieves quadratic
convergence with iteration counts scaling only with $\log(\kappa)$, albeit with $\min(n^2 d, d^2n)$ cost per iteration.

\textbf{Empirical protocol.} To evaluate the empirical performance of the recovered algorithm, we benchmark {\methodname} against a QR-based solver (\texttt{torch.linalg.lstsq}), the Conjugate Gradient method and gradient descent (GD) on synthetic
$A\!\in\!\mathbb{R}^{n\times n}$ (see Appx.~\ref{app:central_details} for details). In particular, we study
\begin{enumerate}[wide, nosep,itemsep = .1\baselineskip]
\item \textbf{Error vs.\ iteration (log–log).}
      $d\!=\!n\!=\!240$, $\kappa\!=\!10^2$;
      confirms slope $\approx2$ predicted by Theorem~\ref{thm:central}.
\item \textbf{$\kappa$-sweep.}
      Time to reach $\varepsilon\!=\!10^{-20}$
      for $\kappa\!\in\!\{10^2,\dots,10^5\}$;
      visualises the $\log\kappa$ vs.\ $\sqrt\kappa$ gap.
\item \textbf{Rank-deficient check.}
      Time to reach $\epsilon=10^{-20}$ highlights robustness to rank-deficiency.\vspace{-.1\baselineskip}
\end{enumerate}
Figure \ref{fig:exps_central} present (1)–(3);
rank-deficient and extended size sweeps are in \S\ref{appx:eval_details}. %

\textbf{Take-away.}
The transformer-extracted update converges \emph{quadratically} with
only a \(\log\kappa\) penalty--- \(\sim 100\times\) fewer
iterations than CG at $\kappa\!=\!10^4$—while retaining
$O(\min( nd^3,dn^2))$‐per-iteration cost. 
This positions {\methodname} in a previously unexplored region of the speed–accuracy trade-off: an iterative solver that matches QR-based performance up to a $\log$ factor.

\subsection{Distributed setting}
\label{subsec:distributed}

\textbf{Diversity drives the rate.}
Besides the local condition numbers  
$\kappa^\mu\!=\!\cond(A^\mu)$, convergence depends on how
\emph{distinct} the column spaces on different workers are. We capture this via:

\begin{definition}\label{def:diversity} (\textbf{Diversity index}) Normalise $\|A^\mu\|_2=1$ and let
$P^\mu \in \mathbb{R}^{d\times d}$ be a projection onto the column space of $A^\mu$. The diversity index of  $\{A^\mu\}_{\mu=1}^M$ is defined as 
\[
  \alpha\;:=\;\min_{\|v\|=1} M^{-1}\sum_{\mu \in [1:M]} \|P^\mu v\|_2^{\,2}
  \quad\in(0,1].
\]
\end{definition}
A large overlap in subspaces captured by the distinct machines $A^\mu$ causes a reduction in $\alpha,$ and orthogonal subspaces induce $\alpha = 1$. This diversity index $\alpha$ captures the convergence scale of distributed $\methodname$ via the following result shown in \S\ref{appx:theory}.

\begin{theorem}\label{thm:distributed}
Let $\kappa_{\max}=\max_\mu\kappa^\mu$. In the noiseless case low rank case, i.e., when $\exists W_*$ such that $\begin{bmatrix} B & D\end{bmatrix} = W_* \begin{bmatrix} A & C\end{bmatrix},$ $\methodname$ with $\eta = 1/3, \gamma = 1$ ensures that for all $\varepsilon > 0$, there exists
\[ L = O(\log(\kappa_{\max} + \alpha^{-1} \log(\sqrt{d'} \|C\|_F\|W_*\|_F/\varepsilon) \textit{ such that } \ell \ge L \implies \|D_L - D\|_F \le \varepsilon. \]\end{theorem}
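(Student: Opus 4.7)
The plan is to split the bound into two phases matching the two additive terms in $L$: Phase 1 conditions each local $A^\mu_\ell$ independently in $O(\log \kappa_{\max})$ iterations, and Phase 2 linearly contracts the shared-column residual at a rate governed by the diversity index $\alpha$. The bridge between them is a noiseless-regime invariant that reduces the analysis to bounding a single quantity $\|C_L\|_F$.

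\textbf{Phase 1 (conditioning).} This is essentially a corollary of Theorem~\ref{thm:central}. Because the distributed architecture zeros out all non-$\mu$ blocks in $W_{QK,\ell}^\mu$, each $A^\mu_\ell$ evolves independently of other machines and obeys the same recursion as the centralized $A_\ell$ in Theorem~\ref{thm:central}. Replaying that proof per machine, within $L_1 = O(\log \kappa_{\max} + \log\log \varepsilon_1^{-1})$ iterations we obtain $\|\widehat A^\mu_\ell (\widehat A^\mu_\ell)^\top - P^\mu\|_2 \le \varepsilon_1$ for every $\mu$, where $\widehat A^\mu_\ell := A^\mu_\ell/\|A^\mu_\ell\|_2$ and $\varepsilon_1$ is a tolerance to be tuned below. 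The noiseless assumption $B^\mu = W_* A^\mu$, combined with the fact that the EAGLE update applies the \emph{same} right-multiplication to $A^\mu_\ell$ and $B^\mu_\ell$, preserves $B^\mu_\ell = W_* A^\mu_\ell$ throughout. Since each machine's local update to $D$ is then $W_*$ times its update to $C$, averaging across machines and telescoping from $(C_0, D_0) = (C, 0)$ yields the key invariant
\[ D_\ell = W_*(C - C_\ell) \qquad \text{for all } \ell, \]
so that $\|D_L - W_* C\|_F = \|W_* C_L\|_F \le \sqrt{d'}\|W_*\|_F\|C_L\|_F$, reducing the problem to bounding $\|C_L\|_F$.

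\textbf{Phase 2 (averaged contraction).} After Phase 1, each machine's local $C$-update acts as $C'^\mu = (I - \gamma P^\mu)C_\ell + O(\varepsilon_1 \|C_\ell\|_F)$, and averaging yields
\[ C_{\ell+1} = (I - \gamma \bar P) C_\ell + O(\varepsilon_1 \|C_\ell\|_F), \qquad \bar P := M^{-1}\sum_{\mu=1}^{M} P^\mu. \]
By Definition~\ref{def:diversity}, $\lambda_{\min}(\bar P) = \alpha$, while $\lambda_{\max}(\bar P) \le 1$ since each $P^\mu$ is a projector; with $\gamma = 1$ we get $\|I - \bar P\|_2 \le 1 - \alpha$, producing geometric contraction of $\|C_\ell\|_F$ at rate $1 - \alpha$ modulo the perturbation. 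Choosing $\varepsilon_1 = \Theta(\alpha\varepsilon/(\sqrt{d'}\|W_*\|_F\|C\|_F))$—which costs only $O(\log\log\varepsilon_1^{-1})$ additional iterations in Phase 1, absorbed by the $\alpha^{-1}\log$ Phase 2 term—and running $L_2 = O(\alpha^{-1}\log(\sqrt{d'}\|W_*\|_F\|C\|_F/\varepsilon))$ iterations delivers $\|C_L\|_F \le \varepsilon/(\sqrt{d'}\|W_*\|_F)$, hence $\|D_L - W_* C\|_F \le \varepsilon$.

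\textbf{Main obstacle.} The most delicate step is verifying that the per-machine conditioning residuals $\varepsilon_1$ aggregate benignly under averaging and do not corrupt the $1-\alpha$ contraction over the $\alpha^{-1}\log(1/\varepsilon)$ iterations of Phase 2. A standard perturbed-fixed-point argument (steady-state error of order $\varepsilon_1/\alpha$) handles this, and it crucially leverages the super-geometric rate of Phase 1 from Theorem~\ref{thm:central} to make arbitrarily small $\varepsilon_1$ cheap to reach. A secondary subtlety is that Phase 2 needs $\|\widehat A^\mu_\ell (\widehat A^\mu_\ell)^\top - P^\mu\|_2$ to remain small \emph{throughout} the $L_2$ iterations; since the local conditioning dynamics continue to be contractive beyond $L_1$, this requirement holds automatically.
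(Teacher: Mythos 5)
Your proof has the same high-level skeleton as the paper's: a two-phase analysis (local conditioning, then contraction of the shared columns) hinged on the telescoping invariant $D_\ell = W_*(C_0 - C_\ell)$, which is exactly the paper's Lemma~\ref{lemma:distirbuted_telescope} restated. Phase~1 is a per-machine replay of Theorem~\ref{thm:central} in both cases.

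Where you diverge is Phase~2. The paper defines the \emph{exact} averaged energy matrix $\overline{\mathcal E}_\ell = M^{-1}\sum_\mu A^\mu_\ell A^{\mu,\top}_\ell$, proves in Lemma~\ref{lemma:distributed_global_conditioning} that its condition number is bounded by $(1+\varepsilon)/\alpha$ once each local $\kappa^\mu_\ell \le 1+\varepsilon$, and then bounds $\|N_\ell\|_2 = \bigl\|\prod_{l<\ell}(I-\gamma_l\overline{\mathcal E}_l)\bigr\|_2 \le \exp\!\bigl(-\tfrac{\alpha}{2}(\ell - L(1))\bigr)$ directly — no perturbation term is ever introduced, and the per-machine tolerance is fixed at $\varepsilon = 1$ independent of the final target accuracy. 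You instead pass to the limiting averaged projector $\bar P$ and track an $O(\varepsilon_1\|C_\ell\|_F)$ perturbation that must stay controlled over all of Phase~2, then tune $\varepsilon_1$ down to $\Theta(\alpha\varepsilon/\ldots)$. This is correct but wasteful: the perturbation is multiplicative in $\|C_\ell\|_F$, so $\varepsilon_1 \ll \alpha$ (independent of the target $\varepsilon$) already suffices for a $1-\alpha/2$ contraction rate; your ``steady-state error of order $\varepsilon_1/\alpha$'' phrasing suggests you are treating it as additive, which is not the right model. The paper's route avoids having to reason about the fixed point at all, at the cost of one extra lemma relating $\lmin(\overline{\mathcal E}_\ell)$ to $\alpha$ via a Rayleigh-quotient computation.

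Two points you should make explicit. First, both your use of $\bar P = M^{-1}\sum_\mu P^\mu$ and the paper's analysis of $\overline{\mathcal E}_\ell$ require that the per-machine scales $\|A^\mu_\ell\|_2$ coincide across $\mu$, so that the machine averages are genuinely uniform rather than implicitly weighted; the paper enforces this by assuming $\|A^\mu_0\|_2 = 1$ for all $\mu$, after which $\|A^\mu_\ell\|_2 = (2/3)^\ell$ identically. Your per-machine normalization $\widehat A^\mu_\ell$ hides this assumption rather than stating it. Second, you assert $\lambda_{\min}(\bar P) = \alpha > 0$ without comment; in general $\overline{\mathcal E}_0$ may be rank-deficient, and the paper handles this by restricting all norms to the column span of $A$ (which contains $C$ in the noiseless case by hypothesis). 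Your argument needs the same caveat to make the contraction rate $1-\alpha$ nontrivial.
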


\emph{Mechanism.} After $\log\kappa_{\max}$ iterations, every local $A_\ell^\mu$ has
near-unit singular spectrum. Subsequent progress is controlled by the condition number of the \emph{average energy matrix} $\smash{\bar E_\ell= M^{-1}\sum_\mu A_\ell^\mu A_\ell^{\mu\top}}$, which tends to $\alpha^{-1}$. Second-order convergence re-emerges when $\alpha=1$. Let us note that $\alpha^{-1}$ is always smaller than $\kappa(\bar{E}_0)$ and, depending on how the data is distributed across machines, may be much smaller than the same.

\textbf{Baselines.} Our main interest is in comparing with distributed baselines that operate with $O(d)$ units of communication per round---any more, and we may simply share the data across all machines into one central server. For this reason, QR decompositions and the Conjugate Gradient method are unavailable, which leaves gradient descent (GD) as the main competitor. Both GD and $\methodname$ have the identical communication costs, transmitting $O( (d+d')n')$ floats per iteration. However, the iteration complexity of GD scales with $\kappa(\bar{E}_0),$ which is always $> \alpha^{-1},$ and may be much larger. As a result, $\methodname$ has a strong advantage in practical strongly communication-limited scenarios.

\begin{wrapfigure}[13]{r}{0.36\linewidth}   %
  \centering
  \includegraphics[width=\linewidth]{./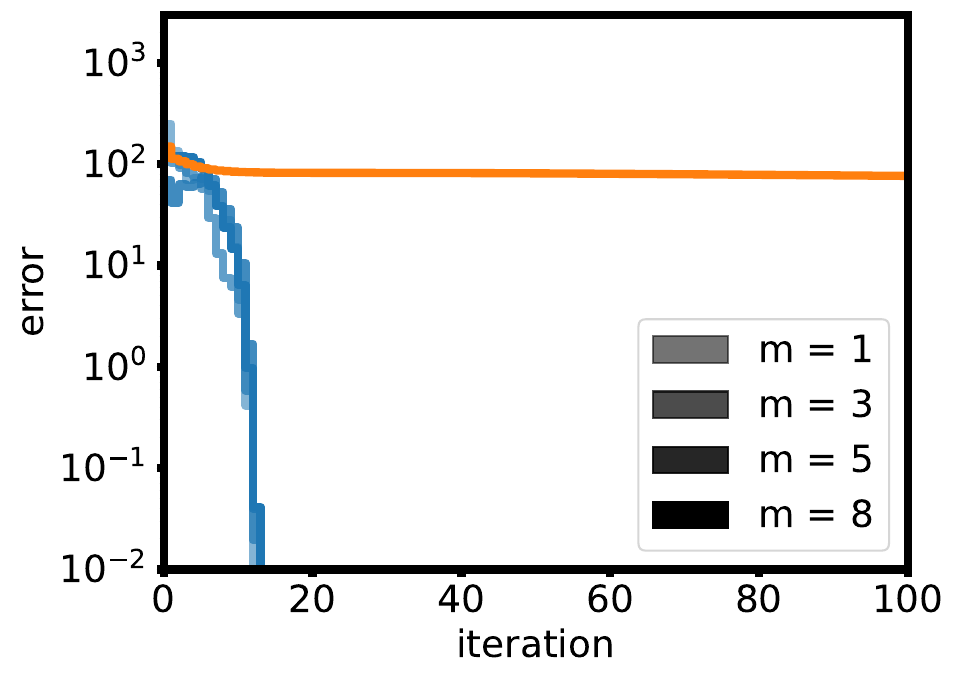}
  \vspace{-1.5\baselineskip}
  \caption{\footnotesize Iteration complexity of distributed \methodname\ is
           independent of the number of workers~$M$.}
  \label{fig:Msweep}
  \vspace{-\baselineskip} %
\end{wrapfigure}

\textbf{Empirical protocol.}
We vary the two rate-determining parameters, $M, \alpha,$ and report error against iteration in Figs.~\ref{fig:Msweep},\ref{fig:alpha-bar}
\begin{itemize}[wide, nosep, itemsep = 0.1\baselineskip, labelindent = 10pt]
    \item \emph{$M$-sweep.} $M\!=\!1,3,5,8$ workers on fixed total data
      ($d=n=1000,\;\kappa_{\max}=10^3,\;\alpha\approx1$).
\item \textbf{$\alpha$-sweep.}  Four synthetic data sets with fixed $\kappa_{\max}$ and $\alpha \in  \{1,0.9,0.36,0.004\}.$
\end{itemize}

\textbf{Take-away.}
\methodname\, reaches $\varepsilon=10^{-2}$ in $\le15$ iterations for every
$M\!\in\!\{1,3,5,8\}$ when the worker subspaces are nearly orthogonal
($\alpha \approx 1$), and its iteration count grows exactly linearly with
$\alpha^{-1}$ as overlap increases (Fig.~\ref{fig:alpha-bar}). Further, for controlled $\alpha$, the resulting iteration complexity is independent of the number of workers $M$ (Fig.~\ref{fig:Msweep}). This confirms our theoretical result. In contrast first-order GD require 10–100$\times$ more rounds.
Noise experiments and a “best-of-both’’ hybrid
(\methodname\ until $A$ stabilizes, then GD) are detailed in \S\ref{appx:eval_details}.

\subsection{Computation-limited sketching}
\label{subsec:sketch}

We now turn to iterations driven by the sketched columns
$\tilde A_\ell := S_\ell^\top A,\;\tilde B_\ell := S_\ell^\top B$ via the i.i.d. orthogonal sketches 
$S_\ell\in\R^{n\times r}$ ($r\!\ll\! n$).

\emph{Mechanism of convergence.} The isotropicity of the sketch ensures that the singular-spectrum of $A$ is conditioned in roughly the same as the centralized setting (\S\ref{subsec:unconstrained}), up to a slowdown of a $\approx r/n$ factor. As in this previous setting, once $\kappa_\ell \le 2,$ the same quadratic Newton-Schulz phase takes over, leaving $n/r \log(\kappa)$ as the main convergence scale with otherwise quadratic convergence.

\textbf{Baselines.}
The standard competitor is Stochastic Gradient Descent (SGD): GD run on an $r$-column sketch. It shares the $n/r$ spectral slowdown but
keeps its native \(\kappa^2\) dependence, so we expect a multiplicative
\(\kappa^2/\log\kappa\) advantage in total iterations for
{\methodname} on ill-conditioned data.

\textbf{Empirical protocol.}
We vary the rank, the single rate-controlling parameter $r$:

\begin{enumerate}[wide, nosep]
\item \textbf{$r$-rank sweep.}  We vary the rank $r\!\in\!\{n/8,n/4,n/2,n\}$
      on a $242{\times}242$ matrix with $\kappa=10^{2}$.
\item \textbf{Per-iteration cost.}
      The per-iteration wall-clock time decreases with sketch size: 21, 16, 7 and 5 milliseconds for $s=$ 240, 120, 60 and 30, respectively. 
\end{enumerate}

Noise robustness ($\sigma^2=0.1$) and step-size sensitivity curves are in
\S\ref{appx:eval_details}.

\textbf{Take-away.}
Iteration count grows linearly with $n/r$, while time per iter falls up to $7{\times}$.
Against RandSketch+CG the method reaches
$\varepsilon{=}10^{-6}$ in roughly one third the wall-clock for
$\kappa=10^{4}$, demonstrating that sketching \emph{plus} the
second-order update preserves the speed advantage of the
unconstrained regime even under tight compute budgets.

\section{Conclusions}

We view transformers as fixed data-to-data transforms whose forward pass exposes emergent algorithms. We fix one low-rank matrix-completion task and explore various regimes. Our proposed rule, \methodname\, is a transformer-induced method that emerges as a unifying algorithm in centralized, distributed and sketching regimes,
achieving second-order convergence with only
$\log\kappa$ (where $\kappa$ is the condition number), $\alpha^{-1}$ (distributed data diversity) or $n/r$
(sketch) slow-downs.  Empirically, it outperforms Conjugate Gradient and Gradient Descent by
1–2 orders-of-magnitude in both iteration complexity and net communication (in the distributed setting), and matches the behaviour of QR-based solvers up to $\log$ terms,
offering a practical drop-in solver when memory or bandwidth is tight.

\textit{Limitations.} 
Numeric stability beyond $\kappa\!>\!10^{8}$ is untested. Second, we show one 
pre-training recipe that yields
\methodname; extensions to non-linear regimes that do so are open.

\begin{figure}[t]
    \centering
    \includegraphics[width=.7\textwidth]{./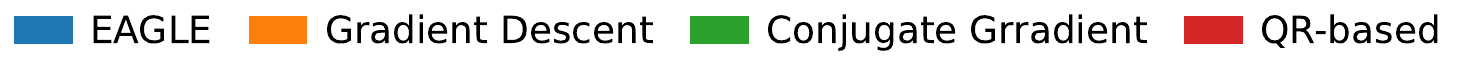}
    
    \begin{subfigure}{.23\textwidth}
        \smallskip\includegraphics[width=\linewidth]{./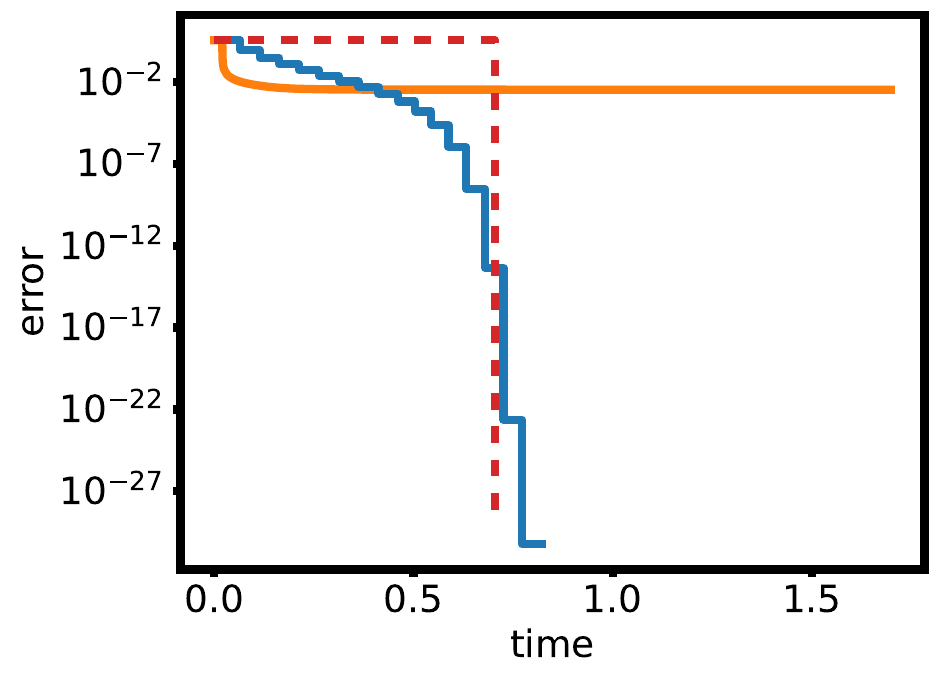}\vspace{-.5\baselineskip}
        \subcaption{rank 30}
    \end{subfigure}
    \begin{subfigure}{.23\textwidth}
        \smallskip\includegraphics[width=\linewidth]{./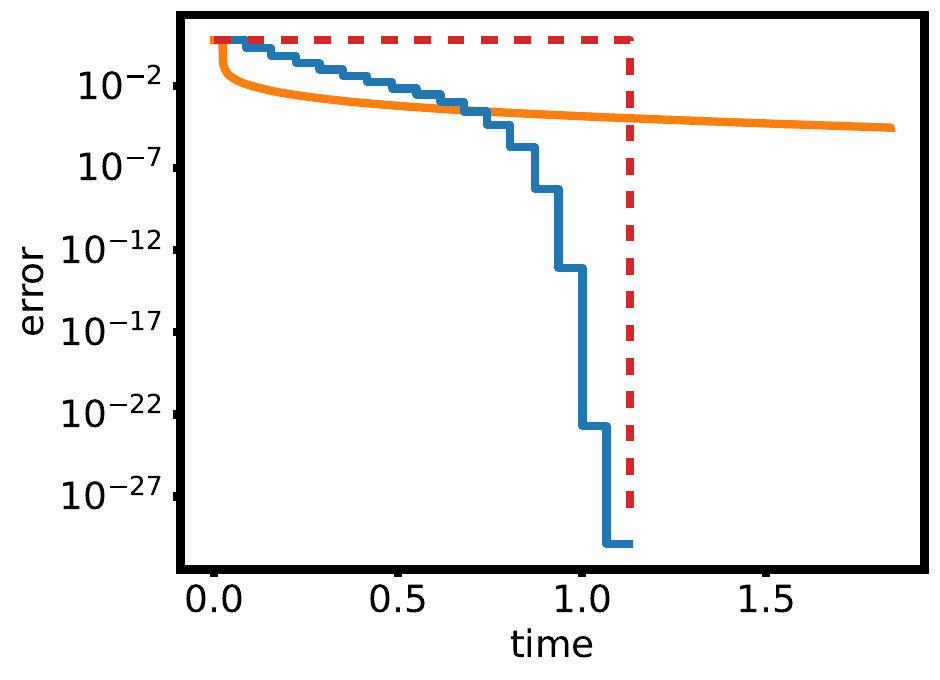}\vspace{-.5\baselineskip}
        \subcaption{rank 60}
    \end{subfigure}
    \begin{subfigure}{.23\textwidth}
        \smallskip\includegraphics[width=\linewidth]{./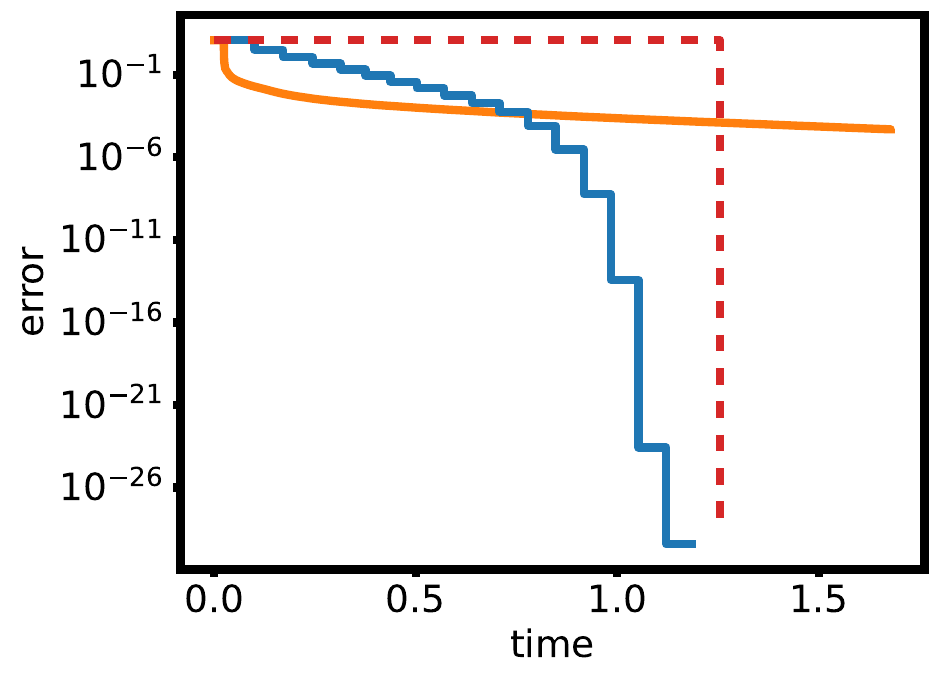}\vspace{-.5\baselineskip}
        \subcaption{rank 120}
    \end{subfigure}
    \begin{subfigure}{.23\textwidth}
        \smallskip\includegraphics[width=\linewidth]{./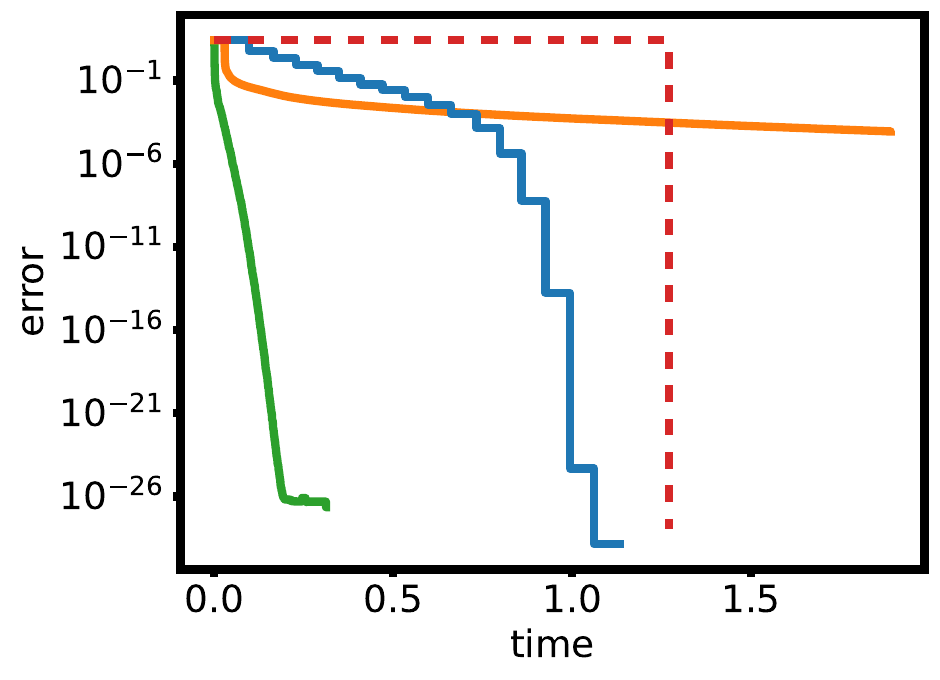}\vspace{-.5\baselineskip}
        \subcaption{rank 240}
    \end{subfigure}
    
    \begin{subfigure}{.23\textwidth}
        \smallskip\includegraphics[width=\linewidth]{./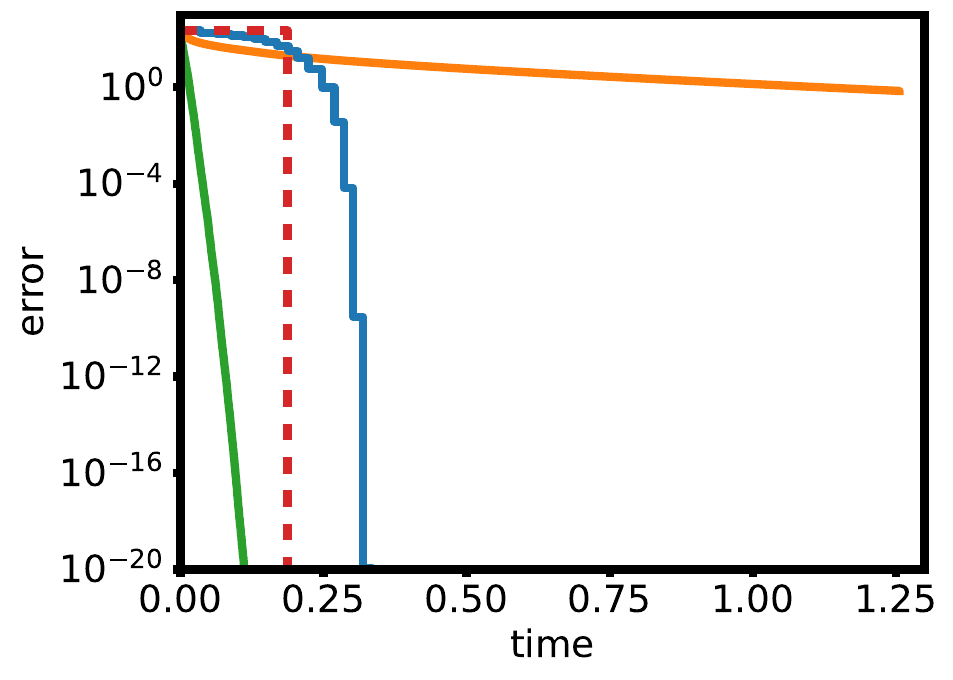}\vspace{-.5\baselineskip}
        \subcaption{$\kappa=100$}
    \end{subfigure}
    \begin{subfigure}{.23\textwidth}
        \smallskip\includegraphics[width=\linewidth]{./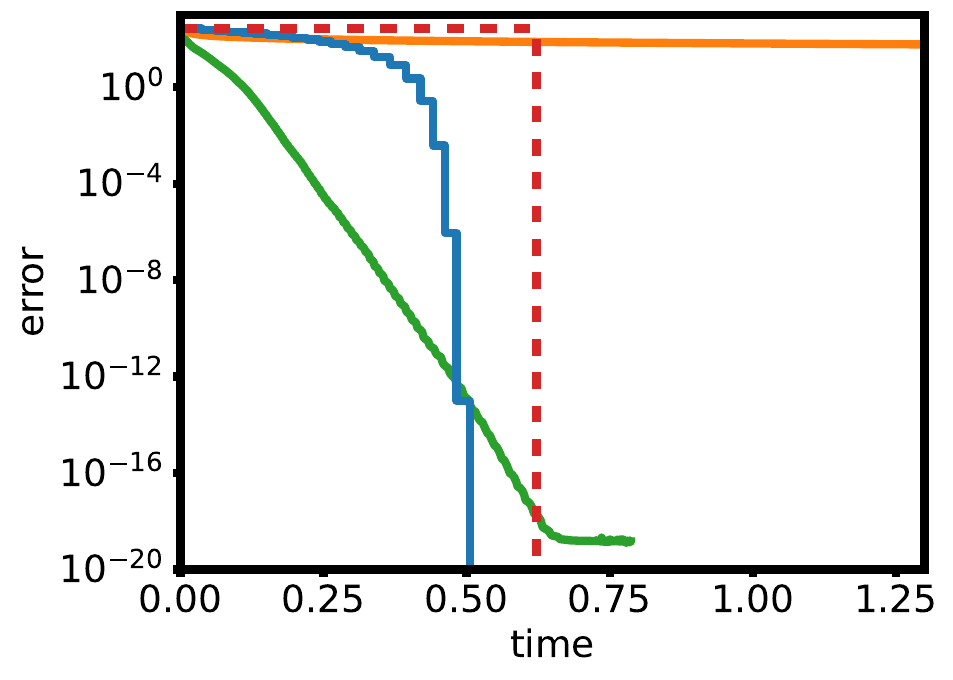}\vspace{-.5\baselineskip}
        \subcaption{$\kappa=1\,000$}
    \end{subfigure}
    \begin{subfigure}{.23\textwidth}
    \smallskip
        \includegraphics[width=\linewidth]{./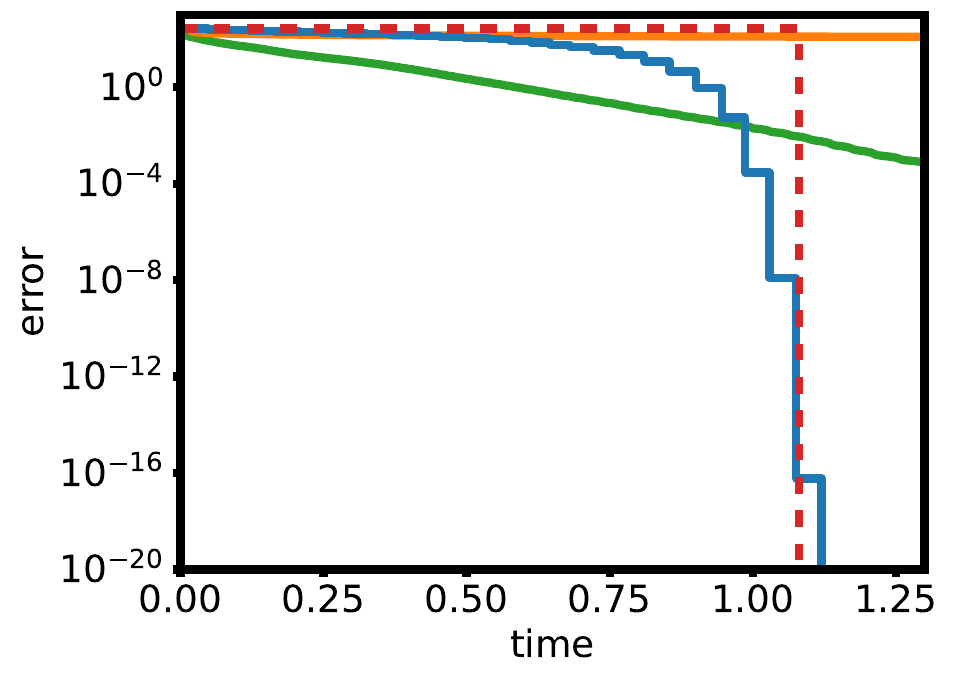}\vspace{-.5\baselineskip}
        \subcaption{$\kappa=10\,000$}
    \end{subfigure}
    \begin{subfigure}{.23\textwidth}
        \smallskip\includegraphics[width=\linewidth]{./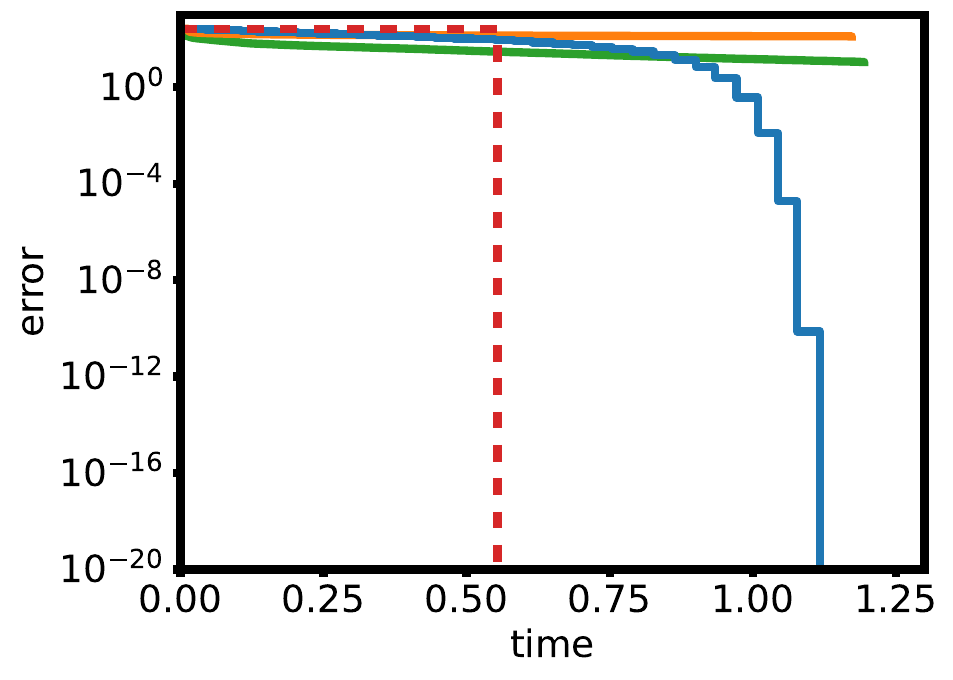}\vspace{-.5\baselineskip}
        \subcaption{$\kappa=100\,000$}
    \end{subfigure}\vspace{-.5\baselineskip}
    \caption{\methodname\ shows second order convergence in the unconstrained regime. It extends to low-rank completion and scales logarithmically with the condition number $\kappa$.}
    \label{fig:exps_central}\vspace{-.75\baselineskip}
\end{figure}

\begin{figure}[t]
    \centering
    \includegraphics[width=.8\textwidth]{./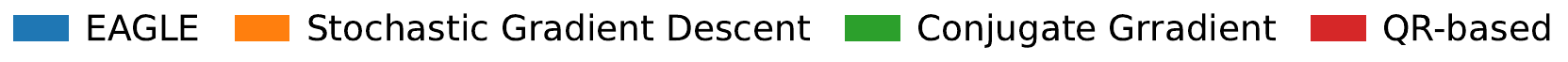}
    
    \begin{subfigure}{.23\textwidth}
        \includegraphics[width=\linewidth]{./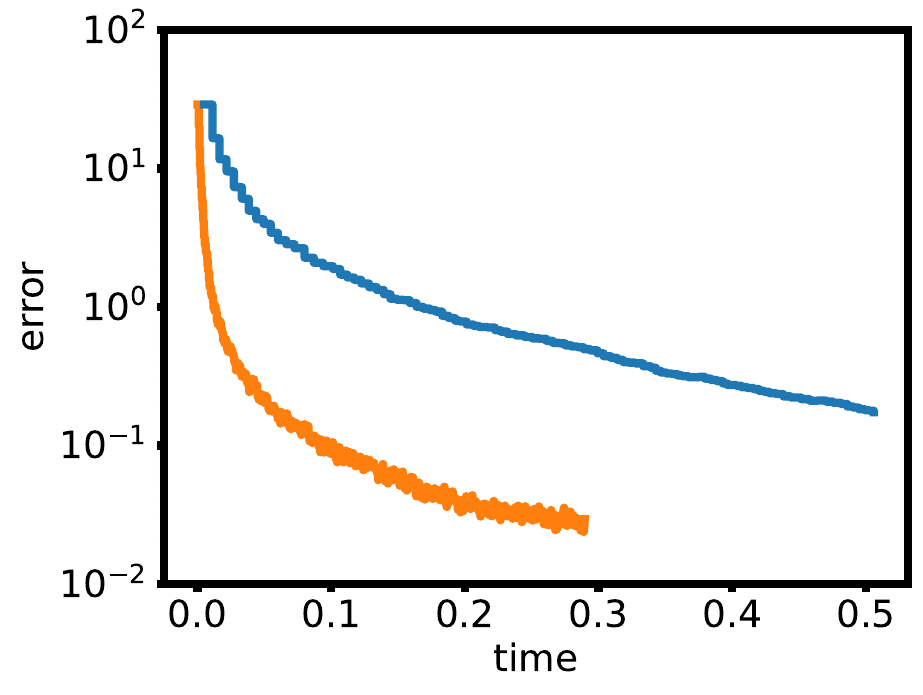}
        \subcaption{$r=30$}
    \end{subfigure}
    \begin{subfigure}{.23\textwidth}
        \includegraphics[width=\linewidth]{./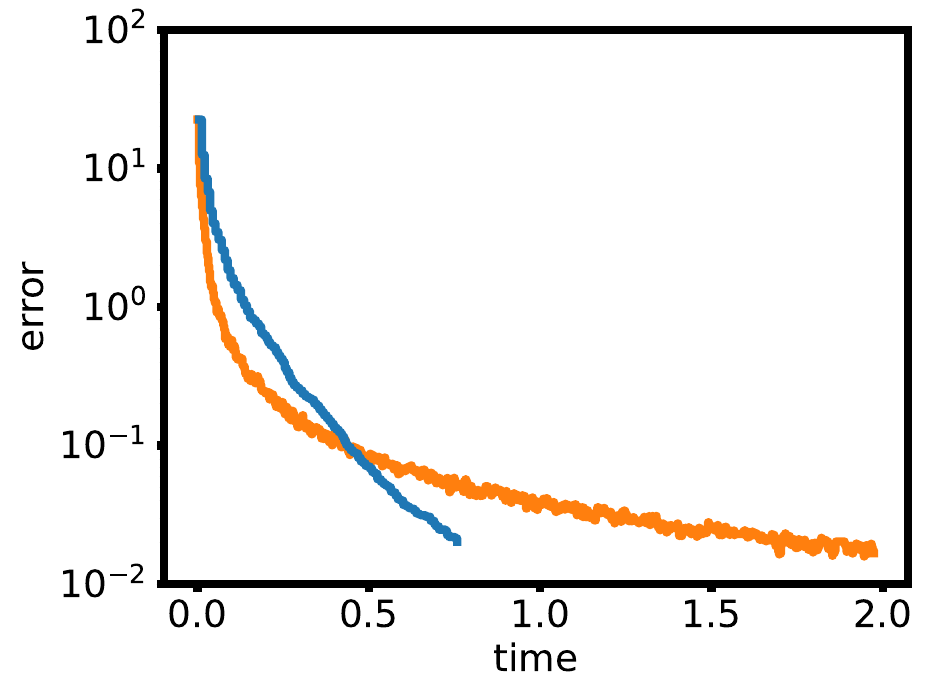}
        \subcaption{$r=60$}
    \end{subfigure}
    \begin{subfigure}{.23\textwidth}
        \includegraphics[width=\linewidth]{./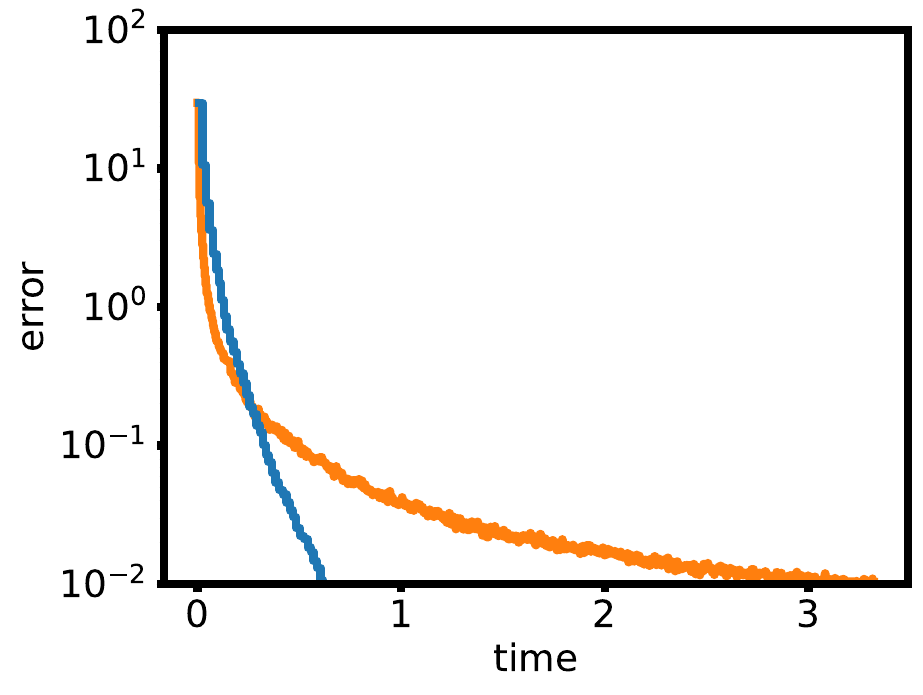}
        \subcaption{$r=120$}
    \end{subfigure}
    \begin{subfigure}{.23\textwidth}
        \includegraphics[width=\linewidth]{./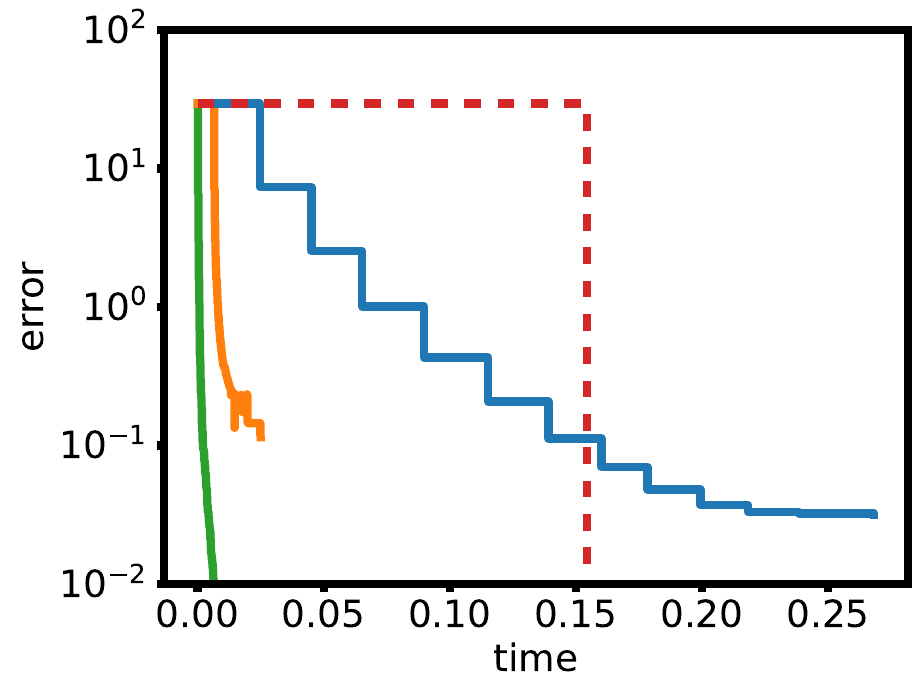}
        \subcaption{$r=240$}
    \end{subfigure}\vspace{-.5\baselineskip}
    \caption{In the stochastic data access regime—a variant of the compute-limited setting—EAGLE outperforms SGD in wall-clock time for sketch sizes $r\ge n/4$.}
    \label{fig:sketch}\vspace{-1\baselineskip}
\end{figure}

\begin{figure}[!t]
    \centering
    \includegraphics[width=.4\textwidth]{./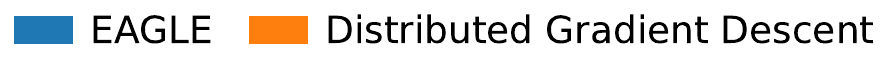}
    
    \begin{subfigure}{.23\textwidth}
        \includegraphics[width=\linewidth]{./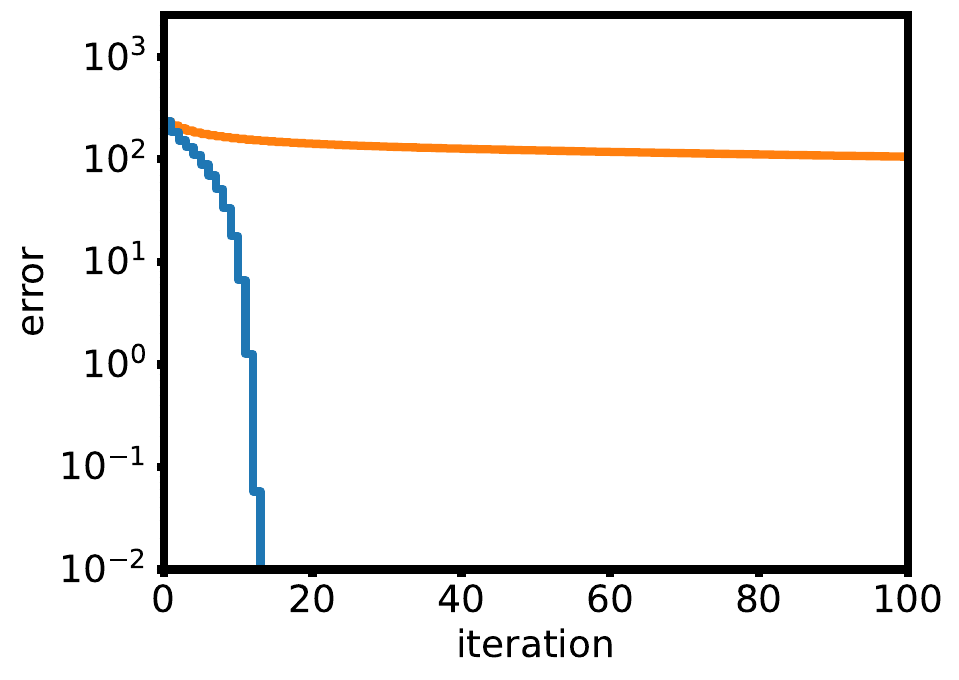}\vspace{-.5\baselineskip}
        \subcaption{$\alpha=$ 1}
    \end{subfigure}
    \begin{subfigure}{.23\textwidth}
        \includegraphics[width=\linewidth]{./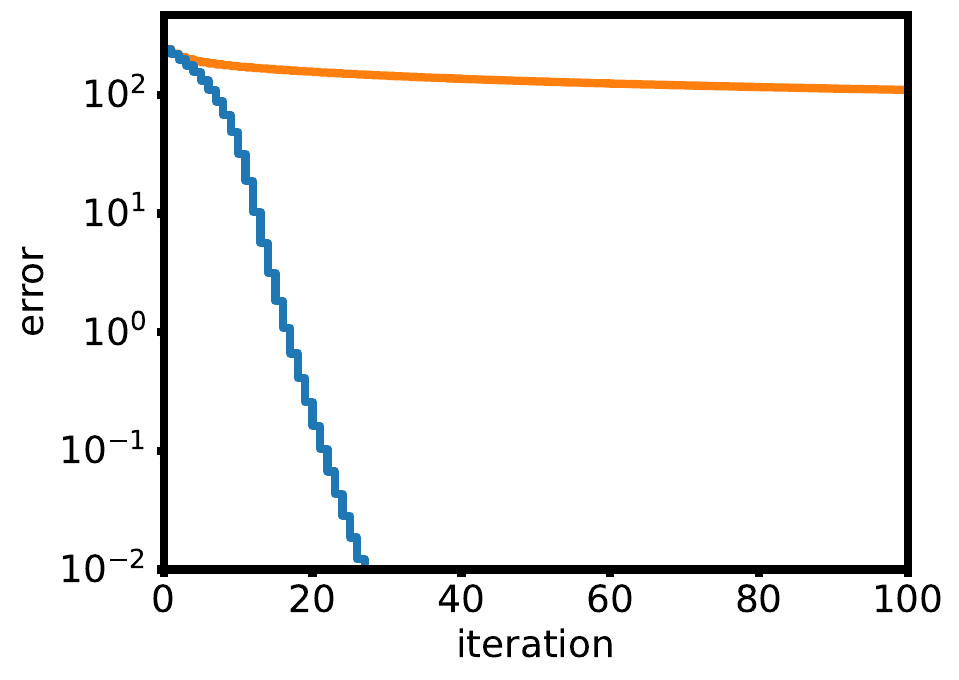}\vspace{-.5\baselineskip}
        \subcaption{$\alpha=1/1.2$}
    \end{subfigure}
    \begin{subfigure}{.23\textwidth}
        \includegraphics[width=\linewidth]{./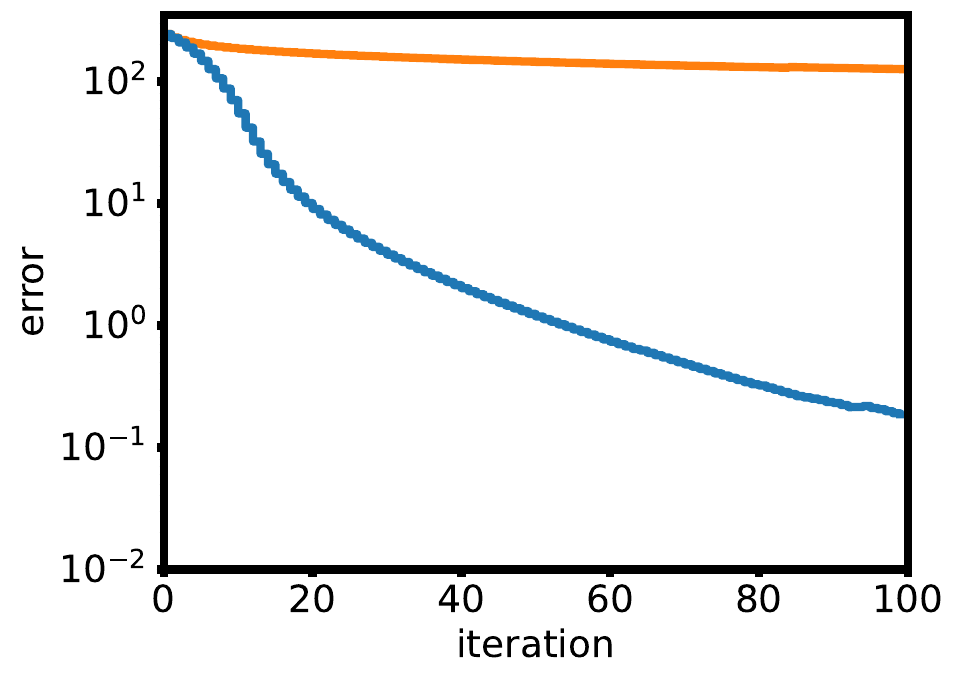}\vspace{-.5\baselineskip}
        \subcaption{$\alpha=1/2.8$}
    \end{subfigure}
    \begin{subfigure}{.23\textwidth}
        \includegraphics[width=\linewidth]{./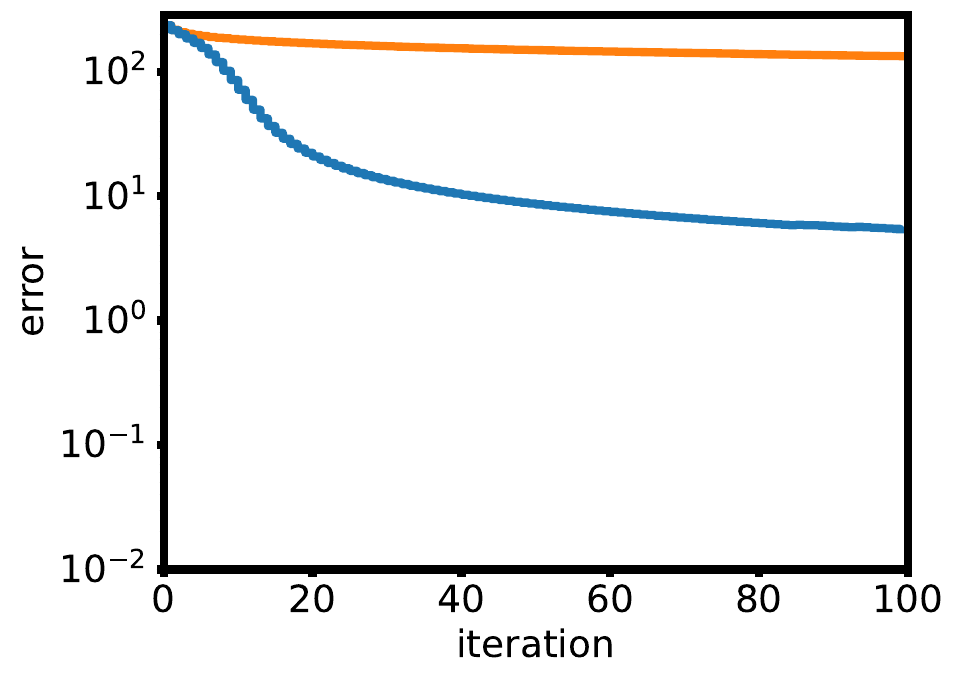}\vspace{-.5\baselineskip}
        \subcaption{$\alpha=1/85$}
    \end{subfigure}\vspace{-.5\baselineskip}
    \caption{In the distributed setting, \methodname\ shows significant improvements in iteration count. As the theory suggest, the convergence depends on the data alignment parameter $\alpha$.}\vspace{-1.5\baselineskip}
    \label{fig:alpha-bar}
\end{figure}

\clearpage

\subsection*{Acknowledgements}

This research was supported by the Army Research Office
Grant W911NF2110246, AFRL Grant FA8650-22-C1039, and the National Science Foundation grants CPS-2317079, CCF-2007350, DMS-2022446, DMS-2022448, and CCF-1955981.

\bibliographystyle{apalike}  %
\bibliography{./bib}

\clearpage

\appendix

\section{Methodological Details}\label{appx:methods}

\subsection{Nyström Approximation}\label{appx:nystrom_approximation}

The Nystr\"{o}m approximation of a block matrix $ X =  \begin{bmatrix}
    A & C \\ B & \hideblock{D}
\end{bmatrix}$ with missing block $D$ is the matrix \[ \hat{X}_* = \begin{bmatrix}
    A & C \\ B & \hat{D}_*
\end{bmatrix},  \]  where \[ \hat{D}_* := B A (AA^\top)^\dagger C.  \] Here, $Z^\dagger$ is the Moore-Penrose pseudoinverse of $Z$. Note that this approximation ensures that the rank of the approximate $\hat{X}_*$ is the same as the rank of $A$---in this way, the Nystrom approximation reduces both the rank of $X$, as well as the Frobenius norm of the error. The latter property can be seen from the fact that the approximation just amounts to a joint execution of ordinary least squares. Indeed, if we treat the $i$th row of $B$, $B^i$ as a `response variable' and the $j$th column of $C, c^i$ as a query point, then \[ (\hat{D}_*)_{ij}  = B^i A (AA^\top)^\dagger c^j\] is precisely the estimate when one regreeses the data $(A, B^i)$ onto $c^j$. For this reason, we may also interpret the method as computing the estimation parameters \[ W_*  := BA (AA^\top)^\dagger \in \mathbb{R}^{d' \times d},\] and then computing the estimate \[ \hat{D}_* = W_* C, \] much as in ordinary linear regression. This value $W_*$ will appear later in our theoretical analyses.

We note that the approximation above is defined whether the matrix $X$ is noise-corrupted or not. In the presence of noise, it inherits many of the statistical properties of linear regression when we assume that the noise is restricted to the `response variables' $B,D$. As a special case, if the rank of $X$ is equal to that of $\begin{bmatrix} A & C \end{bmatrix},$ then in fact it holds that $\begin{bmatrix} B & D\end{bmatrix} = W_* \begin{bmatrix} A & C \end{bmatrix},$ i.e., the Nystr\"{o}m approximation is exact. We will sometimes refer to this as the `noiseless' case.

\subsection{Training specifications}\label{sec:training_specs}

The transformer is trained using the Adam optimizer with a constant learning rate of \( 0.001 \) and a batch size of 1024 for 20{,}000 iterations. To stabilize training, gradient clipping with a 2-norm threshold of 0.1 is applied at each step. At every iteration, a new batch is independently sampled as detailed in Section~\ref{sec:method}, ensuring that the model never sees the same data twice during training or inference. The block masking pattern remains fixed throughout. We observe occasional spikes in the training loss (Figure~\ref{fig:training}) and interpret those as occasional failures of the learned solver to converge, typically occurring when \( \|A\|_2\) attains untypically large values. We recall that, under our data sampling procedure, \( \|A\|_2 \) is random and unbounded, allowing for such outlier cases.

\begin{figure}[h]
    \begin{minipage}[t]{.48\textwidth}
    \centering
        \includegraphics[width=\linewidth]{./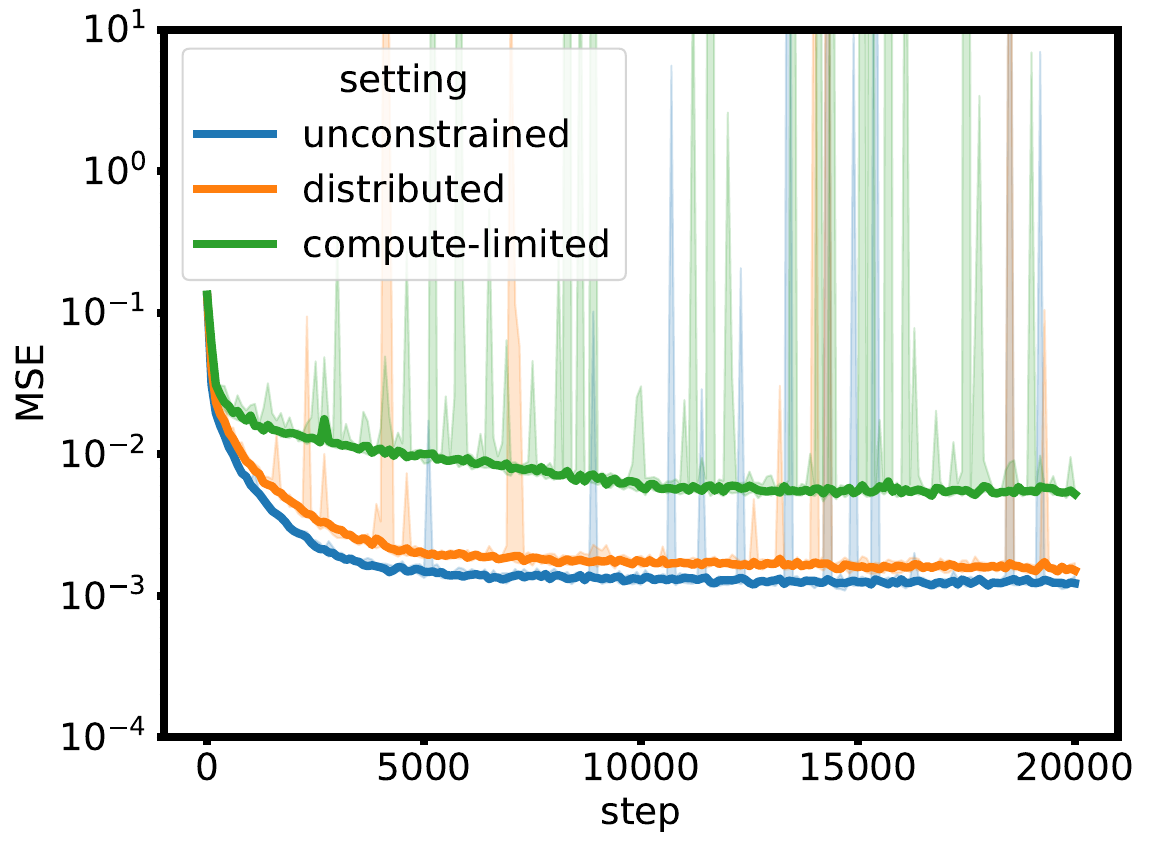}
    \caption{Training loss, median across 10 independent runs.}
    \label{fig:training}
    \end{minipage}\hfill
    \begin{minipage}[t]{.48\textwidth}
        \includegraphics[width=\linewidth]{./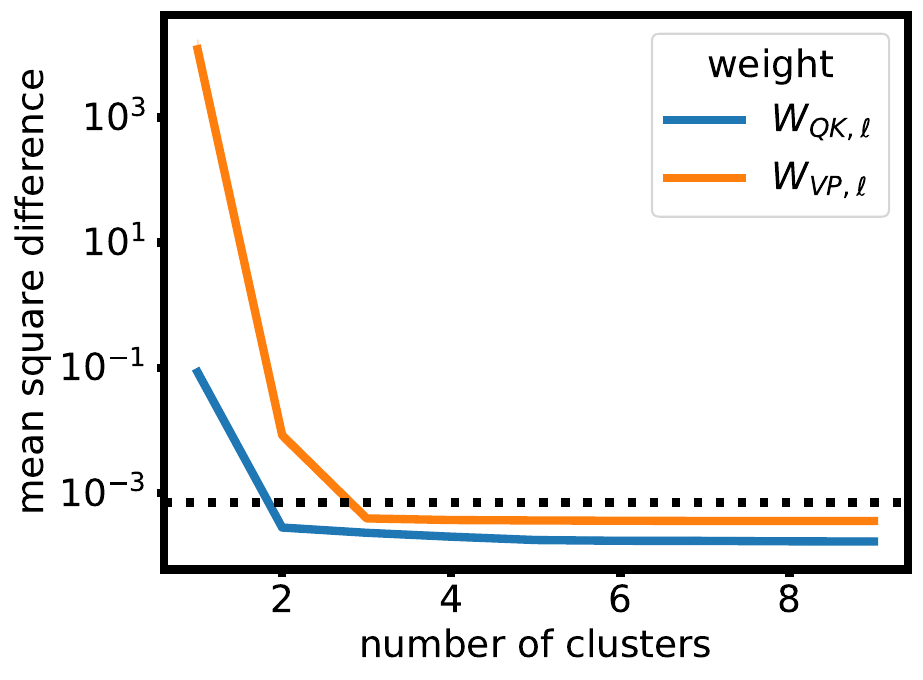}
    \caption{Prediction difference when sparsifying weights in centralized setting. For reference, the black dotted line marks the trained transformer's mean squared error. Mean across 10 training runs reported.}
    \label{fig:sparsity_choice}
    \end{minipage}
\end{figure}

\subsection{Runtime of attention map}

Recall that the dominant computational cost in a transformer layer arises from the attention mechanism:
\[
\attn(Z) = \bigl(ZW_Q (ZW_K)^\top \odot M\bigr) ZW_V W_P^\top,
\]
where \( Z \in \mathbb{R}^{(d + d') \times (n + n')} \) is the input matrix and \( W_Q, W_K, W_V, W_P \in \mathbb{R}^{n \times k} \) are learned projection matrices. Assuming that the size of the submatrix \( D \) remains constant, we have \( d' = \Theta(1) \) and \( n' = \Theta(1) \).

The computation of \( ZW_Q \), \( ZW_K \), and \( ZW_V \) requires \( \Theta(ndk) \) time. The inner product and masking operation \( (ZW_Q)(ZW_K)^\top \odot M \) incurs a cost of \( \Theta(d^2k) \), and the subsequent multiplication with \( ZW_VW_P^\top \) requires an additional \( \Theta(d^2k + ndk) \). Thus, the total time complexity of the attention computation is \( \Theta(d^2k + ndk) \).

For the unconstrained case \( k = n \), this yields a cost of \( \Theta(nd^2 + dn^2) \). Therefore, selecting \( k < n \) can significantly reduce computational overhead.

Of course, in our recovered algorithms, we can `precompute' $W_Q W_K^\top$ and $W_V W_P^\top,$ and these are further scaled versions of block-identity matrices, and so the computation of $ZW_Q$ etc. can be avoided. This reduces the cost to $O(d^2 k),$ which for the unconstrained setting of $k = n$ works out to $O(d^2 n)$.

In the sketched versions of the iterations, these matrices become nontrivial again, and their multiplication must be incorporated into the accounting of the costs of the recovered algorithm. Of course, in this regime, $k = r \ll n,$ so the domainting cost is $O(ndr)$. 

\subsection{Details on setup for distributed setting}

We design the transformer to emulate a distributed algorithm by constraining the attention mechanism. Specifically, the query, key, and value matrices are restricted to data local to a single machine, and due to symmetry across machines, a single set of transformation is shared across all attention heads. At every layer, the input data is partitioned as:
\[
Z_k = \begin{bmatrix} \cdots & X^\mu_k & X^{\mu + 1}_k & \cdots \end{bmatrix} \in \mathbb{R}^{(d+d') \times M(n+n')}, \textit{ where } 
X^{\mu}_k=
\begin{bmatrix}
  A^{\mu}_k & C_k \\[2pt]
  B^\mu_k & {D}_k
\end{bmatrix},
 \mu \in [1:M]
\]
with blocks \( A^\mu \in \mathbb{R}^{n \times d} \) and \( B^\mu \in \mathbb{R}^{n' \times d} \) distributed across machines, while the shared blocks \( C \in \mathbb{R}^{n \times d'} \) and \( D \in \mathbb{R}^{n' \times d'} \) are replicated identically in each \( X^\mu \). This setup reflects common scenarios where the missing block \( D \) is significantly smaller than the full matrix \( A \), often remaining constant in size (e.g., \( d' = n' = 1 \) in regression), making the distributed partitioning both efficient and natural.

The projection matrix after attention is unconstrained, enabling unrestricted communication between heads and implicitly modeling a fully connected communication topology. This architectural design encourages local, machine-specific computation in the attention mechanism, while allowing for global coordination in the projection step. Empirically, we find that the model leverages this flexibility efficiently: although capable of learning full data sharing, it consistently limits itself to communication scaling with \( O(|D|) \) bits.

We find that transposing the per-machine data $X^\mu$ is necessary in the distributed setup. That is, we give the transformer inputs $Z_k = \begin{bmatrix} \cdots & X^{\mu,\top}_k & X^{\mu + 1,\top}_k & \cdots \end{bmatrix}$ and train it on the mean-squared error to $D^\top$. 

\subsection{Details on algorithm extraction}
We derive closed-form iterative updates from the trained transformer weights \( W_{QK,\ell} \) and \( W_{VP,\ell} \). This involves thresholding and clustering the weight matrices in both the unconstrained and distributed settings:

\begin{itemize}
    \item The sparsification threshold \( \tau \) is set dynamically as \( \tau = 1.5 \cdot \|W\|_1 / \#\text{elements} \), accounting for varying scales across weight matrices. The constant 1.5 is selected empirically and is not further tuned.

    \item The number of quantization levels is determined by analyzing the mean prediction error induced by weight sparsification (see Figure~\ref{fig:sparsity_choice}). To maintain performance comparable to the original transformer, we cluster \( W_{QK,\ell} \) into 2 values and \( W_{VP,\ell} \) into 3.
\end{itemize}

In the compute-limited setting, a sketch matrix is extracted and its structural properties analyzed (see Section~\ref{sec:emergent_alg}). Parameter choices for the update steps are abstracted through empirically observed scaling laws. Specifically, we identify \( \alpha_1 \alpha_2 \approx 1 / \|A\|_2^2 \), as supported by results in the main text. Furthermore, we approximate \( \alpha_1 \alpha_3 \approx 1.92 \cdot \alpha_1 \alpha_3 \), where the resulting squared prediction error is \( 3.5 \times 10^{-4} \) (averaged over 10 training runs), remaining below the baseline error of approximately \( 1 \times 10^{-3} \).

\section{Emergent Algorithm}\label{appx:emergent_method_empirical_results}

\begin{figure}
    \centering
    \begin{center}
        Layer 1 \hspace{150pt} Layer 2
    \end{center}
    \begin{subfigure}{\textwidth}
        \includegraphics[width=.5\textwidth]{./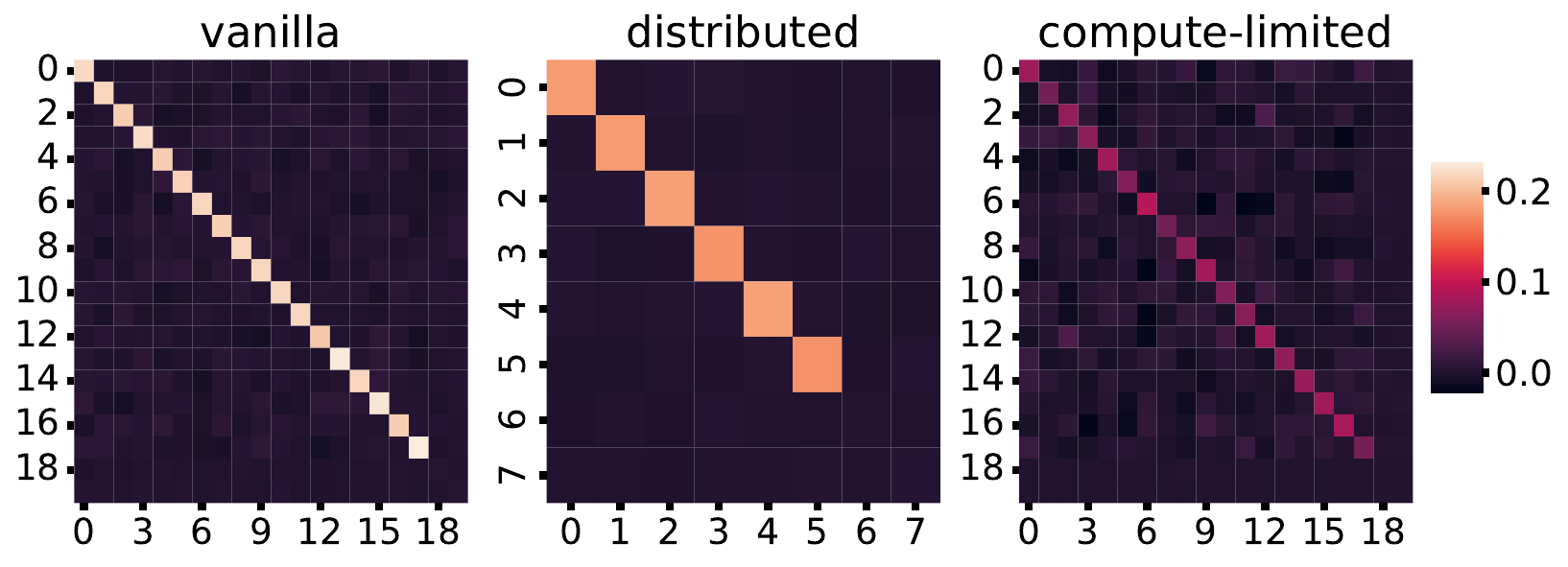}
        \includegraphics[width=.5\textwidth]{./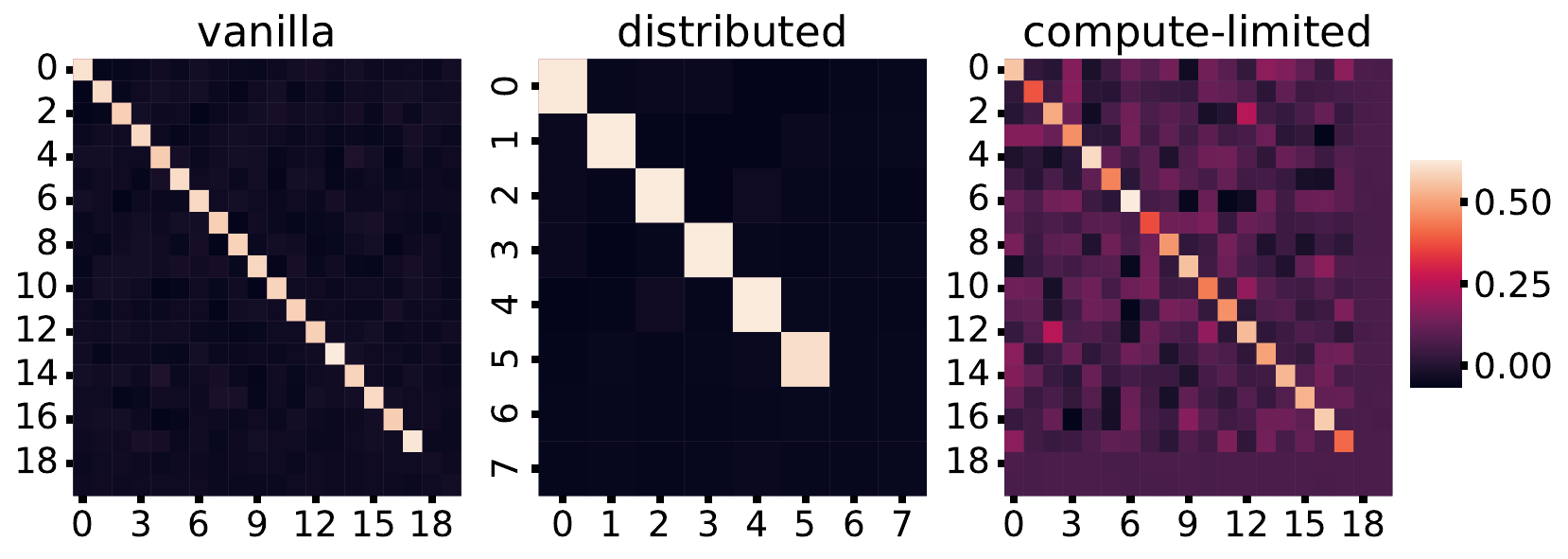}
        \includegraphics[width=.5\textwidth]{./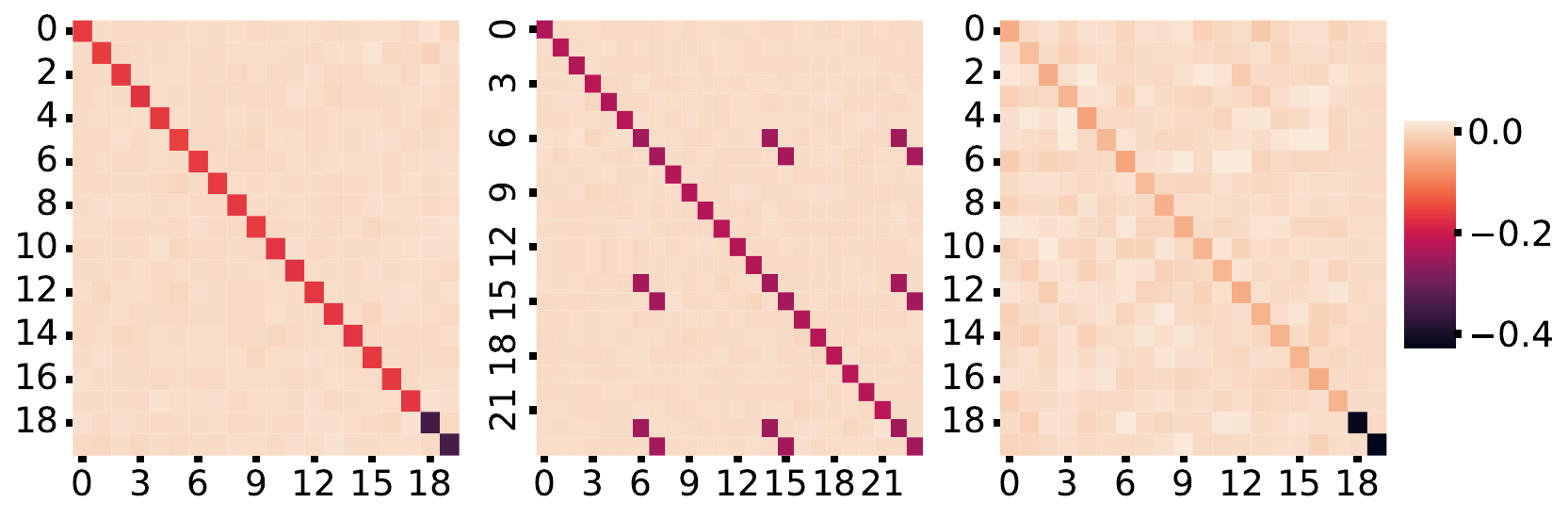}
        \includegraphics[width=.5\textwidth]{./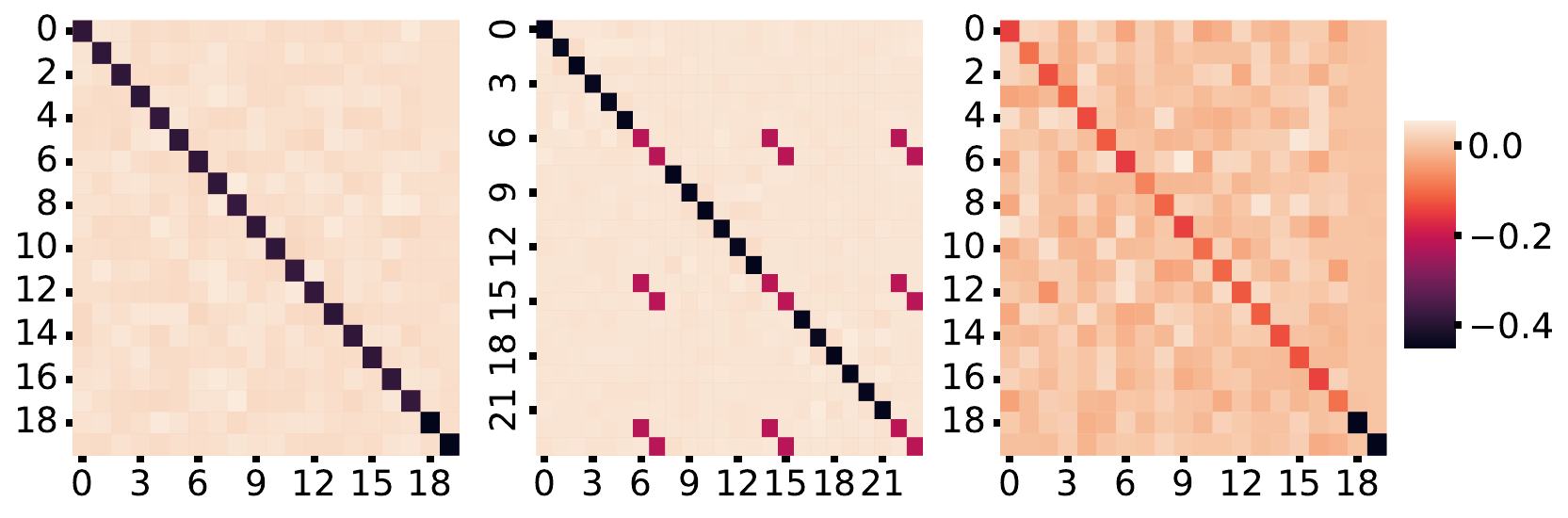}
    \end{subfigure}
    \begin{center}
        Layer 3 \hspace{150pt} Layer 4
    \end{center}
    \begin{subfigure}{\textwidth}
        \includegraphics[width=.5\textwidth]{./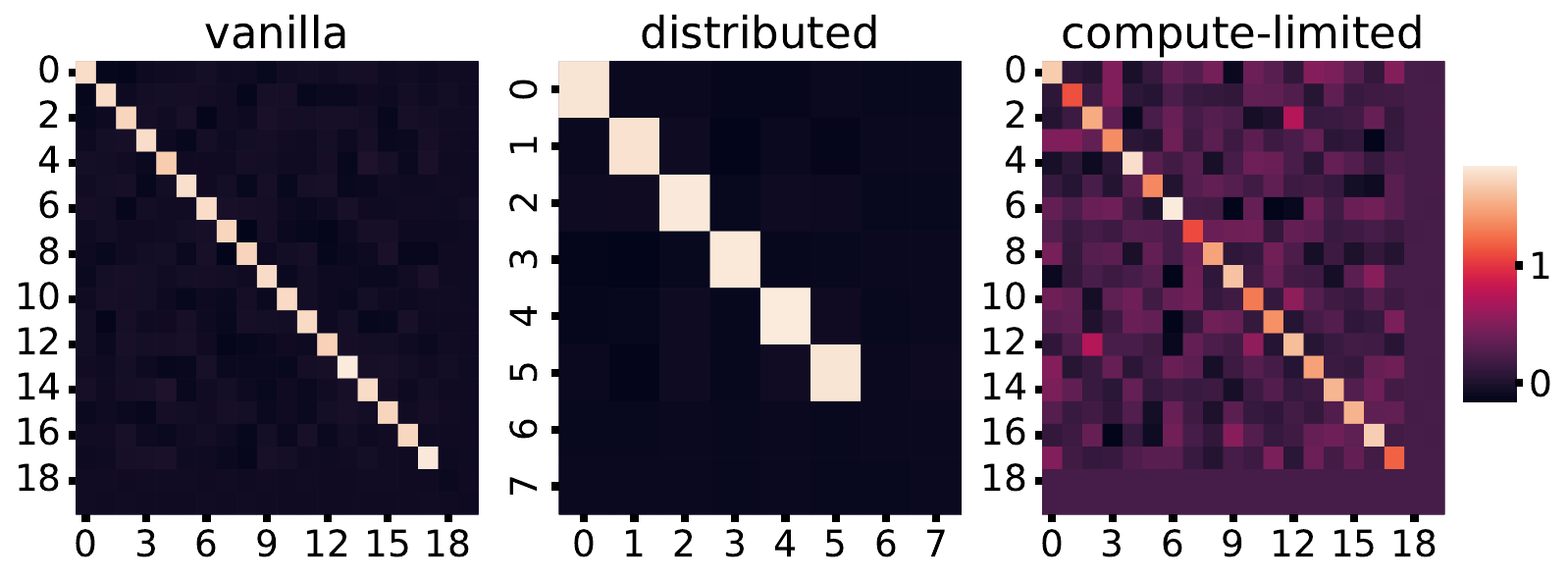}
        \includegraphics[width=.5\textwidth]{./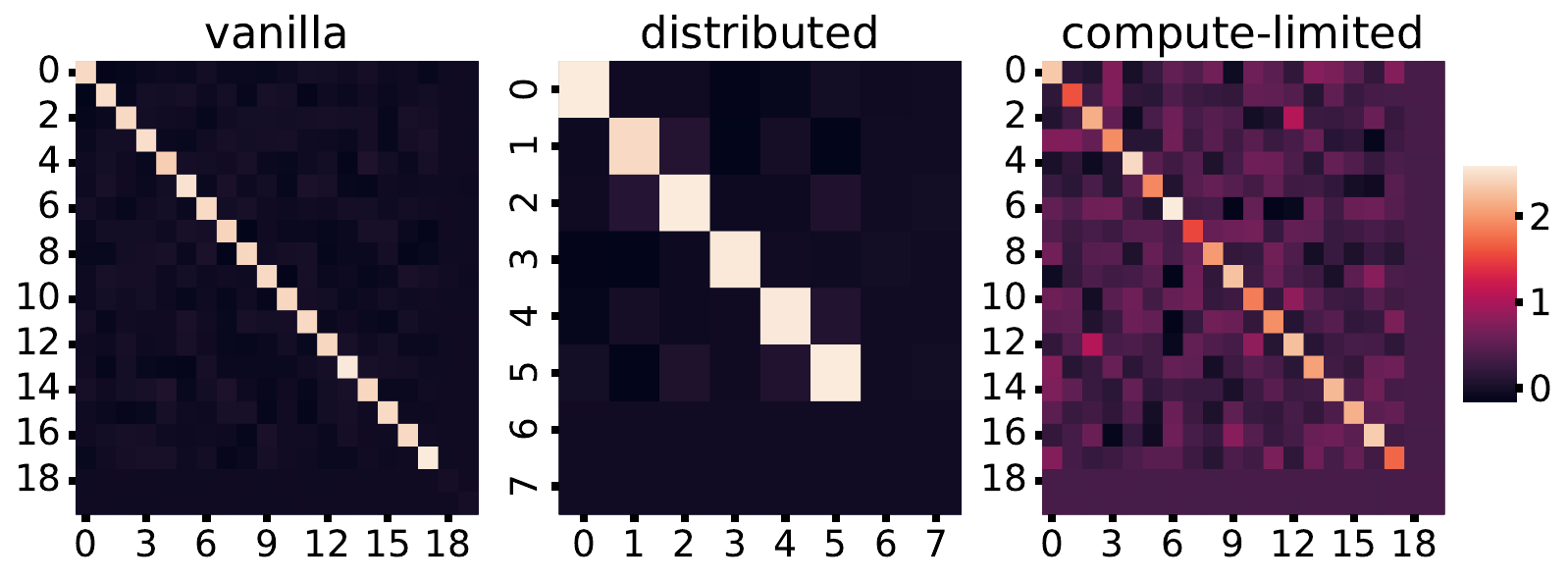}
        \includegraphics[width=.5\textwidth]{./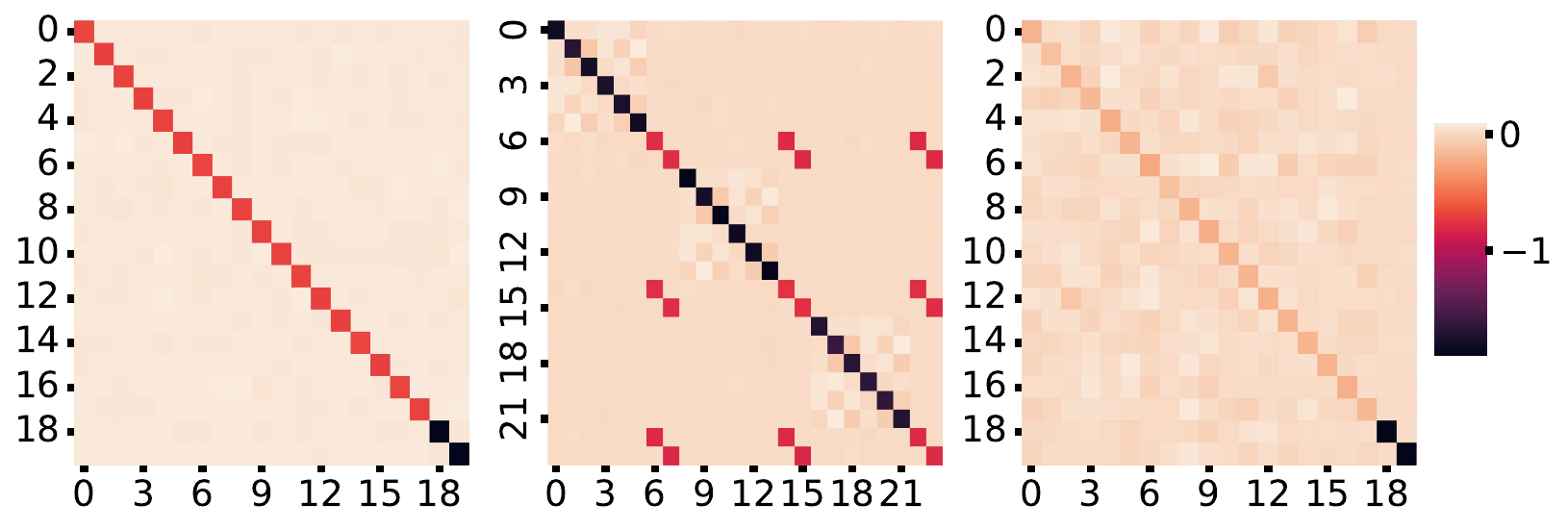}
        \includegraphics[width=.5\textwidth]{./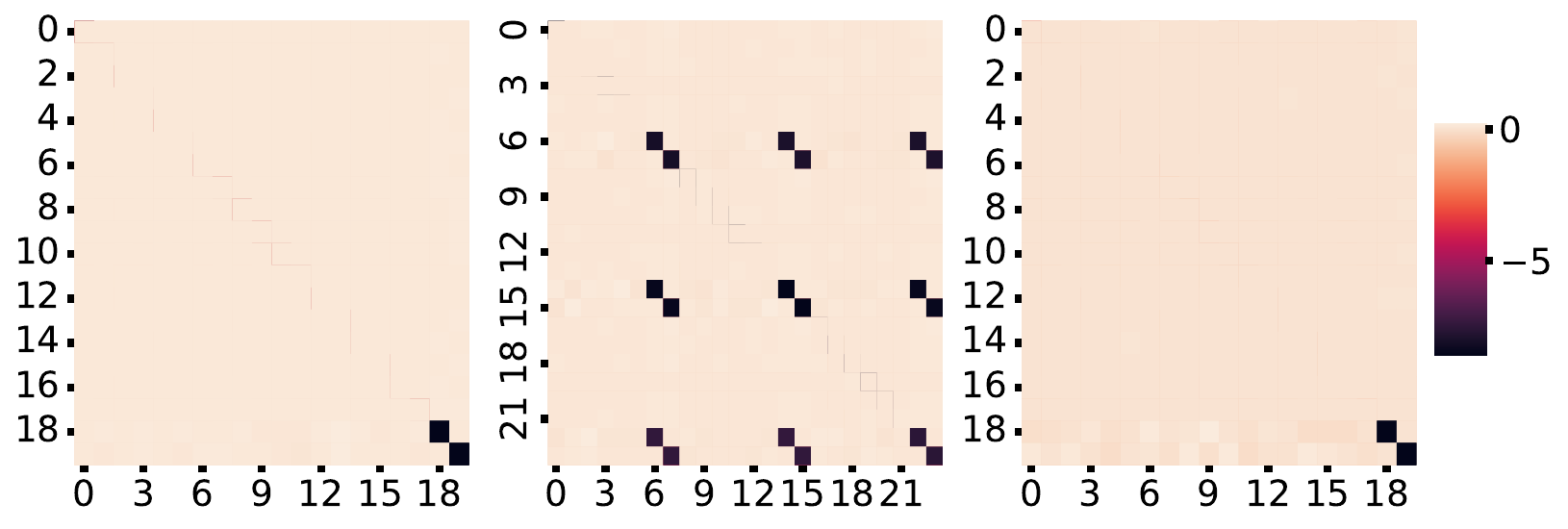}
    \end{subfigure}
    \caption{Transformer weights by layer, averaged over 10 training runs. The top panels display the weight products \( W_{QK,\ell} \), and the bottom panels show \( W_{VP,\ell} \). Prior to averaging, sign ambiguities were resolved by aligning the dominant coefficients: large entries in \( W_{QK,\ell} \) were set to be positive, and those in \( W_{VP,\ell} \) to be negative.
}
    \label{fig:weight_variance}
\end{figure}

\begin{figure}
    \centering
    \begin{centering}
        no noise ($\sigma^2=0$)\hspace{100pt}modest noise ($\sigma^2=0.01$)
    \end{centering}
    \begin{subfigure}{\textwidth}
        \includegraphics[width=.5\textwidth]{./figures/unified_algorithm/qk.pdf}
        \includegraphics[width=.5\textwidth]{./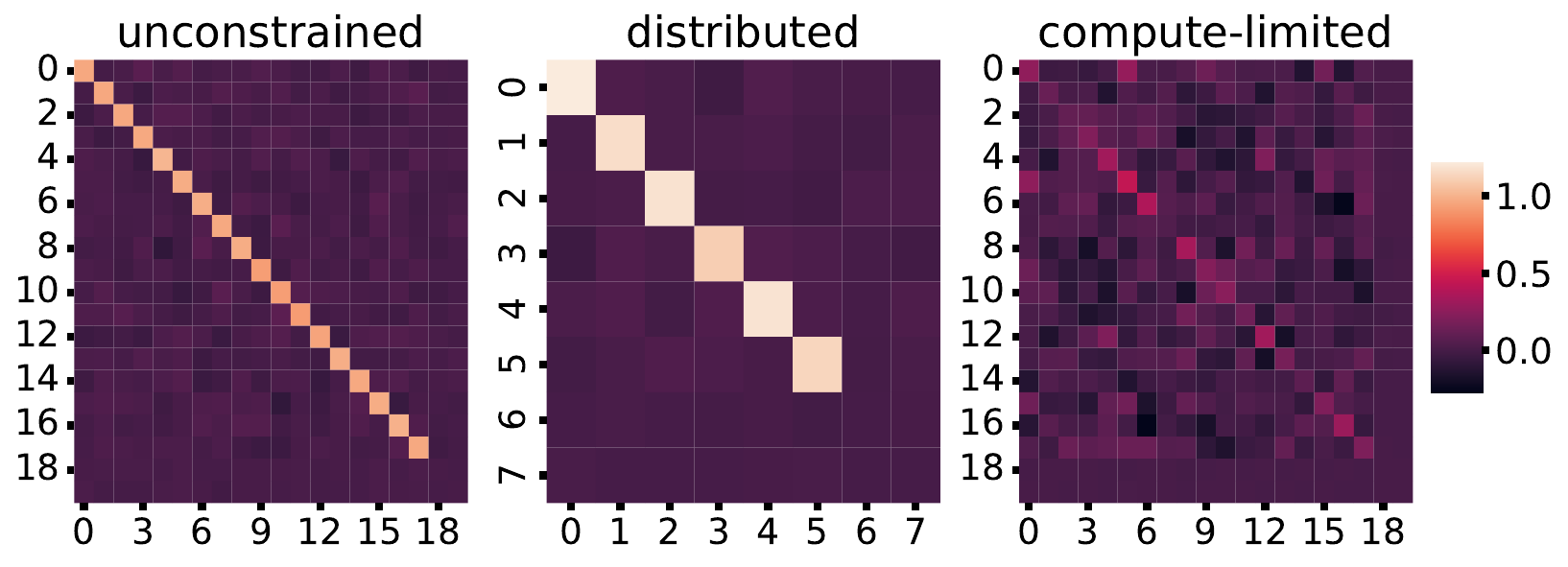}
    \end{subfigure}
    \begin{subfigure}{\textwidth}
        \includegraphics[width=.5\textwidth]{./figures/unified_algorithm/value.pdf}
        \includegraphics[width=.5\textwidth]{./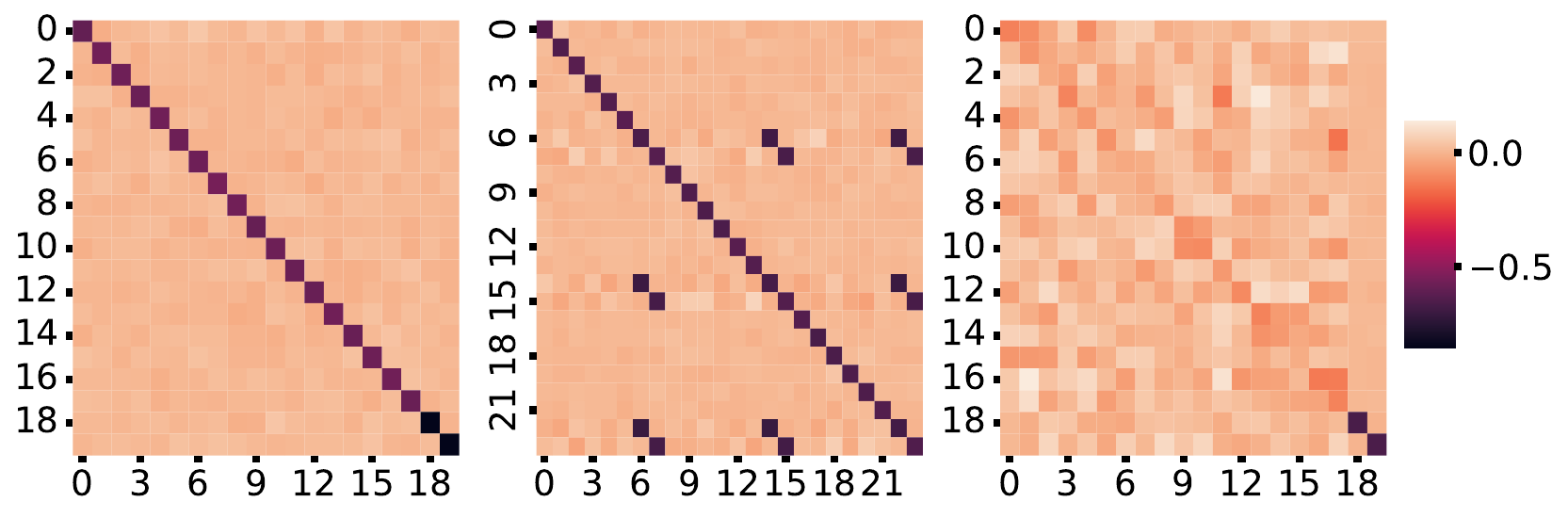}
    \end{subfigure}
    \caption{Comparison of learned model weights without noise vs.\ with modest data noise on a representative training run. The top row shows $W_{QK,\ell}$ and the bottom row shows $W_{VP,\ell}$.}
    \label{fig:weights_noise}
\end{figure}

\begin{figure}[!b]
    \centering\vspace{-\baselineskip}
    \includegraphics[width=.5\textwidth]{./figures/unified_algorithm/legend.pdf}
    
    \begin{subfigure}{.3\textwidth}
    \includegraphics[width=\linewidth]{./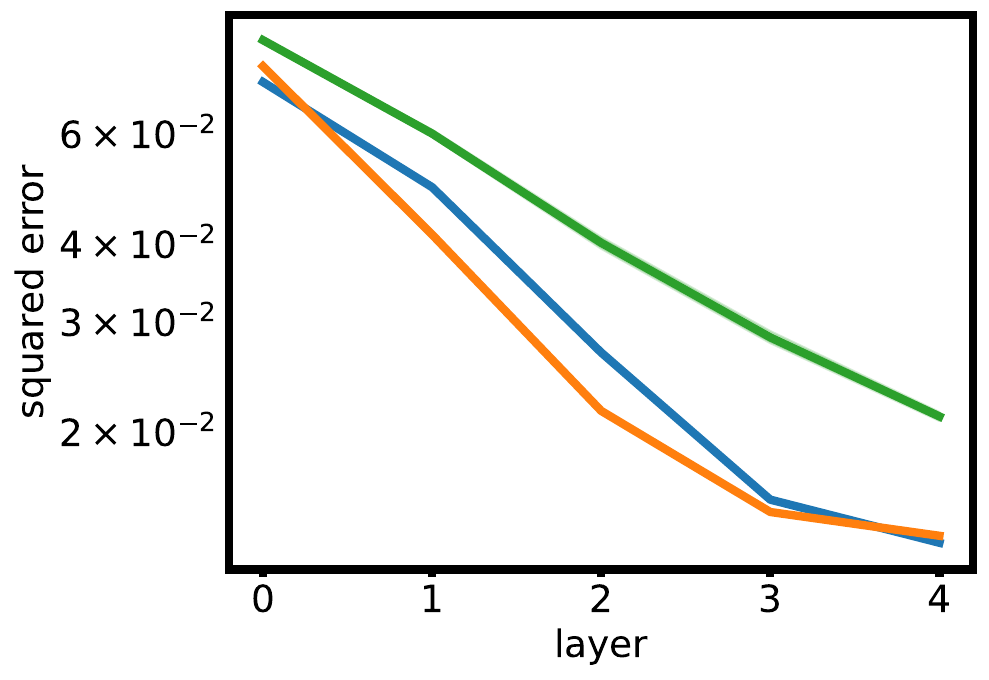}\vspace{-.3\baselineskip}
    \subcaption{Median test performance}
    \end{subfigure}\hfill
    \begin{subfigure}{.3\textwidth}
    \includegraphics[width=\linewidth]{./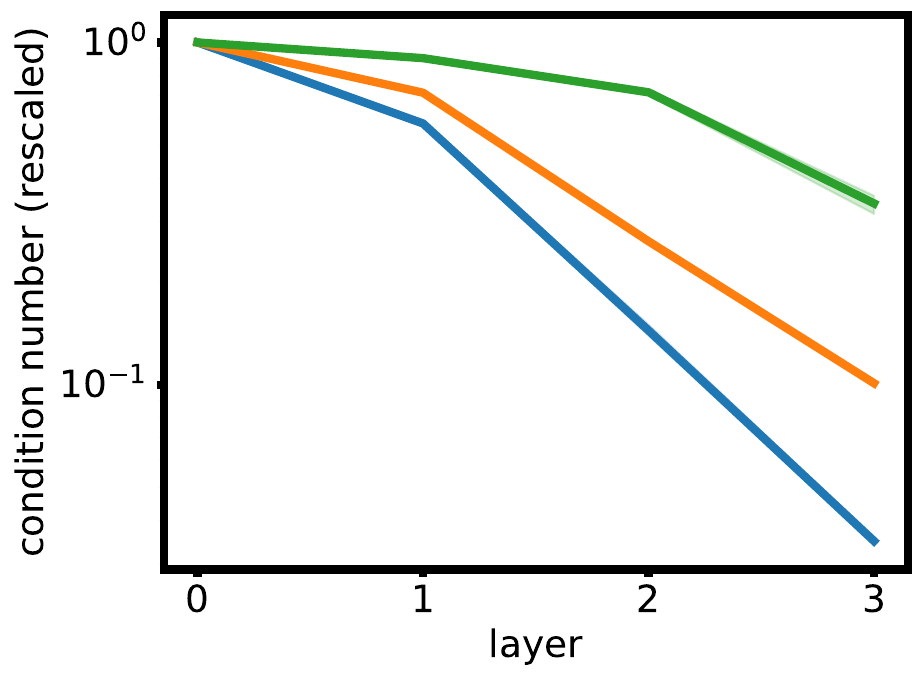}\vspace{-.3\baselineskip}
    \subcaption{Median condition number}
    \end{subfigure}\hfill
    \begin{subfigure}{.3\textwidth}
    \includegraphics[width=\linewidth]{./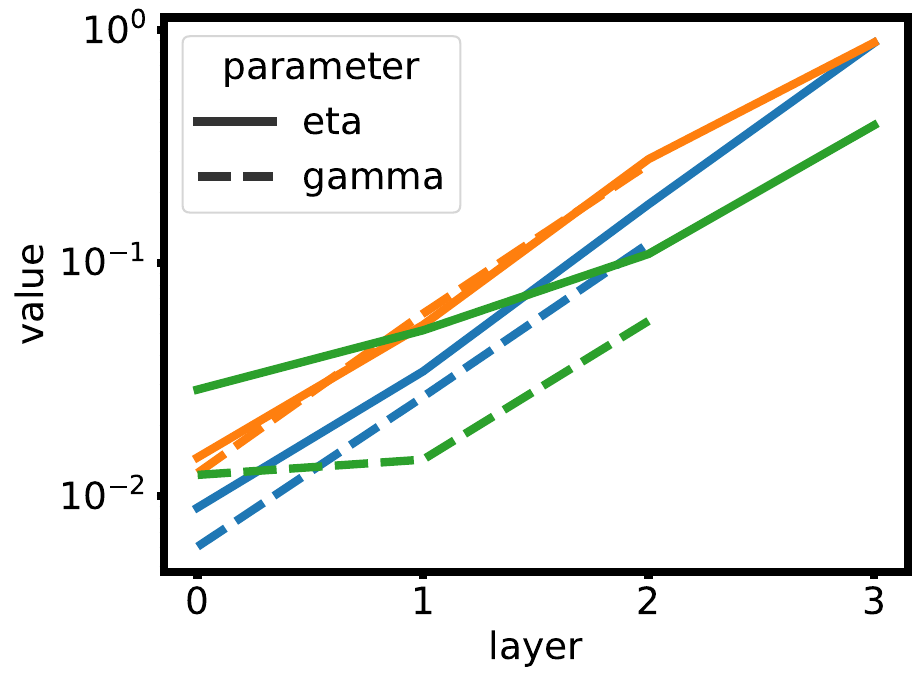}\vspace{-.3\baselineskip}
    \subcaption{Abstracted parameters}
    \end{subfigure}\vspace{-.5\baselineskip}
    \caption{\footnotesize The trained transformer solves matrix completion (modest noise, $\sigma^2=0.01$) with a unified algorithm over all three computational settings. The evolution of key quantities throughout the transformer layers illustrate the remarkable similarity between the latent algorithms. Mean across 10 training seeds is reported. }\vspace{-1.5\baselineskip}
    \label{fig:alg_emergence_noise}
\end{figure}

\subsection{Transformer behaviour across training runs}
We show that the emergent behavior identified in Section~\ref{sec:emergent_alg} is robust across training runs with varying sources of randomness, including initialization and training data. Figure~\ref{fig:weight_variance} displays transformer weights averaged over 10 independent runs, reproducing the characteristic patterns reported in Section~\ref{sec:emergent_alg}. Additionally, the sketching sub-matrix in the compute-limited setting is, on average, close to the identity, further supporting out interpretation as a random orthonormal sketch.

\subsection{Transformer behaviour under data noise}
In this section, we demonstrate that the emergent behavior identified in Section~\ref{sec:emergent_alg} remains robust under moderate data noise with variance \( \sigma^2 = 0.01 \).

Figure~\ref{fig:weights_noise} compares the learned weights of a representative transformer trained on noiseless data and on data corrupted with noise. Additionally, Figure~\ref{fig:alg_emergence_noise} reports key diagnostic quantities across layers. In both analyses, the observed behavior closely aligns with that of the noiseless case, indicating stability of the learned algorithm under modest perturbations.

\subsection{From weights to updates}
We derive the blockwise update rule implied by the abstracted weight parametrization
\[W_{QK,\ell}=\begin{bmatrix}
    \alpha_1 I&0\\0&0
\end{bmatrix}\quad\text{and}\quad W_{VP,\ell}=\begin{bmatrix}
    \alpha_2 I&0\\0&\alpha_3 I
\end{bmatrix}\]
and show that it recovers the update structure presented in Section~\ref{sec:emergent_alg} for the unconstrained setting. Recall that the layer representation takes the block form
\[Z_\ell=\begin{bmatrix}
    A_\ell & C_\ell\\B_\ell&D_\ell
\end{bmatrix}\]
and a single-head transformer layer performs the update
\begin{equation}
Z_{\ell+1}=Z_{\ell}+\bigl(Z_\ell W_{QK,\ell}Z_\ell^{\top} \odot M_\ell\bigr) Z_\ell\,W_{VP,\ell}.
\end{equation}
Substituting the block structure into the attention term yields
\begin{align*}
    Z_\ell W_{QK,\ell}Z_\ell^{\top}&=\begin{bmatrix}
        \alpha_1 A_\ell A_\ell^\top & \cdot\\\alpha_1 B_\ell A_\ell^\top & \cdot
    \end{bmatrix}
    \intertext{where the entries marked \( \cdot \) are not needed due to masking. Applying the attention mask,}
    Z_\ell W_{QK,\ell}Z_\ell^{\top}\odot M_\ell&=\begin{bmatrix}
        \alpha_1 A_\ell A_\ell^\top & 0\\\alpha_1 B_\ell A_\ell^\top & 0
    \end{bmatrix}
    \intertext{Further, we compute}
    ZW_{VP,\ell}&=\begin{bmatrix}
        \alpha_2A_\ell & \alpha_3 C_\ell\\\alpha_2 B_\ell & \alpha_3 D_\ell
    \end{bmatrix}.
    \intertext{Multiplying these terms and adding to \( Z_\ell \), we obtain the update:}
    Z_{\ell+1}&=Z_\ell+\begin{bmatrix}
        \alpha_1\alpha_2A_\ell A_\ell^\top A_\ell & \alpha_1\alpha_3 A_\ell A_\ell^\top C_\ell\\
        \alpha_1\alpha_2 B_\ell A_\ell^\top A_\ell & \alpha_1\alpha_3 B_\ell A_\ell^\top C_\ell
    \end{bmatrix}.
\end{align*}
This clean separation of updates across the blocks \( A_\ell \), \( B_\ell \), \( C_\ell \), and \( D_\ell \) a posteriori motivates the choice of block decomposition of \( Z_\ell \) used in our notation throughout the model layers.

\subsection{Interpreting the emergent algorithm}

Prior work interprets EAGLE as gradient descent with preconditioning \citep{vonoswald2023transformers,ahn2023transformers, vladymyrov2024linear}. In particular, these works frame the method as comprising two distinct phases: a preconditioning step involving only the updates to $A$ and  $B$ (this is achieved by setting $\gamma=0$), followed by a single gradient descent step using the data $(A_\ell, B_\ell,C_0)$. This perspective treats the conditioning phase as auxiliary and external to the actual optimization step.

However, this interpretation does not accurately reflect the behavior of the trained transformer. The model does not explicitly separate conditioning and update phases; instead, it interleaves them continuously throughout the computation. Rather than implementing classical preconditioning followed by gradient descent, the learned procedure functions as a continuous conditioning mechanism that shapes the dynamics of the optimization process at every layer.

Moreover, this prior viewpoint underemphasizes the internal structure and complexity of the conditioning dynamics themselves. In fact, the iteration bears a close resemblance to the Newton--Schultz method for matrix inversion, suggesting a fundamentally different mechanism at play. 

To make this connection explicit, consider the unconstrained version of the iteration. Assume \( \|A\|_2 = 1 \), and set \( \eta = 1 \), \( \gamma = 3 \), as in the statement of Theorem~\ref{thm:central}. Then, re-normalize $\bar A_\ell=A_\ell/\|A_\ell\|_2$ at every iteration. The update to the iterates $\bar A_\ell$ in this setup can then be expressed in the compact form
\[
\bar A_{\ell+1} = \frac{1}{2} (3I - \bar A_\ell \bar A_\ell^\top) \bar A_\ell,
\]
which is exactly the Newton--Schultz iteration for approximate matrix inversion, as presented in \cite[Section~5.3]{higham2008functions}. This reformulation reveals that the driving force behind the iteration is more accurately characterized as implicit matrix inversion rather than traditional gradient descent updates.

\section{Parameter Estimation}

\begin{figure}
    \centering
    \begin{minipage}[t]{.48\textwidth}
        \centering
         \vphantom{\includegraphics[width=\textwidth]{./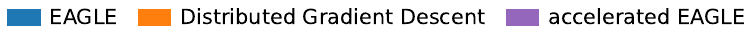}}
         \includegraphics[width=\linewidth]{./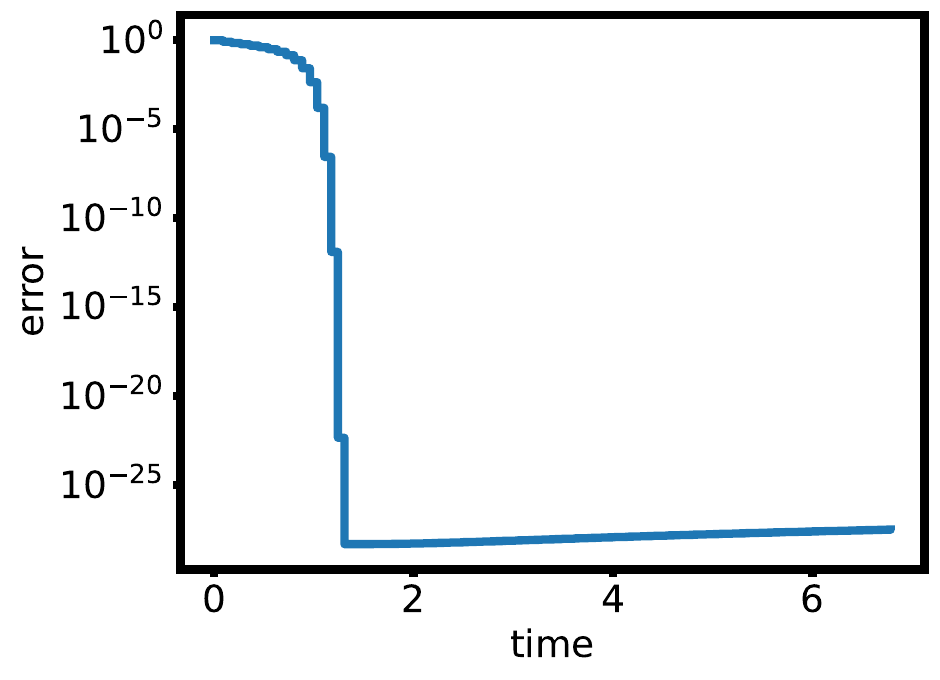}
    \caption{Convergence of EAGLE adapted to the estimation problem: unconstrained setting, with parameters $n=d=240$, $n'=d'=2$, $\kappa(A)=100$. Mean over 50 runs is reported.}
    \label{fig:estimation}
    \end{minipage}\hfill
    \begin{minipage}[t]{.48\textwidth}
        \centering
    \includegraphics[width=\textwidth]{./figures/appendix/legend_performance_hybrid.pdf}
    \includegraphics[width=\linewidth]{./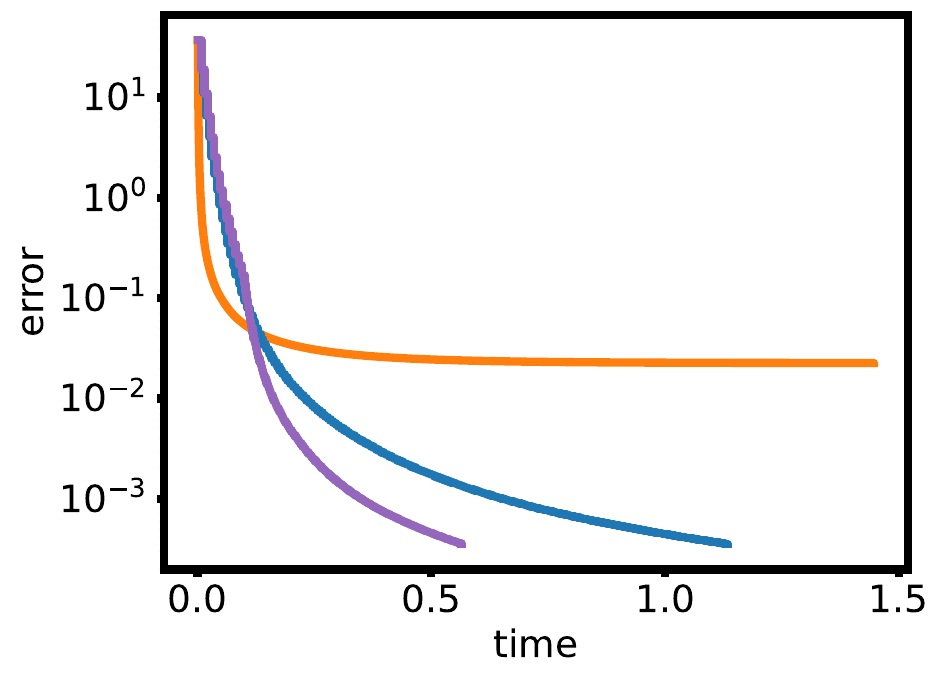}
    \caption{The accelerated EAGLE for distributed computing largely outperforms EAGLE in runtime. In this data generation, the data diversity $\alpha\approx 0.1$ is relatively low. Mean over 50 runs is reported, $n=d=240$, $n'=d'=2$, $\kappa(A)=100$}
    \label{fig:fast_EAGLE}
    \end{minipage}
\end{figure}

\begin{figure}
    \centering
    \begin{subfigure}{.32\textwidth}
        \includegraphics[width=\linewidth]{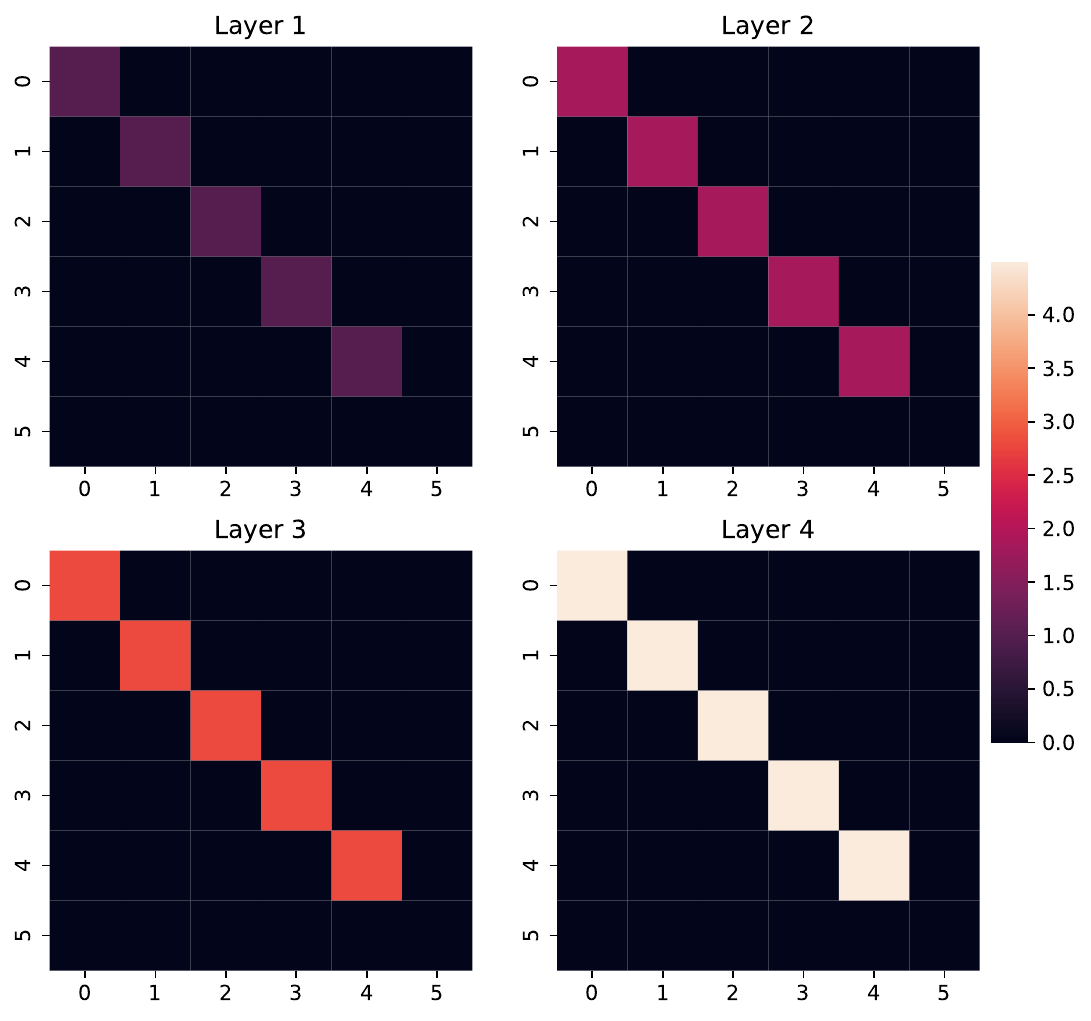}
        \caption{$W_{QK,\ell}$}
    \end{subfigure}
    \begin{subfigure}{.32\textwidth}
        \includegraphics[width=\linewidth]{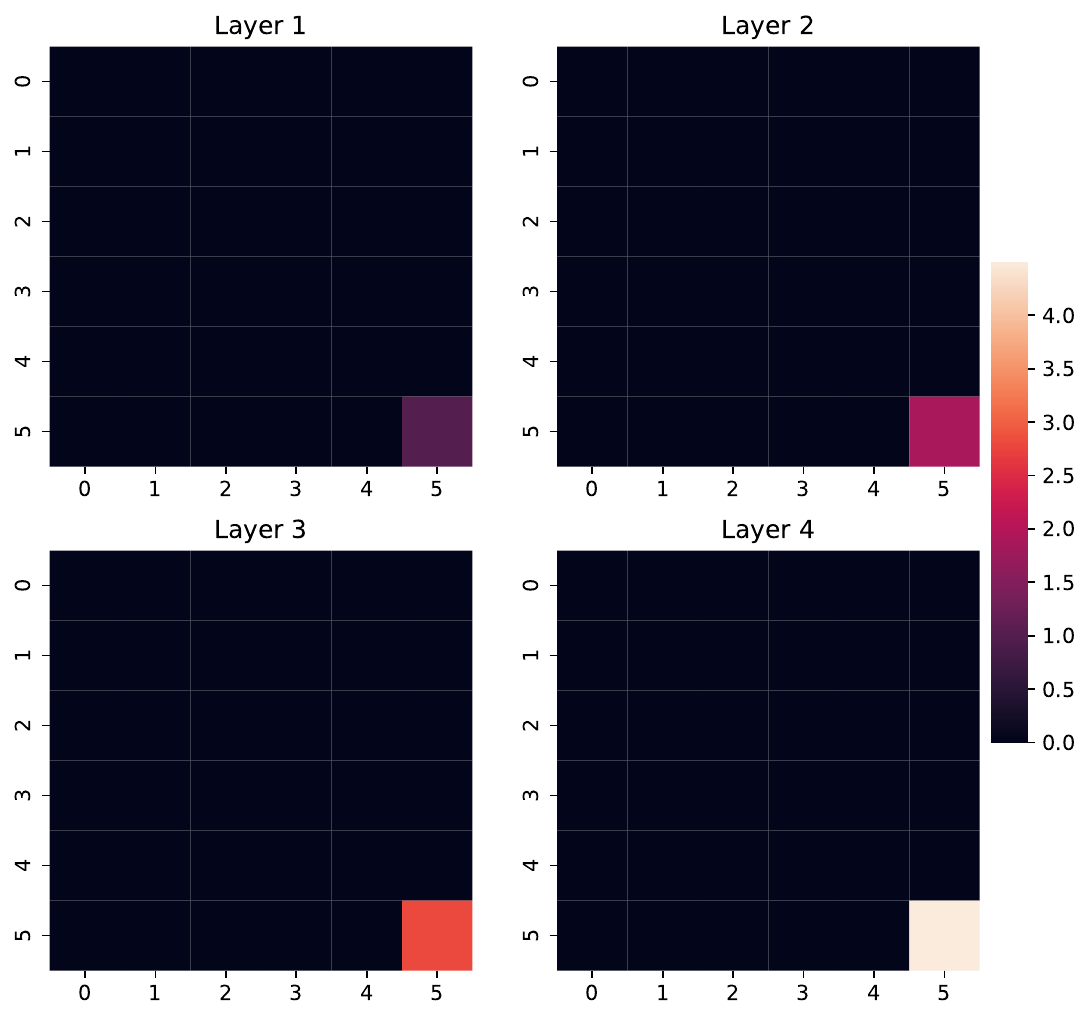}
        \caption{$W_{B,\ell}$}
    \end{subfigure}
    \begin{subfigure}{.32\textwidth}
        \includegraphics[width=\linewidth]{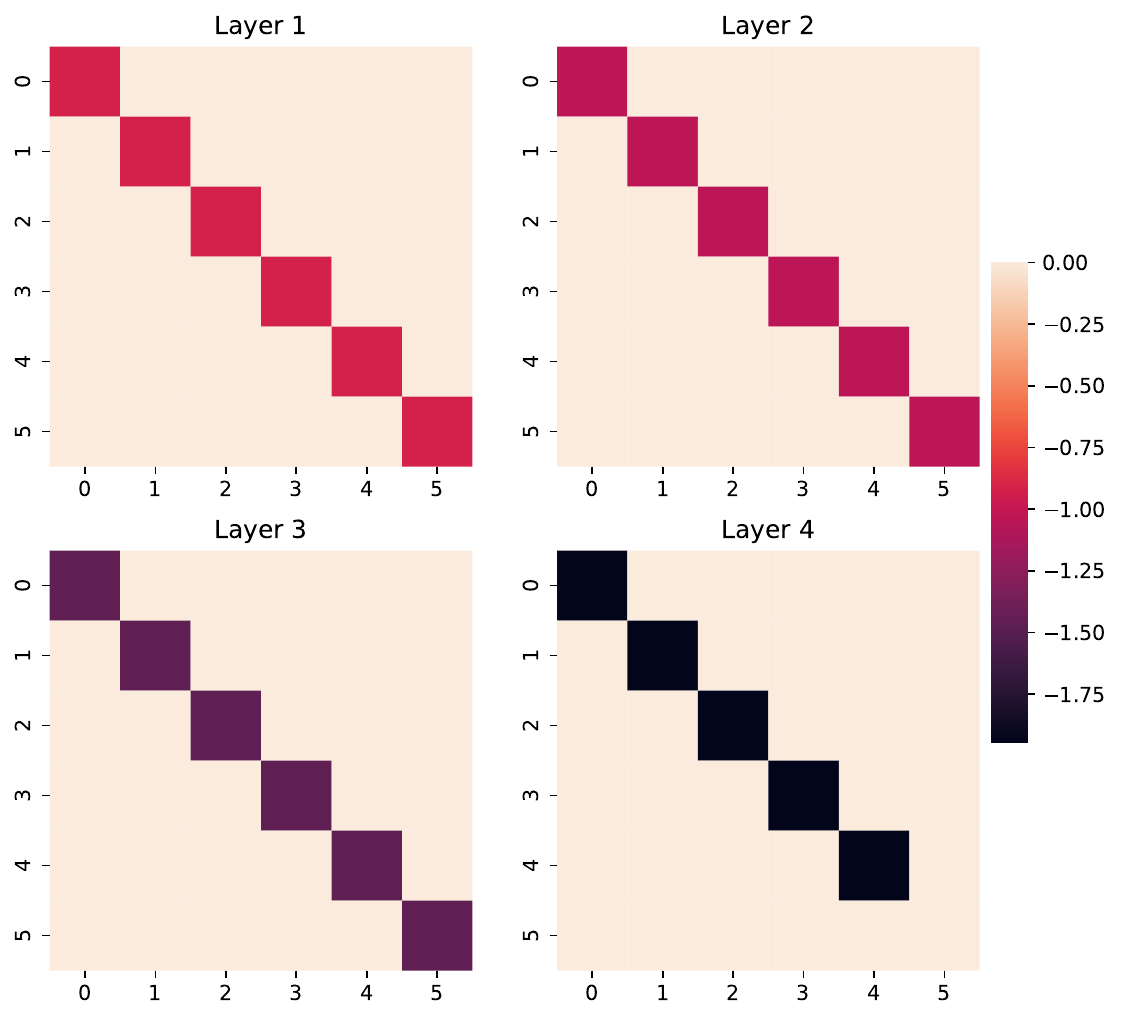}
        \caption{$W_{WP,\ell}$}
    \end{subfigure}
    \caption{Learned transformer weights in the estimation setting. The transformer architecture is modified and a bias term is introduced: $Z_{\ell+1}=Z_{\ell}+\bigl((Z_\ell W_{QK,\ell}+W_{B,\ell})Z_\ell^{\top} \odot M_\ell\bigr) Z_\ell\,W_{VP,\ell}$ where $W_B\in\mathbb R^{n\times d}$ is a rank-2 bias term whose first $n-1$ rows are identical by design.}
    \label{fig:weights_est}
\end{figure}
\label{app:central_details}
Beyond the standard prediction task in which the hidden block \( D \) is to be recovered, one may also be interested in estimating the matrix \( W^\star = B A (AA^\top)^\dagger \). In the least-squares variant of our Nyström formulation—where \( D \) reduces to a scalar \( 1 \times 1 \) block—this corresponds to solving the classical least-squares problem. While it is straightforward to recover \( W^\star \) by setting \( C = I \) in EAGLE (Algorithm~\ref{alg:unified}), so that the prediction becomes \( B A (AA^\top)^\dagger C = B A (AA^\top)^\dagger = W^\star \), this approach is inefficient in terms of memory and runtime. It is thus natural to ask whether the estimation task can be achieved more efficiently, with reduced overhead.

To investigate this question, we consider an adapted transformer architecture trained on a matrix completion problem posed as
\[Z = \begin{bmatrix}
A^\top & B^\top \\
? & 0
\end{bmatrix},\]
where the transformer is tasked with filling the missing block ``\( ? \)'' with \( W^\star = B A (AA^\top)^\dagger \). We find that a minor architectural modification is necessary to achieve competitive performance on this estimation task. Specifically, we introduce two biases to the query transformation. The first bias is applied when updating the first \( n \) tokens, and the second is applied when updating the last \( n' \) tokens. 

We train a transformer on the least-squares setting (i.e., with \( D \in \mathbb{R}^{1 \times 1} \)) and extract the learned weights, shown in Figure~\ref{fig:weights_est}. This leads to a modified estimation version of EAGLE, defined by the following iterative updates:
\[
\begin{aligned}
A_{\ell+1} &= A - \eta \rho_\ell A_\ell A_\ell^\top A_\ell, \\
B_{\ell+1} &= B - \eta \rho_\ell B_\ell A_\ell^\top A_\ell, \\
W_{\ell+1} &= W_\ell - \gamma \rho_\ell (W_\ell A_\ell - B_\ell) A_\ell^\top.
\end{aligned}
\]
These updates can be adapted to all computational settings considered in the main body. Importantly, this iteration is theoretically equivalent to the original EAGLE algorithm in the sense that \( W_k C = D_k \), where \( W_k \) evolves according to the update rule above and \( D_k \) evolves according to EAGLE. This equivalence can be formally shown by expanding the recursive updates and applying a telescoping argument, as in the proof in \S~\ref{appx:theory}.

Figure~\ref{fig:estimation} provides empirical evidence that the proposed estimation iteration (with the same parameter setup $\rho_\ell=1/(3\|A_\ell\|_2^2)$, $\eta=1$, $\gamma=3$) converges as expected and recovers the same second order convergence as EAGLE.

\section{Further Details on Evaluation of \methodname}\label{appx:eval_details}

\subsection{Details on the implementation of baselines}
We provide additional implementation details for the baseline methods used in comparison with EAGLE.
 All three implementations compute
    an approximation of \(X^\star=BA^{\dagger}\) and return
    \(D\approx X^\star C\).
    
    \subparagraph{Exact closed-form baseline (\texttt{np.linalg.lstsq})} This native numpy impementation of the a least-squares solver is base on a highly optimized LAPACK routine. It determines solution to the over/under-determined linear system
            \(\min_X\|XA-B\|_F^2\).  The closed-form minimiser is
            \(X^\star = BA (AA^\top)^\dagger\). It returns $D=X^\star C$.

    \subparagraph{Gradient Descent.} We minimize the reconstruction loss
    \[
    f(X) = \tfrac{1}{2} \sum_{i=1}^m \|XA - B\|_F^2.
    \]
    
    \textit{Parameters.} The learning rate is set to \( \eta_\ell = 1 / (\sigma^2_{\max}(A)) \), ensuring monotonic decrease of \( f \). Spectral norms are estimated via two power iterations, which incur negligible memory and runtime overhead. To avoid instabilities caused by near-singular matrices, we add a fixed ridge penalty \( \lambda = 10^{-3} \) when $A$ is low-rank, modifying the gradient as \( G \leftarrow G + \lambda X \). When $A$ is full rank, we use $\lambda=0$.
    
    \textit{Iteration and Variants.} Gradient descent extends naturally to distributed or stochastic data access variants. Each worker computes either exact or column-sketched gradients ($\tilde A_\ell^i$ is the data on worker $i$ with subsampled columns)
    which are aggregated on a central node, followed by a synchronous gradient step ($\eta_k=1/\max_i(\sigma_{\max}^2(\tilde A_{k}^i))$).
    This yields the following iterative updates:
    \[
    \begin{aligned} 
    G_k &= \sum_{i=1}^{m} (X_k \tilde A^{i}_k - \tilde B^{i}_k) \tilde A^{i,\top}_k + \lambda X_k \quad &&\text{(gradient + ridge)} \\[3pt]
    X_{k+1} &= X_\ell - \frac{\eta_k}{m} G_k \quad &&\text{(synchronous update)}
    \end{aligned}
    \]
    
    Training stops when the iterate $X_\ell$ reaches a predefined tolerance or after a fixed maximum number of iterations.

    \subparagraph{Conjugate Gradient Method.}

    We implement the Conjugate Gradient (CG) method to solve the normal equations arising in Nyström estimation. The objective is to find \( X  \) minimizing the empirical loss \( \tfrac{1}{2} \sum_{i=1}^m \|X A - B\|_F^2 \). We define the Gram matrix \( G = A A^\top \) and the right-hand side \( B = B A^\top \), and solve the system \( X G = B \) using CG.
    
    \textit{Iterations.} The standard CG updates are as follows:
    \begin{align*}
    R_k &= B - X_k G, \qquad \text{if } k = 0:\; P_0 = R_0,\; rr_0 = \langle R_0, R_0 \rangle \\[4pt]
    \alpha_k &= rr_k / \langle P_k,\, P_k G \rangle, \qquad X_{k+1} = X_k + \alpha_k P_k \\[4pt]
    R_{k+1} &= R_k - \alpha_k P_k G, \qquad rr_{k+1} = \langle R_{k+1}, R_{k+1} \rangle \\[4pt]
    \beta_k &= rr_{k+1} / rr_k, \qquad P_{k+1} = R_{k+1} + \beta_k P_k
    \end{align*}
    The algorithm halts when \( \|X_k C - X_{k-1} C\|_F \) falls below a pre-specified tolerance or after a fixed number of iterations.
    
    \textit{Limitations.} The CG method is not directly suited for distributed or stochastic settings. In distributed environments, computing \( G \) and \( B \) requires sharing all local data across machines, incurring communication costs that exceed the desired \( O(d) \) complexity. In stochastic settings, CG requires a consistent system matrix at every iteration to guarantee convergence, which is not available when data is sampled independently at each step. Empirically, CG under stochastic updates exhibits unstable and non-convergent behavior.

\subsection{Accelerated EAGLE for distributed setting}

In the distributed setting, convergence is impeded by low data diversity, characterized by \( \alpha \ll 1 \). Recall that the continuous conditioning performed by EAGLE operates exclusively on per-machine data. As a result, the condition number of the global matrix \( A \) is lower bounded by \( \alpha^{-1} \), even if the data on each individual machine is perfectly conditioned.

Once this bottleneck is reached, further updates to \( A^\mu \)—the most computationally intensive component of EAGLE—cease to be effective. This motivates the question of whether runtime can be reduced by terminating updates to \( A^\mu \) and \( B^\mu \) once per-machine conditioning has converged.

To this end, we implement an accelerated version of EAGLE for the distributed setting. At each iteration, we evaluate the criterion \( \min_\mu \|I - A_\ell^{\mu,\top} A_\ell^\mu\|_F^2 < 10^{-10} \), and halt updates to \( A^\mu \) and \( B^\mu \) for all \( \mu \) once this threshold is satisfied. This condition ensures that per-machine data is sufficiently well-conditioned.

Figure~\ref{fig:fast_EAGLE} reports the runtime performance of this accelerated variant. As expected, the modified version achieves faster execution compared to standard EAGLE. We note that the number of iterations required remains unchanged; the acceleration only affects wall-clock efficiency. The theoretical guaranteed of the accelerated EAGLE remain unchanged as we ensure that $\kappa(A^\mu_\ell)= 1$ (to machine precision) before freezing the updates to $A^\mu$ and $B^\mu$.

\subsection{Ablations for EAGLE}

\begin{figure}
    \centering
    \includegraphics[width=.6\textwidth]{./figures/sims/legend_performance.pdf}\\
    \begin{subfigure}{.3\textwidth}
        \includegraphics[width=\textwidth]{./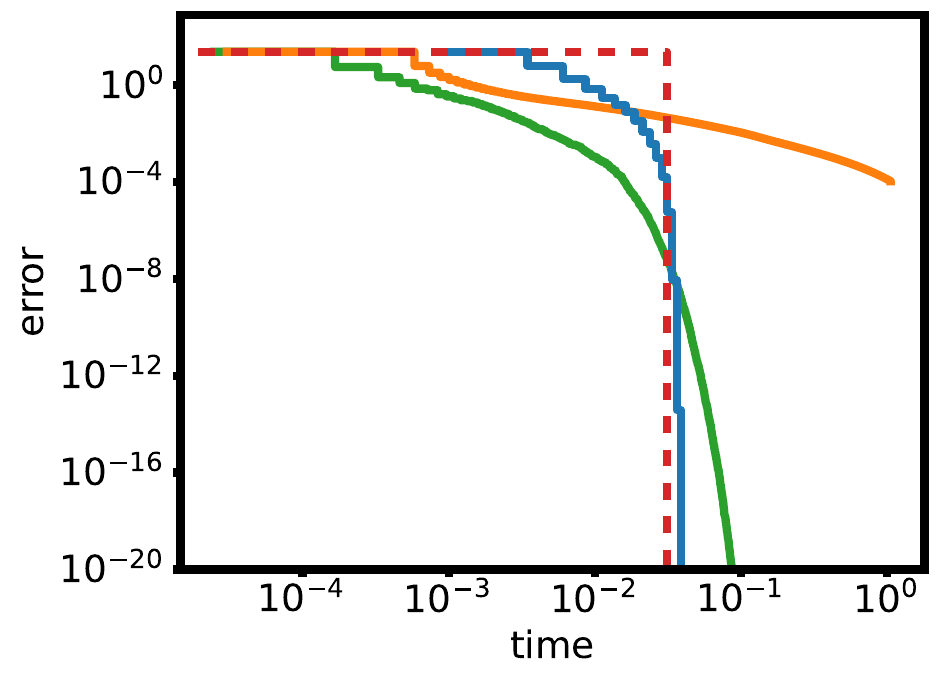}
        \caption{SVD construction}
    \end{subfigure}\hfil
    \begin{subfigure}{.3\textwidth}
        \includegraphics[width=\textwidth]{./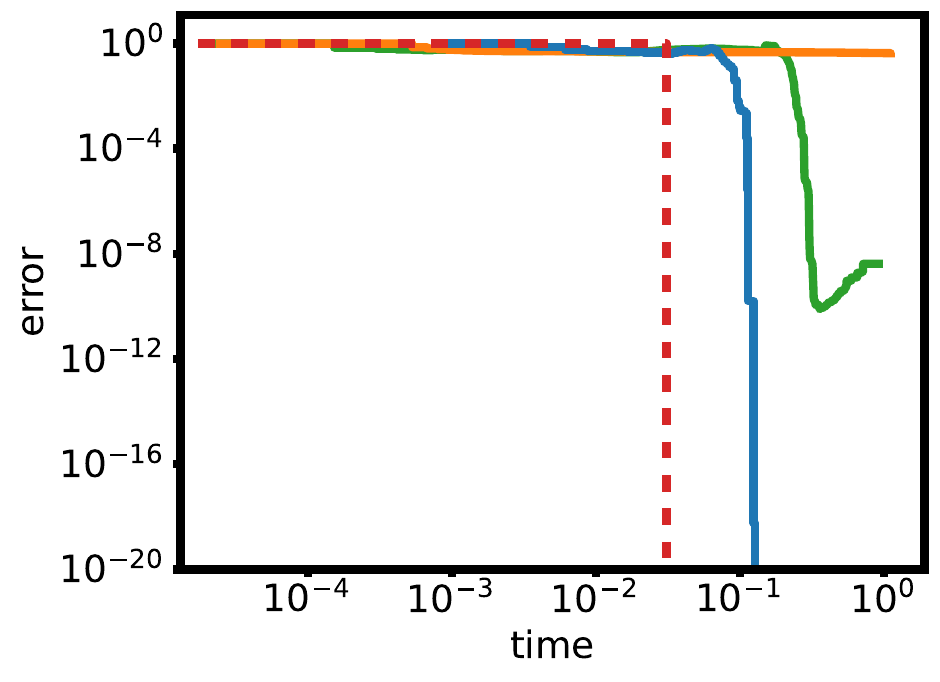}
        \caption{gaussian}
    \end{subfigure}\hfil
    \begin{subfigure}{.3\textwidth}
        \includegraphics[width=\textwidth]{./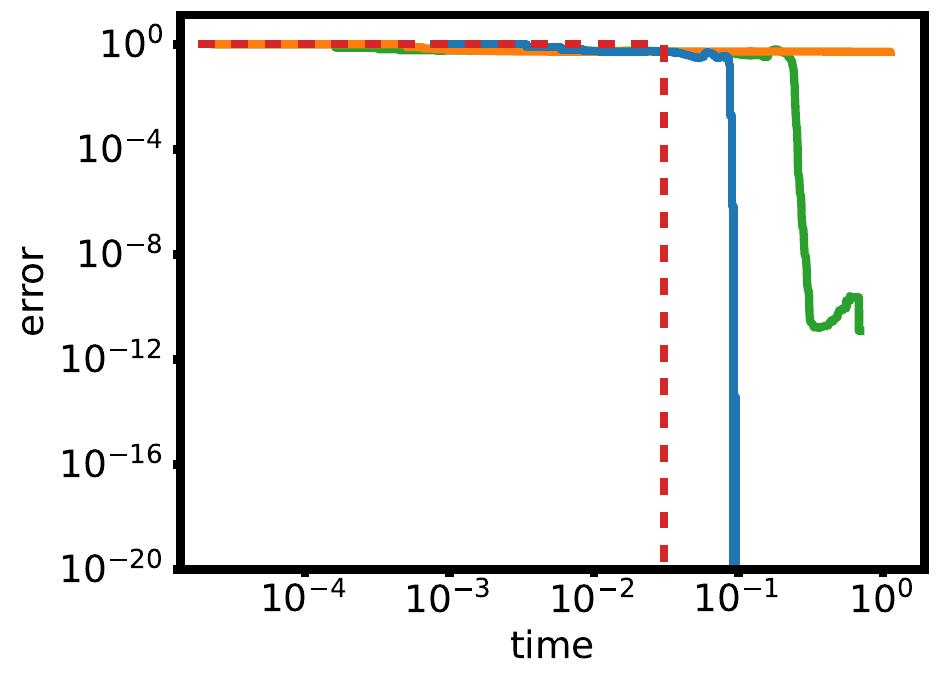}
        \caption{student t}
    \end{subfigure}\hfil
    \begin{subfigure}{.3\textwidth}
        \includegraphics[width=\textwidth]{./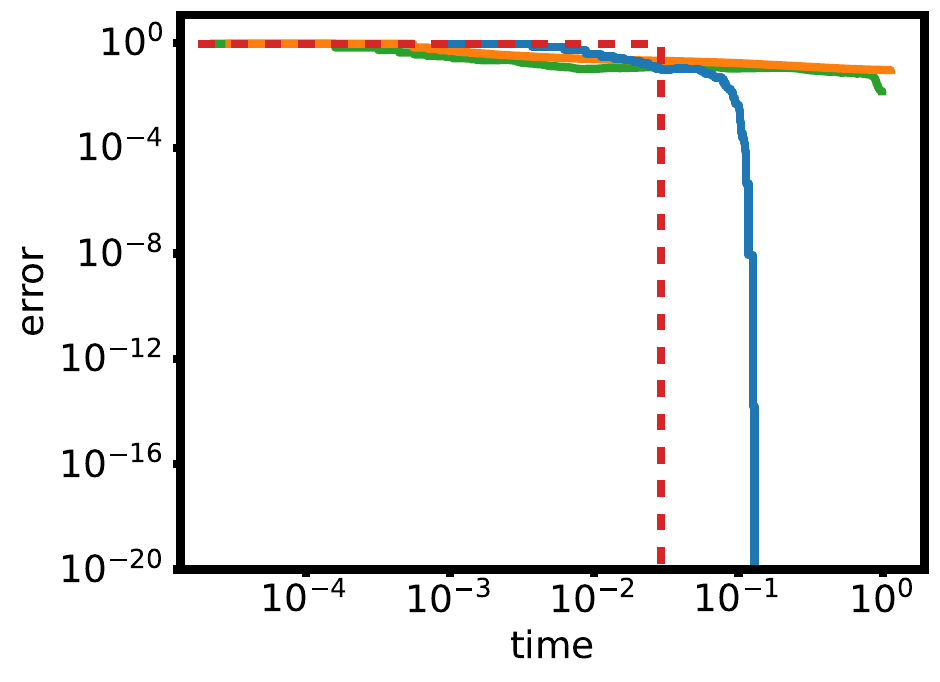}
        \caption{correlated gaussian}
    \end{subfigure}\hfil
    \begin{subfigure}{.3\textwidth}
        \includegraphics[width=\textwidth]{./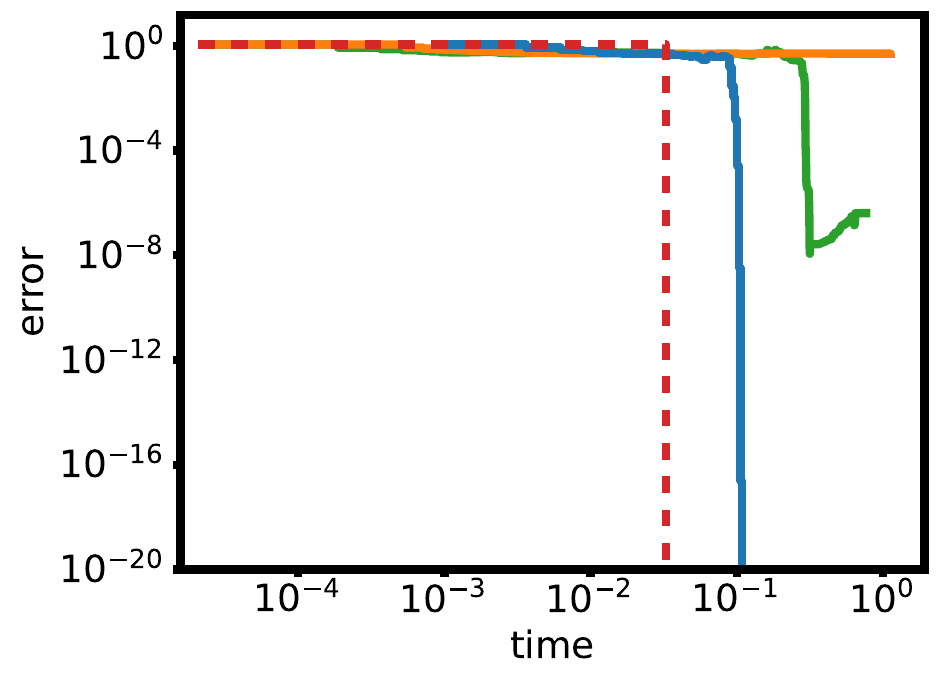}
        \caption{rademacher}
    \end{subfigure}\hfil
    \begin{subfigure}{.3\textwidth}
        \includegraphics[width=\textwidth]{./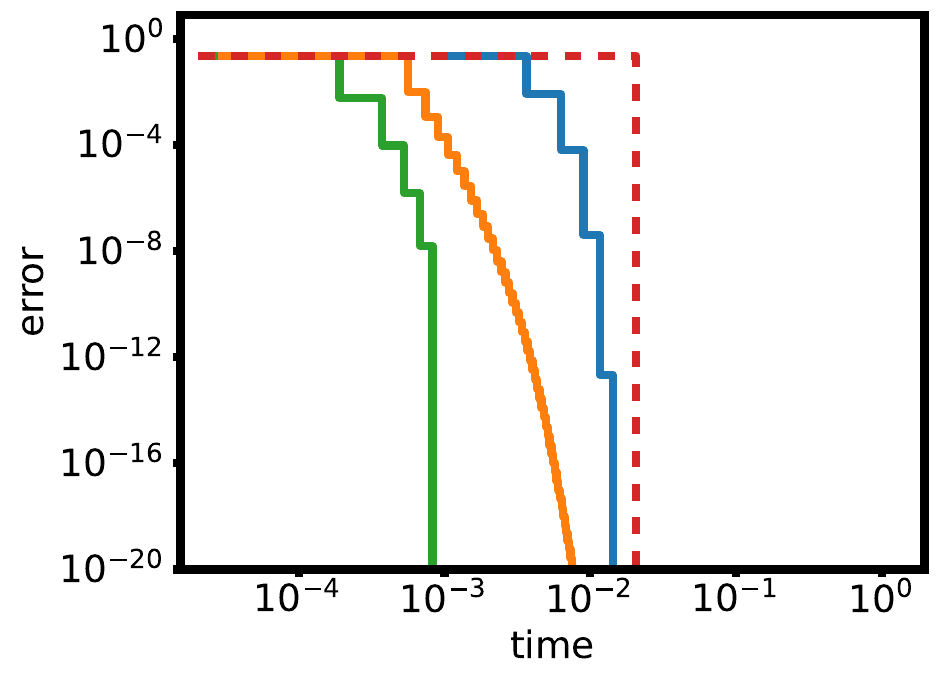}
        \caption{block}
    \end{subfigure}
    \caption{Performance with different data distribution, $n=d=240$, $n'=d'=2$, rank $240$.}
    \label{fig:ablation_distribution}
\end{figure}

\begin{figure}
    \centering
    \includegraphics[width=.6\textwidth]{./figures/sims/legend_performance.pdf}\\
    \begin{subfigure}{.3\textwidth}
        \includegraphics[width=\linewidth]{./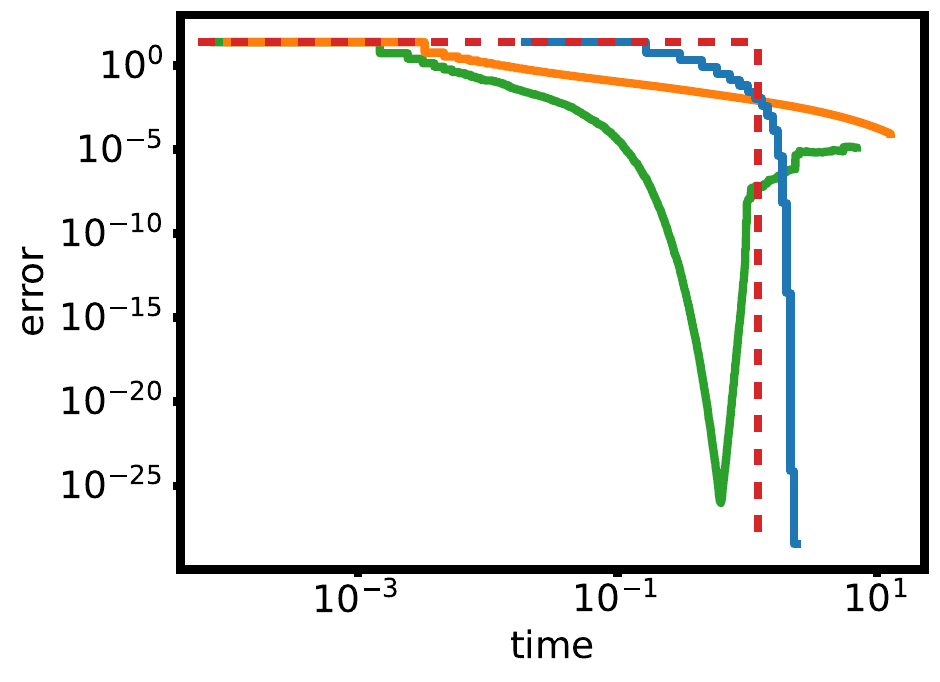}
        \caption{rank 240}
    \end{subfigure}
    \begin{subfigure}{.3\textwidth}
        \includegraphics[width=\linewidth]{./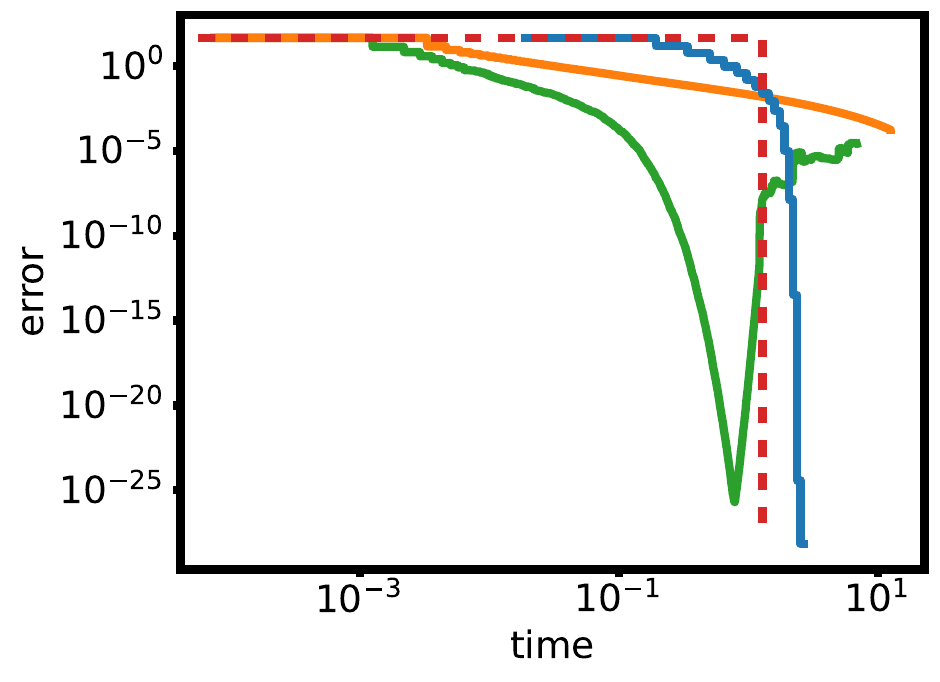}
        \caption{rank 480}
    \end{subfigure}\\
    \begin{subfigure}{.3\textwidth}
        \includegraphics[width=\linewidth]{./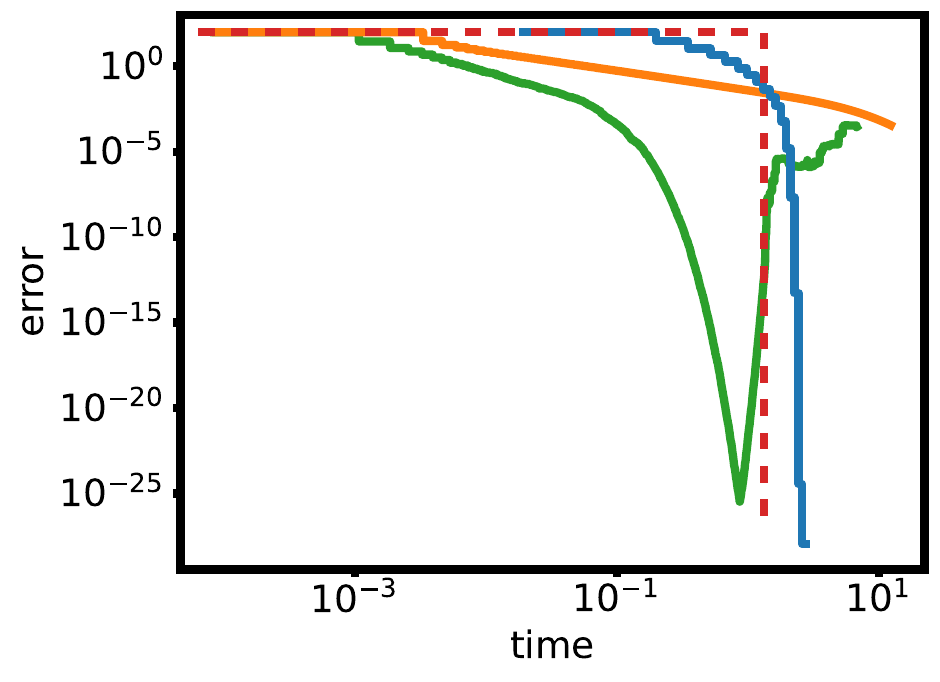}
        \caption{rank 940}
    \end{subfigure}
    \begin{subfigure}{.3\textwidth}
        \includegraphics[width=\linewidth]{./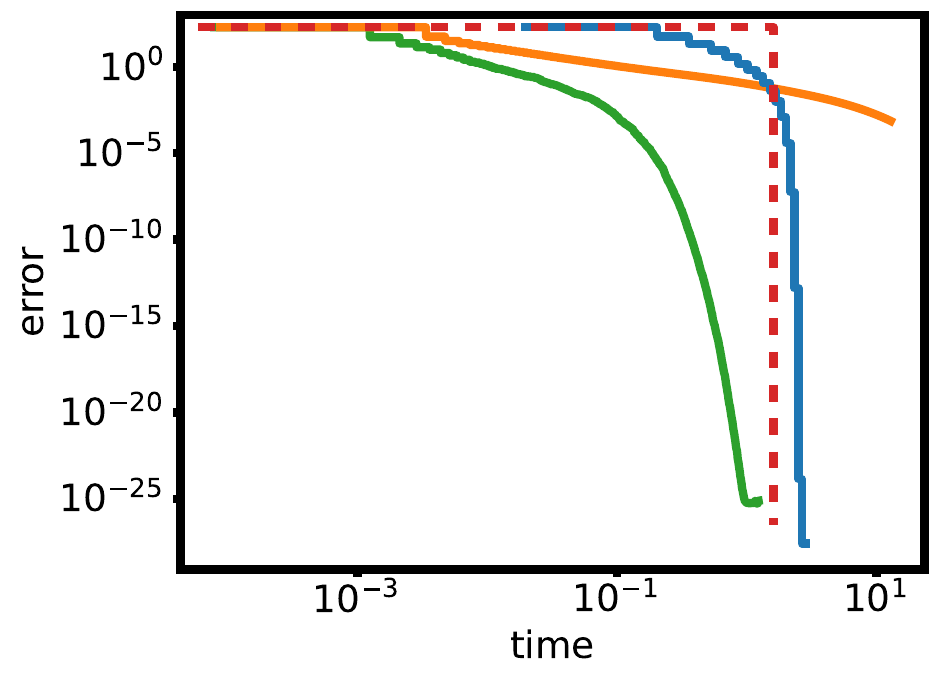}
        \caption{rank 1920}
    \end{subfigure}
    \caption{Performance on larger problems, $n=d=1920$, $n'=d'=2$.}
    \label{fig:ablation_large}
\end{figure}

\begin{figure}
    \centering
    \includegraphics[width=.6\textwidth]{./figures/sims/legend_performance.pdf}\\
    \includegraphics[width=0.3\linewidth]{./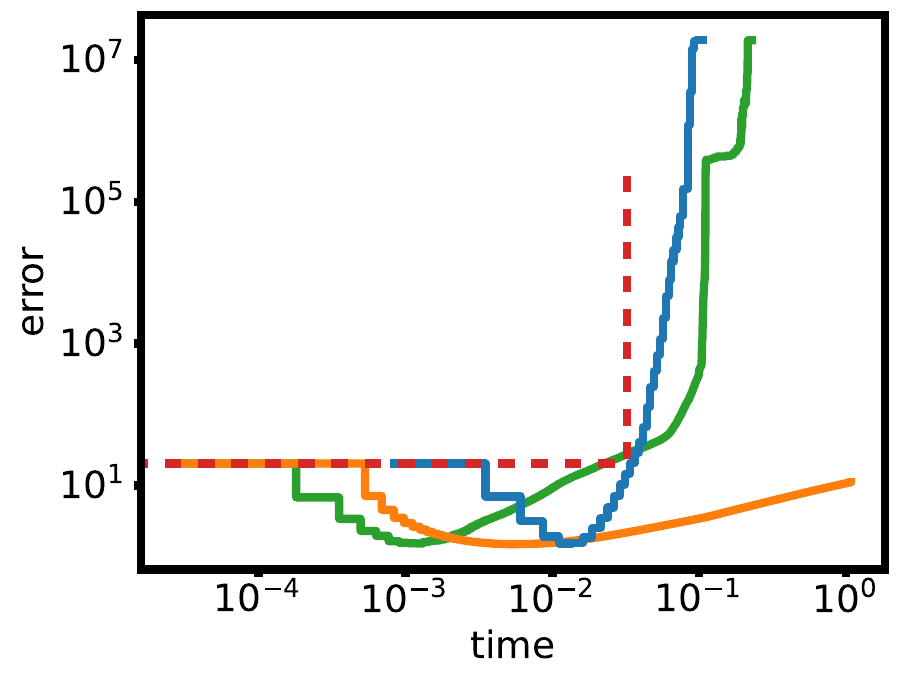}
    \includegraphics[width=0.3\linewidth]{./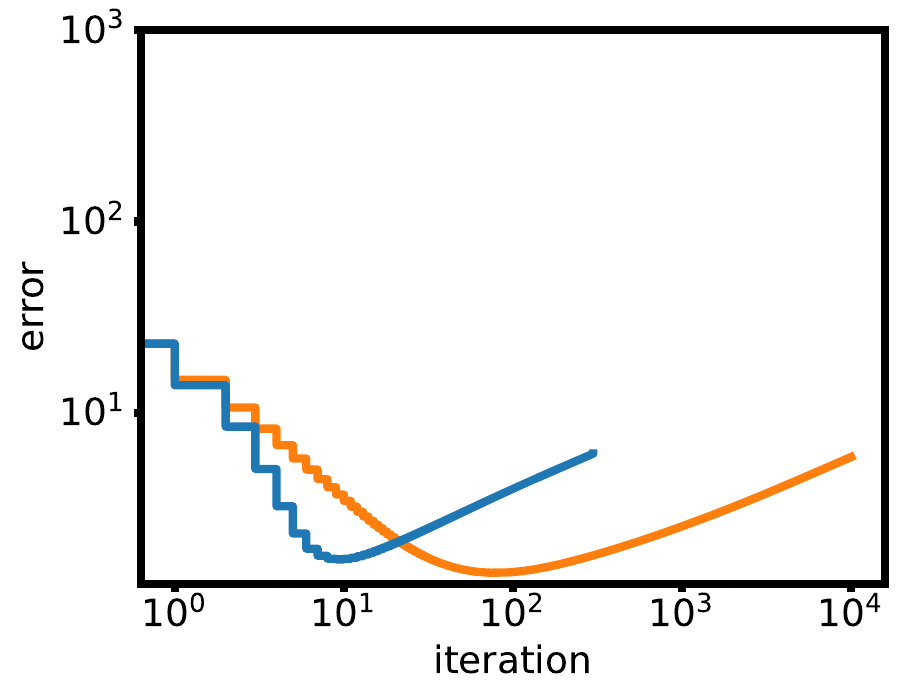}
    \includegraphics[width=0.3\linewidth]{./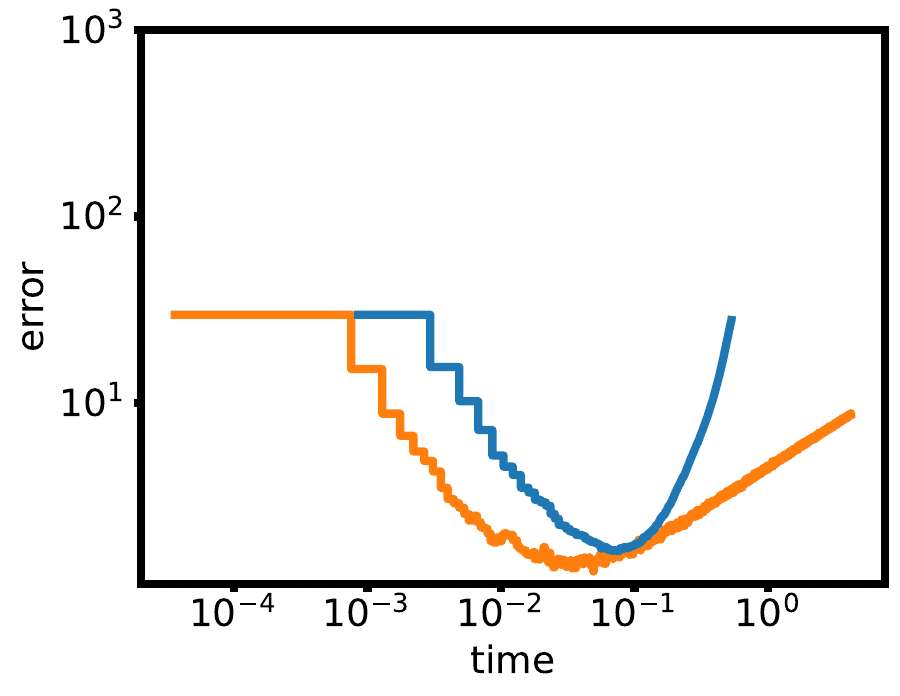}
    \caption{Performance with i.i.d.\ Gaussian noise (standard deviation $\sigma=0.01$) in the unconstrained (left), distributed (center) and stochastic data access (right) settings.}
    \label{fig:ablation_noise}
\end{figure}

\begin{figure}
    \centering
    \includegraphics[width=0.4\linewidth]{./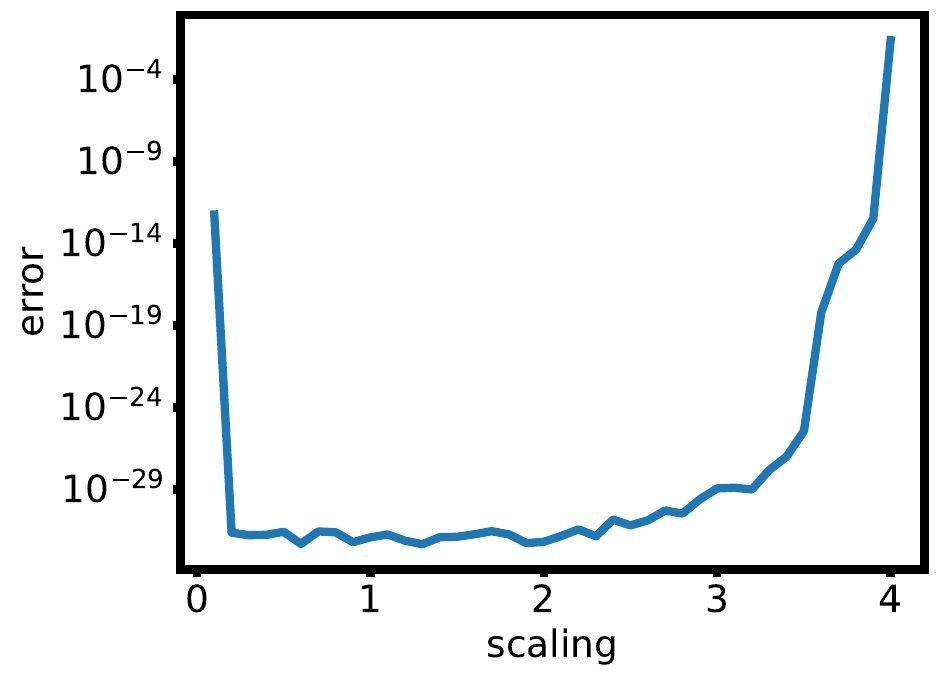}
    \includegraphics[width=0.4\textwidth]{./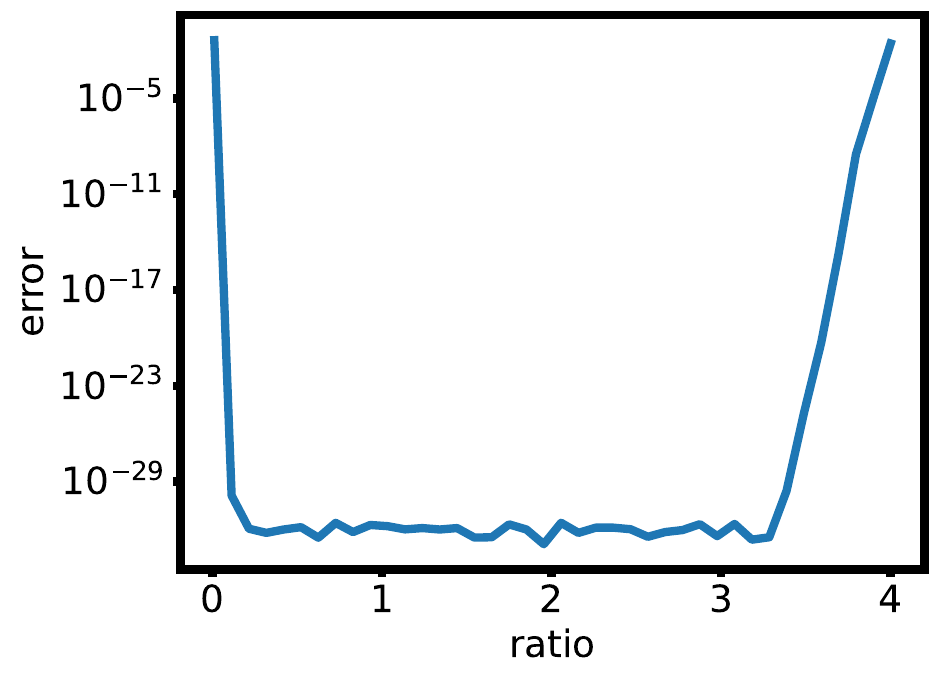}
    \caption{Best performance of EAGLE in 200 iterations as the step sizes are varied. Left: $\gamma$ and $\eta$ are both scaled by a common factor. Right: $\gamma$ is scaled by $\sqrt{\text{ratio}}$ while $\eta$ is scaled by $1/\sqrt{\text{ratio}}$ such that $\gamma/\eta$ is scaled by $\mathrm{ratio}$.}
    \label{fig:ablation_step_size}
\end{figure}

\paragraph{Data distribution.} We perform ablations on various real-world data settings. As in the main body, we place ourselvels in the setting $n=d=240$, $n'=d'=2$ and $r=240$. 
    \begin{description}[leftmargin=0cm, style=sameline]

    \item[\normalfont\textit{SVD construction}.]  Sample
          $A_{ij},B_{ij}$ orthogonal and set 
          $Z = A\Sigma B^\top/\sqrt{r}$ with $\Sigma$ diagonal with entries log-uniform in $[1e-2,1]$.  
          This is the reference distribution in the simulation part.

    \item[\normalfont\textit{Gaussian}.]  Sample
          $A_{ij},B_{ij}\sim \mathcal{N}(0,1)$ and set 
          $Z = AB^\top/\sqrt{r}$.  
          Gaussian data is a widely made modeling assumption, and this is the distribution the transformer was trained on.

    \item[\normalfont\textit{Student--$t_\nu$ factors}.]  Sample
          $A_{ij},B_{ij}\sim t_\nu/\!\sqrt{\nu/(\nu-2)}$ for $\nu=4$ and set 
          $Z = AB^\top/\sqrt{r}$.  
          Heavy-tailed entries create sporadic outliers, checking that the solver is
          stable beyond sub-Gaussian data.
    
    \item[\normalfont\textit{Correlated Gaussian factors}.]  Draw each column of $A$ and $B$
          from $\mathcal N(0,\Sigma)$ with
          $\Sigma_{ij}=\rho^{|i-j|}$ ($\rho=0.8$).  
          Strong row/column correlations are typical in spatio-temporal panels and
          challenge methods assuming independent observations.
    
    \item[\normalfont\textit{Sparse Rademacher factors}.]  With sparsity $p=0.1$, take
          $A_{ij}=s_{ij}\xi_{ij}/\sqrt{p}$ where
          $\xi_{ij}\sim\mathrm{Bernoulli}(p)$, $s_{ij}\in\{\pm1\}$ (analogous for
          $B$), then $Z=AB^\top/\sqrt{r}$.  
          Entry-wise sparsity yields highly \emph{coherent} singular vectors—a
          worst-case regime for many theoretical guarantees.
    
    \item[\normalfont\textit{Block / clustered factors}.]  Assign each of the $d$ rows to one
          of $k$ clusters ($k=5$),
          $A\in\{0,1\}^{d\times k}$ is one-hot, and
          $B\in \mathbb R^{\,n\times k}$ holds cluster centroids; set
          $Z=AB^\top/\sqrt{k}$.  
          Produces piece-wise-constant structure seen in recommender systems,
          violating global incoherence yet preserving low rank.
    
    \end{description}

\paragraph{Larger systems.} We increase the size of the data matrix \( X \) to \( n = d = 1920 \) and \( n' = d' = 2 \); results are shown in Figure~\ref{fig:ablation_large}. These experiments further reveal that the Conjugate Gradient method does not converge reliably when applied to low-rank matrices. As a result, its performance is omitted from the main results in the low-rank setting.

\paragraph{Data noise.} To assess robustness to noise, we add moderate i.i.d. Gaussian noise to the matrix \( X \) and evaluate EAGLE in all three computational settings; results are presented in Figure~\ref{fig:ablation_noise}. We observe that EAGLE exhibits the same form of implicit regularization during early iterations that is well-documented for gradient descent. Across all settings, EAGLE with early stopping achieves performance comparable to gradient descent, recovering regularized solutions that are more stable than the Nyström solution Additionally, we note that the addition of Gaussian noise to \( X \) inadvertently improves the conditioning of the matrix \( A \), effectively reducing \( \kappa(A) \).

\paragraph{Sensitivity to step sizes.} To evaluate the sensitivity of EAGLE to the choice of step sizes, we conduct a series of controlled experiments in the unconstrained setting, using a small-scale configuration with \( n = d = 15 \), \( n' = d' = 2 \), and rank 15.

First, we scale both step sizes \( \gamma \) and \( \eta \) by a common factor, as shown in Figure~\ref{fig:ablation_step_size} (left). This setup models scenarios in which the estimate of the scaling parameter \( \rho_\ell = 1 / \|A_\ell\|_2^2 \) is inaccurate. The results indicate that scaling \( \gamma \) and \( \eta \) jointly by a factor in the range \([0.2, 2]\) has minimal effect on performance.

Second, we vary the ratio \( \gamma / \eta \) to test the algorithm’s robustness to discrepancies between these two parameters, as illustrated in Figure~\ref{fig:ablation_step_size} (right). We find that EAGLE remains stable over a wide range of ratios; specifically, varying \( \gamma / \eta \) within \([0.2, 2]\) has little impact on overall performance.

\section{Theoretical Analyses}\label{appx:theory}

\subsection{Analysis of the Unconstrained or Centralized Iteration}

\newcommand{\ce}{\mathcal{E}}
\newcommand{\cv}{\mathcal{V}}
\newcommand{\lmin}{\underline{\lambda}}
\newcommand{\lmax}{\overline{\lambda}}

We begin by analysing the behaviour of Algorithm~\ref{alg:unified} in the `centralized' setting of $S = I_n, M = 1$. For simplicity, let us set $\eta_\ell = \eta \|A_\ell\|_2^{-2}$, and $\gamma_\ell = \gamma \|A_\ell\|_2^{-2}$---we will incorporate the specific structure of $\eta, \gamma$ later. Then we can write the iterations as \begin{align}\label{eqn:appx_central_iteration}
    A_{\ell+1} &= A_\ell( I - \eta_\ell A_\ell^\top A_\ell) ,\\
    B_{\ell + 1} &= B_\ell (I - \eta_\ell A_\ell^\top A_\ell), \notag \\
    C_{\ell + 1} &= ( I - \gamma_\ell A_\ell A_\ell^\top) C_\ell, \notag \\
    D_{\ell + 1} &= D_\ell + \gamma_\ell B_\ell A_\ell^\top C_\ell. \notag
\end{align}

Of course, we set $A_0, B_0, C_0$ to be the observed blocks of the input matrix $X$, and $D_0 = 0$. 

\textbf{\emph{Useful Definitions.}} We will analyze these iterations through the follow two objects
\begin{definition}
    For $\ell \ge 0,$ define the `signal energy' $\ce_\ell,$ and the `signal correlation' $\cv_\ell$ as \[  \ce_\ell := A_\ell A_\ell^\top \in \mathbb{R}^{d \times d} \textit{ and } \cv_\ell := B_\ell A_\ell^\top \in \mathbb{R}^{d' \times d}.\]
\end{definition} We further note the critical relationship that \[ B_0 = W_* A_0 \implies  \cv_0 = W_* \ce_0) ,\] where $W_*$ is the Nystr\"{o}m parameter for the data matrix $X$ (see \S\ref{appx:nystrom_approximation}. An immediate consequence of the iterations above is the following dynamics.
\begin{lemma}\label{lemma:appx_central_ev_iteration}
    Under the iterations of (\ref{eqn:appx_central_iteration}), we have the following dynamics
    \begin{align}\label{eqn:appx_central_ev_iteration}
        \ce_{\ell + 1} &= \ce_\ell (I - \eta_\ell \ce_\ell)^2 \\
        \cv_{\ell + 1} &= \cv_{\ell} (I - \eta_\ell \ce_\ell)^2 \notag\\
        C_{\ell + 1} &= (I - \gamma_\ell \ce_\ell) C_\ell \notag\\
        D_{\ell + 1} &= D_\ell + \gamma_\ell \cv_\ell C_\ell, \notag
    \end{align} where $I$ is the $d\times d$ identity matrix.
    \begin{proof}
        The $D_\ell, C_\ell$ iterations follow immediately by definition. For the former iterations, first observe that \begin{align*} \cv_{\ell + 1} &= B_{\ell + 1} A_{\ell + 1}^\top = B_\ell (I - \eta_\ell A_\ell^\top A_\ell) (I - \eta_\ell A_{\ell }^\top A_\ell) A_\ell^\top \\
        &= B_\ell (I - \eta_\ell A_\ell A_\ell^\top) (A_\ell^\top - \eta_\ell A_\ell^\top A_\ell A_\ell^\top) \\
        &= B_\ell (I - \eta_\ell A_\ell A_\ell^\top) A_\ell^\top ( I - \eta_\ell A_\ell A_\ell^\top) \\
        &= B_\ell A_\ell^\top ( I - \eta_\ell A_\ell A_\ell^\top) (I - \eta_\ell A_\ell A_\ell^\top) \\
        &= \cv_\ell ( I - \eta_\ell \ce_\ell)^2,\end{align*}
    where the first few inequalities arise by multiplying $A_\ell^\top$ from the right, and then factoring it out by the left, and the final is by definition. Similarly,
    \begin{align*}
        \ce_{\ell + 1} &= A_{\ell + 1} A_{\ell + 1}^\top  = A_{\ell} (I - \eta_\ell A_\ell^\top A_\ell) ( I - \eta_\ell A_\ell^\top A_\ell) A_\ell^\top\\
        &= A_\ell A_{\ell}^\top (I - \eta_\ell A_\ell A_\ell^\top)^2\\
        &= \ce_\ell (I - \eta_\ell \ce_\ell)^2,
    \end{align*}
    where we reuse this transfer of $A_\ell^\top$ from the right to the left, and then use the definition of $\ce_\ell$.
    \end{proof}
\end{lemma}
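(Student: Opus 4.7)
The lemma is a purely algebraic identity about the updates in (\ref{eqn:appx_central_iteration}), and the plan is to verify each of its four lines by direct substitution, using a single push-through identity. The recursions for $C_{\ell+1}$ and $D_{\ell+1}$ are immediate: the first is just the third line of (\ref{eqn:appx_central_iteration}) with the definition $\ce_\ell = A_\ell A_\ell^\top$ inserted, and the second is the fourth line with $\cv_\ell = B_\ell A_\ell^\top$ inserted. I would dispatch these in one line each.

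The interesting work lies in the $\ce_{\ell+1}$ and $\cv_{\ell+1}$ recursions, and the single tool I need is the elementary push-through identity $A\,p(A^\top A) = p(A A^\top)\,A$, valid for any polynomial $p$ by expanding term by term and using $A (A^\top A)^k = (A A^\top)^k A$ iteratively. Specializing to the degree-one polynomial $p(x) = 1 - \eta_\ell x$ and taking transposes yields the one workhorse identity $(I - \eta_\ell A_\ell^\top A_\ell)\,A_\ell^\top = A_\ell^\top\,(I - \eta_\ell \ce_\ell)$, which is the only rewriting rule the calculations will use.

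From here the two remaining recursions are mechanical. In both cases the quantity to simplify has the shape $X\,(I - \eta_\ell A_\ell^\top A_\ell)^2\,A_\ell^\top$, with $X = A_\ell$ for $\ce_{\ell+1} = A_{\ell+1} A_{\ell+1}^\top$ and $X = B_\ell$ for $\cv_{\ell+1} = B_{\ell+1} A_{\ell+1}^\top$. I would apply the push-through identity twice to migrate the trailing $A_\ell^\top$ past both $(I - \eta_\ell A_\ell^\top A_\ell)$ factors, converting each factor into $(I - \eta_\ell \ce_\ell)$ acting on the right, which leaves $X A_\ell^\top\,(I - \eta_\ell \ce_\ell)^2$. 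Substituting $X = A_\ell$ collapses the prefix to $\ce_\ell\,(I - \eta_\ell \ce_\ell)^2$, and $X = B_\ell$ collapses it to $\cv_\ell\,(I - \eta_\ell \ce_\ell)^2$, matching the claim.

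There is no genuine obstacle: the whole argument is two applications of one identity in each of the two nontrivial cases, with the only thing to track being the direction in which factors are being moved across noncommuting matrices. The structural payoff of the lemma, despite the elementary proof, is considerable for the subsequent analysis: it shows that $\ce_\ell$ evolves by matrix functional calculus of $\ce_\ell$ itself, so $\{\ce_\ell\}$ is simultaneously diagonalizable in the eigenbasis of $\ce_0$ and its spectrum obeys the scalar recursion $\lambda \mapsto \lambda(1-\eta_\ell \lambda)^2$; the $\cv_\ell$, $C_\ell$, and $D_\ell$ updates are one-sided multiplications by polynomials in this same operator, so the full joint dynamics of Algorithm~\ref{alg:unified} reduce to tracking scalars along the fixed eigendirections of $\ce_0$, which is what the proof of Theorem~\ref{thm:central} will exploit.
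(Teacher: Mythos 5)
Your proposal is correct and uses essentially the same argument as the paper: both hinge on the push-through identity $(I - \eta_\ell A_\ell^\top A_\ell) A_\ell^\top = A_\ell^\top (I - \eta_\ell A_\ell A_\ell^\top)$ to migrate $A_\ell^\top$ across the two polynomial factors, reducing $A_{\ell+1}A_{\ell+1}^\top$ and $B_{\ell+1}A_{\ell+1}^\top$ to $\ce_\ell(I-\eta_\ell\ce_\ell)^2$ and $\cv_\ell(I-\eta_\ell\ce_\ell)^2$ respectively. You simply package the rewriting as a named lemma applied twice, whereas the paper performs the same move by explicit distribution; the content is identical.
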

Note from the above expression that for any $\ell' > \ell, \ce_{\ell'}$ is a polynomial in $\ce_\ell$. As a result, $\ce_\ell$ and $\ce_{\ell'}$ commute for all pairs $(\ell, \ell')$. 

\textbf{\emph{Telescoping the Error.}} For the sake of conciseness in the further expressions, we further introduce the following notation. \begin{definition}
    Define $M_\ell := \prod_{l < \ell} (I - \eta_l \ce_l)^2$ and $N_\ell^\top := \prod_{l < \ell} (I - \gamma_l \ce_l)$, where we use the convention that \[ \prod_{l < \ell} U_l = U_1 U_2 \cdots U_{\ell - 1}, \] and $M_0 = N_0 = I$.   
\end{definition}
By Lemma~\ref{lemma:appx_central_ev_iteration}, we have $\ce_\ell = \ce_0 M_\ell$ and $\cv_\ell = \cv_0 M_\ell$. Further, $C_{\ell} = N_\ell C_0$.\footnote{Note that the order of multiplication is flipped here, which follows notationally since we defined the transpose $N_\ell^\top$ above. Of course, all the $\ce_l$ commute, so this is not very important in this case, but this subtle distinction will matter more in subsequent analysis.} 

This leads to the following estimate of the value of $D_\ell$.
\begin{lemma}\label{lemma:central_prediction_telescope}
    Let $W_*$ be the Nystr\"{o}m regression parameter for the data $(A,B)$. For any $\ell \ge 0,$ we have \[ D_\ell = W_* (I - N_\ell) C_0.\]
    \begin{proof}
        Since $D_0 = 0,$ we have \begin{align*} D_\ell &= \sum_{l < \ell} \gamma_l \cv_l C_l = \cv_0 \left( \sum_{l < \ell} \gamma_l M_l N_l  \right) C_0\\
        \\ &= W_* \ce_0 \left( \sum_{l < \ell} \gamma_l M_l N_l  \right) C_0 = W_* \left( \sum_{l < \ell } \gamma_l \ce_0 M_l N_l \right) C_0\\
        &= W_* \left( \sum_{l < \ell}  \gamma_l \ce_l N_l \right)C_0,\end{align*} where we used the definition of $M_\ell$. But now, using the definition of $N_\ell$, notice that for any $l$, \[ N_{l+1}^\top = N_l^\top (I - \gamma_l \ce_l) \iff N_{l+1} = N_l - \gamma_l \ce_l N_l \iff \gamma_l \ce_l N_l = N_l - N_{l+1}.\] Consequently, we can write \[ D_{\ell} = W_* \left(\sum_{l < \ell} (N_l - N_{l+1})\right) C_0 = W_* (I - N_\ell) C_0. \qedhere\] %
    \end{proof}
\end{lemma}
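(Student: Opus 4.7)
The plan is to unroll the recursive definition of $D_\ell$ and recognize the resulting sum as a telescoping series in the matrices $N_l$.

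First, since $D_0 = 0$ and the recursion $D_{\ell+1} = D_\ell + \gamma_\ell \cv_\ell C_\ell$ is additive, unrolling immediately yields $D_\ell = \sum_{l < \ell} \gamma_l \cv_l C_l$. Substituting the closed forms $\cv_l = \cv_0 M_l$ and $C_l = N_l C_0$, which follow from Lemma~\ref{lemma:appx_central_ev_iteration} and the definitions of $M_l, N_l$, and then using the initialization identity $\cv_0 = W_* \ce_0$, I pull $W_*$ and $C_0$ out of the sum:
\[ D_\ell = W_* \left( \sum_{l < \ell} \gamma_l \ce_0 M_l N_l \right) C_0. \]
Because every $\ce_m$ is a polynomial in $\ce_0$, the matrix $M_l$ commutes with $\ce_0$, so $\ce_0 M_l = \ce_l$. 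The sum simplifies to $D_\ell = W_* \left( \sum_{l < \ell} \gamma_l \ce_l N_l \right) C_0$.

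Second, I establish the key identity $\gamma_l \ce_l N_l = N_l - N_{l+1}$. Taking the transpose of the one-step relation $N_{l+1}^\top = N_l^\top (I - \gamma_l \ce_l)$ and using symmetry of $\ce_l = A_l A_l^\top$ gives $N_{l+1} = (I - \gamma_l \ce_l)\, N_l$, which rearranges to the desired identity. Substituting this in collapses the sum telescopically:
\[ \sum_{l < \ell} \gamma_l \ce_l N_l = \sum_{l<\ell}(N_l - N_{l+1}) = N_0 - N_\ell = I - N_\ell, \]
and the claim follows.

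The only subtle point is the handling of multiplication order: $N_l$ is defined as the transpose of a product, so one must check that $(I - \gamma_l \ce_l)$ appears on the correct side when passing from $N_l$ to $N_{l+1}$. The symmetry of $\ce_l$ secures this. All other manipulations are algebraic and go through cleanly because the family $\{\ce_m\}$ consists of polynomials in $\ce_0$ and therefore commutes with itself and with each $M_m$ and $N_m$.
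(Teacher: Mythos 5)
Your proof is correct and follows essentially the same route as the paper's: unroll the additive recursion for $D_\ell$, substitute $\cv_l = \cv_0 M_l$, $C_l = N_l C_0$, and $\cv_0 = W_* \ce_0$, recognize $\ce_0 M_l = \ce_l$, and telescope via $\gamma_l \ce_l N_l = N_l - N_{l+1}$. Your extra care about the transpose convention and the commutativity of the $\ce_m$ family is sound, though in the centralized case the paper already notes these commute, so the order of factors in $N_\ell$ is moot here.
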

Recall that $\hat{D}_* = W_* C_0$, meaning that the error $D_\ell - \hat{D}_*$ is $W_* N_\ell C_0$. Thus, this lemma captures a basic fact: as the size of the matrix $N_\ell$ decays, the output $D_\ell$ gets closer to the target output $\hat{D}_*$. Thus, our main focus is to characterize the decay of this matrix. We shall show this by arguing that the matrices $\ce_\ell$ quickly become well-conditioned through the course of the iterations, and consequently $N_\ell$ decays quickly.

Before proceeding, note that in general, $\ce_0$ may have zero eigenvalues, which are left unchanged by the main dynamics (see below), meaning that $I-N_\ell$ may even asymptotically have a large eigenvalue corresponding to vectors in the kernel of $\ce_0$. However, we observe that $W_* = \cv_0 \ce_0^{\dagger},$ and thus any energy in  $C_0$ that lies in the kernel of $\ce_0$ is irrelevant to the prediction $\hat{D}_*$. As such, it is equivalent for us to analyse the two-norm of the matrix $\tilde{N}_\ell := (I - P_0) N_\ell ,$ where $P_0$ projects onto the kernel of $\ce_0$. Instead of this notational complication, we will henceforth just assume that $\ce_0$ is full rank (i.e., all of its eigenvalues are nonzero), and mention where changes need to be made to handle the general case. 

\textbf{\emph{Conditioning of $\ce_\ell$}.} Let us then begin with some notation: for a positive (semi-)definite symmetric matrix $\mathcal{M},$ we let $\lambda^i(\mathcal{M})$ denote its $i$th largest eigenvalue, $\lmax(\mathcal{M})$ denote its largest eigenvalue, and $\lmin(\mathcal{M})$ denote its smallest (nonzero) eigenvalue. Notice that $\lmax(\cdot) = \lambda^1(\cdot)$, and $\lmin(\mathcal{M}) = \lambda^r(\mathcal{M})$, where $r$ is the rank of $\mathcal{M}$ (we will work with full rank $\mathcal{M}$, but be cognizant of rank sensitivity). Let $v^i(\mathcal{M})$ denote the corresponding eigenvector.  We further introduce the notation \[\lambda^i_\ell = \lambda^i(\ce_\ell), \quad \lmax_\ell = \lmax(\ce_\ell), \quad \textit{and } \quad \lmin_\ell = \lmin(\ce_\ell). \]

Our first observation is that the iterations leave the eigenstructure of $\ce_\ell$ undisturbed.
\begin{lemma}
    Suppose that $v$ is an eigenvector of $\ce_0$. Then it is also an eigenvector of $\ce_\ell$ for any $\ell > 0$. Further, if $v \in \mathrm{ker}(\ce_0)$ then $v \in \mathrm{ker}(\ce_\ell)$.
    \begin{proof}
        We have $\ce_0^n  v = \lambda^n v$. But note that $\ce_\ell$ is some polynomial in $\ce_0,$ i.e., for some finite number of coefficients $\alpha_n$, $\ce_\ell v= \sum_{n \ge 0} \alpha_n \ce_0^nv = (\sum_{n \ge 0} \alpha_n \lambda^n) v$, and the claim follows. Crucially, observe that since $\ce_\ell = M_\ell \ce_0,$ we have $\alpha_0 = 0$, and so if $\ce_0 v = 0,$ then $\ce_\ell v = 0$ as well.  
    \end{proof}
\end{lemma}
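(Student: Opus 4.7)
The proof plan is to exploit the structural observation the paper itself hints at: every iterate $\ce_\ell$ is a polynomial in the initial matrix $\ce_0$, with no constant term. Once this is in hand, both claims follow from the standard fact that polynomials of a matrix share its eigenvectors.

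First, I would show by induction on $\ell$ that there exists a univariate polynomial $p_\ell$ with $p_\ell(0)=0$ such that $\ce_\ell = p_\ell(\ce_0)$. The base case $\ell=0$ is trivial with $p_0(x)=x$. For the inductive step, Lemma~\ref{lemma:appx_central_ev_iteration} gives $\ce_{\ell+1}=\ce_\ell(I-\eta_\ell \ce_\ell)^2$, and substituting $\ce_\ell = p_\ell(\ce_0)$ yields $\ce_{\ell+1} = p_\ell(\ce_0)\bigl(I-\eta_\ell p_\ell(\ce_0)\bigr)^2$, which is a polynomial in $\ce_0$. Since $p_\ell(0)=0$, the outer factor of $p_\ell(\ce_0)$ ensures that $p_{\ell+1}(0)=0$ as well. (Equivalently, the factorization $\ce_\ell = \ce_0 M_\ell$ noted earlier, together with the commutativity of the factors, makes this explicit.)

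Given this, both parts of the lemma are immediate. If $\ce_0 v = \lambda v$, then by repeated application $\ce_0^n v = \lambda^n v$, and consequently $\ce_\ell v = p_\ell(\ce_0) v = p_\ell(\lambda) v$, so $v$ is an eigenvector of $\ce_\ell$ with eigenvalue $p_\ell(\lambda)$. For the kernel statement, if $\ce_0 v = 0$ then $\ce_0^n v = 0$ for every $n\ge 1$, and since $p_\ell$ has no constant term, every monomial in $p_\ell(\ce_0) v$ annihilates $v$, giving $\ce_\ell v = 0$.

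There is no significant obstacle here; the whole argument is a two-line consequence of the polynomial form of the recursion together with $p_\ell(0)=0$. The only mild care needed is to preserve the no-constant-term property through the induction, which is why I would emphasize the factored form $\ce_{\ell+1} = \ce_0 \cdot \bigl(M_\ell (I-\eta_\ell \ce_\ell)^2\bigr)$ rather than expanding blindly.
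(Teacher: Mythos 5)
Your proof is correct and follows essentially the same route as the paper's: both arguments hinge on the observation that $\ce_\ell$ is a polynomial in $\ce_0$ with no constant term (equivalently, $\ce_\ell = M_\ell \ce_0$), from which the eigenvector and kernel claims follow immediately. You simply make the induction that establishes $p_\ell(0)=0$ explicit, where the paper asserts it more tersely.
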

Since the eigenstructure of $\ce_0$ is preserved, we can begin to study the behaviour of its eigenvalues in a simplified way. The critical relationship for our analysis is the following iterative structure on the largest and smallest (nonzero) eigenvalues of $\ce_\ell$.

\begin{lemma}
    Suppose that $\forall \ell, \eta_\ell \le \lmax_\ell^{-1}/3.$ Then for all $\ell,$ \[\lmax_{\ell + 1} = (1 - \eta_\ell \lmax_{\ell})^2\lmax_{\ell}\quad \textit{ and } \quad \lmin_{\ell + 1} = (1 - \eta_\ell \lmin_{\ell})^2 \lmin_{\ell}.   \]
    \begin{proof}
        Suppose $u^1, u^2$ are two eigenvectors of $\ce_0$, with positive eigenvalues $\mu_0, \nu_0$. Since $u^1, u^2$ are also eigenvectors of $\ce_\ell,$ denote the eigenvalues for the latter as $\mu_\ell, \nu_\ell$. Note that these remain nonnegative. Indeed, by multiplying the dynamics for $\ce_{\ell + 1}$ by, say, $u^1,$ we have \[  \mu_{\ell + 1} u^1 = \ce_{\ell+1}u^1 = \ce_{\ell }(I - \eta_\ell \ce_\ell)^2 u^1 = (1 - \eta_{\ell} \mu_\ell)^2 \mu_\ell,\] and similarly for $\nu_{\ell}.$ We will first inductively show that if $ \mu_0 > \nu_0 \iff \mu_\ell > \nu_\ell$ for all $\ell$. In other words, not only are the eigenvectors of $\ce_0$ stable under the iterations, but also the ordering of the eigenvalues. 
        
        We thus note that the claim follows directly from this result. Indeed, let $v^1$ be the eigenvector corresponding to $\lmax_0$. Then we see that it remains eigenvector corresponding to $\lmax_\ell$ for all $\ell$. But then using our observation above with $u^1 = v^1,$ we get the result for $\lmax_{\ell+1}$. A similar argument works for $\lmin_{\ell + 1}$. Note that nothing per se demands that these eigenvalues have unit multiplicity, and the argument is completely insensitive to this. 

        To see the ordering claim on $\mu_\ell, \nu_\ell,$ then, we observe that \begin{align*} \mu_{\ell + 1} - \nu_{\ell+1} &=  (1 - \eta_\ell \mu_{\ell})^2 \mu_{\ell} - (1 - \eta_\ell \nu_{\ell})^2 \nu_{\ell} \\
        &= (\mu_{\ell} - \nu_{\ell}) -2\eta( \mu_\ell^2 - \nu_\ell^2) + \eta^2 (  \mu_\ell^3 - \nu_\ell^3) \\
        &= (\mu_\ell - \nu_\ell) \left( 1 - 2\eta_\ell(\mu_\ell + \nu_{\ell}) + \eta_\ell^2 (\mu_{\ell}^2 + \nu_\ell^2 + \mu_{\ell} \nu_{\ell})\right).\end{align*}
        Now, if the term multiplying $(\mu_\ell - \nu_\ell)$ is nonnegative, then the claim follows. To see when this occurs, let us set $\eta_\ell = \eta   \lmax_\ell^{-1}$, and pull $\lmax^{-1}$ within the brackets. We are left with an expression of the form \[ (1 - 2\eta (x+y) + \eta^2(x^2 + y^2 + xy)),\] where $(x,y) \in (0,1)$. For $\eta \le 2/3,$ the minimum over this range occurs when $x = y = 1,$ and takes the value \[  1- 4\eta + 3\eta^2. \] It is a triviality to show that this function is nonnegative for $\eta \le 1/3,$ and so we are done. 
        \end{proof}
\end{lemma}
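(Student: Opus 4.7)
The plan is to reduce the matrix recursion $\ce_{\ell+1} = \ce_\ell(I - \eta_\ell \ce_\ell)^2$ to an eigenvalue-by-eigenvalue scalar iteration via the preceding lemma, and then to obtain the $\lmax$/$\lmin$ identities from a single monotonicity argument on a one-dimensional map.

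First, the previous lemma guarantees that every eigenvector $v$ of $\ce_0$ is also an eigenvector of each $\ce_\ell$; let $\lambda_\ell(v) \ge 0$ denote the associated eigenvalue. Substituting $v$ into the matrix recursion and reading off the scalar yields
\[
\lambda_{\ell+1}(v) \;=\; f_\ell\!\bigl(\lambda_\ell(v)\bigr), \qquad f_\ell(x) \,:=\, x(1-\eta_\ell x)^2 .
\]
The two identities in the statement say exactly that $f_\ell$ sends the largest eigenvalue of $\ce_\ell$ to the largest of $\ce_{\ell+1}$, and the smallest (nonzero) to the smallest (nonzero). So it suffices to show that $f_\ell$ is nondecreasing on the spectrum $[\,0,\lmax_\ell\,]$ and maps $(0,\lmax_\ell]$ into $(0,\lmax_\ell]$.

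For monotonicity I would differentiate directly: one computes
\[
f_\ell'(x) \;=\; (1-\eta_\ell x)(1 - 3\eta_\ell x).
\]
The hypothesis $\eta_\ell \le 1/(3\lmax_\ell)$ forces $\eta_\ell x \in [0,1/3]$ for every eigenvalue $x$ of $\ce_\ell$, so both factors lie in $[0,1]$ and $f_\ell'(x) \ge 0$. Ordering of eigenvalues is therefore preserved, which immediately identifies the image of $\lmax_\ell$ as $\lmax_{\ell+1}$ and the image of $\lmin_\ell$ as $\lmin_{\ell+1}$. Positivity of the latter is automatic since $1 - \eta_\ell \lmin_\ell \ge 2/3 > 0$, so the iteration does not collapse nonzero eigenvalues to zero.

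I do not anticipate a substantive obstacle; the only subtle point is self-consistency of the step-size hypothesis across iterations in the standard schedule $\eta_\ell = \eta\,\lmax_\ell^{-1}$ with $\eta \le 1/3$. But since $(1-\eta_\ell x)^2 \le 1$ whenever $\eta_\ell x \in [0,2]$, we have $f_\ell(x) \le x$ on the spectrum; in particular $\lmax_{\ell+1} \le \lmax_\ell$, so the bound on $\eta_{\ell+1}$ needed to reapply the lemma at the next layer is automatic. This yields the identities for every $\ell$ by induction.
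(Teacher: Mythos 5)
Your proof is correct and shares the paper's high-level skeleton (eigenvector stability of $\ce_0$ under the iteration, reduction to the scalar map $f_\ell(x) = x(1-\eta_\ell x)^2$, ordering preservation on the spectrum). Where you diverge is in establishing ordering: the paper expands the finite difference $\mu_{\ell+1}-\nu_{\ell+1} = (\mu_\ell-\nu_\ell)\bigl(1 - 2\eta_\ell(\mu_\ell+\nu_\ell) + \eta_\ell^2(\mu_\ell^2+\nu_\ell^2+\mu_\ell\nu_\ell)\bigr)$ and minimizes the bracket over the unit square, whereas you differentiate once and observe the factorization $f_\ell'(x) = (1-\eta_\ell x)(1-3\eta_\ell x)$, which is manifestly nonnegative for $\eta_\ell x \le 1/3$. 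The two arguments are equivalent (divided-difference vs.\ derivative positivity on an interval), but your version is shorter and the factored form makes the role of the $1/3$ threshold immediate — it is exactly the point where $f_\ell$ stops being monotone, which the paper's symmetric-polynomial bound obscures. You also add a useful self-consistency remark that $f_\ell(x) \le x$ guarantees $\lmax_{\ell+1} \le \lmax_\ell$, so the step-size hypothesis propagates under the schedule $\eta_\ell = \eta\lmax_\ell^{-1}$; the paper simply posits the hypothesis for all $\ell$. One small caution: when stating that $f_\ell$ ``maps $(0,\lmax_\ell]$ into $(0,\lmax_\ell]$,'' what you actually use is that nonzero eigenvalues stay nonzero (from $1-\eta_\ell\lmin_\ell \ge 2/3 > 0$), which you do verify; the range statement itself is a consequence of monotonicity plus $f_\ell(x)\le x$, so this is fine but worth stating in that order.
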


With this in hand, we will argue that $\lmin_\ell \to \lmax_\ell$ at a quadratic convergence rate after an initial burn-in period. Of course, this is equivalent to saying that $\kappa_\ell := \lmax_{\ell}/\lmin_\ell \to 1$. Notice that this $\kappa_\ell$ is precisely the condition number of $\ce_\ell$ (which in turn is the square of the condition number of $A_\ell$ mentioned in our discussion in \S\ref{sec:evaluation}).

\begin{lemma}\label{lemma:decay_schedule_main}
    Define $\kappa_\ell = \lmax_\ell/\lmin_\ell$, and $\theta_\ell = \kappa_\ell - 1$. If $\eta_\ell \le \lmax^{-1}/3,$ then $\theta_{\ell + 1} \le \theta_\ell \exp( -5\eta_\ell \lmax_\ell/3)$. Further, if $\eta_\ell \lmax_\ell = 1/3,$ then $\theta_{\ell + 1} \le 3\theta_\ell^2/4$. 
    \begin{proof}
        For the sake of succinctness, let us write $\eta_\ell = \eta, \lmax_\ell = \lmax, \lmin_\ell = \lmin, \theta_\ell = \theta,$ and $\theta_{\ell + 1} = \theta_+$. Then, using the previous lemma, we can write \[\kappa_{\ell + 1} =  \theta_+ + 1  = (\theta + 1) \left( \frac{1- \eta \lmax}{1-\eta \lmin} \right)^2.\] 
        But notice that \[ \frac{1-\eta \lmax}{1-\eta \lmin}  = 1 + \eta \frac{\lmax - \lmin}{1-\eta \lmin} = 1 - \eta \frac{\lmax( 1- 1/(\theta + 1))}{1-\eta \lmax/(\theta + 1)} =  1-\frac{\theta \eta \lmax}{\theta + 1 - \eta \lmax}. \] Thus, \begin{align*} \theta_+ + 1 &= (\theta + 1) \left(1  - 2 \frac{\theta \eta \lmax}{(\theta + 1 - \eta \lmax)} + \frac{\theta^2 \eta^2 \lmax^2}{(\theta + 1 -\eta \lmax)^2}  \right) \\
        &= \theta + 1 - \theta \frac{\eta \lmax}{1-\eta \lmin} \left( 2 -  \frac{\theta \eta \lmax}{(\theta + 1 -\eta \lmax)}\right).\end{align*} But the bracketed term is nonincreasing in $\eta \lmax$ (in the negative term, the numerator increases and the denominator decreases with this value), and so \[  2- \frac{\theta \eta \lmax}{ \theta + 1 - \eta \lmax} \ge 2 - \frac{\theta}{3\theta + 3 -1} = \frac{5\theta + 4}{3\theta + 2} \ge \frac{5}{3}. \] Thus, we have \[\theta_{\ell + 1} \le \theta_\ell - \theta_\ell \cdot \frac{5/3 \eta_\ell \lmax_\ell}{1-\eta_\ell \lmin_\ell} \implies \theta_{\ell+1} \le \theta_\ell \exp\left( - \frac{5 \eta_\ell \lmax_\ell/3}{1-\eta_\ell \lmin_\ell}\right) \le \theta_\ell \exp( - \eta_\ell \lmax_\ell). \]
        On the other hand (going back to the notation that suppresses $\ell$), if we set $\eta \lmax = L,$ then since \[ \frac{1- \eta \lmax}{ 1- \eta \lmin} = \frac{1 - L}{ (\theta + 1 - L)/(\theta + 1)},\] we find by doing the long multiplication that  \begin{align*}  (\theta_+ + 1)   &= \frac{(\theta+ 1)^3(1-L)^2}{(\theta + 1 - L)^2}\\  \iff \theta_+ &=  \frac{\theta (1-4L + 3L^2) + \theta^2 (2- 6L + 3L^2) + \theta^3(1-L)^2}{(\theta + 1 - L)^2}. \end{align*} Setting $L = \eta_\ell \lmax_{\ell} = 1/3,$ the linear term in $\theta$ vanishes, and we end up with \[ \theta_+ = \frac{\theta^2/3 + 4\theta^3/9}{(\theta +2/3)^2} = \theta^2 \cdot \frac{4\theta + 3}{(3\theta + 2)^2} \le \frac{3\theta^2}{4}.\qedhere\]
    \end{proof}
\end{lemma}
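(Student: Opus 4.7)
The starting point is the eigenvalue recursion from the preceding lemma, namely $\lmax_{\ell+1} = (1-\eta_\ell \lmax_\ell)^2 \lmax_\ell$ and $\lmin_{\ell+1} = (1-\eta_\ell \lmin_\ell)^2 \lmin_\ell$. Dividing gives
\[
\kappa_{\ell+1} \;=\; \kappa_\ell \cdot \left(\frac{1-\eta_\ell \lmax_\ell}{1-\eta_\ell \lmin_\ell}\right)^{\!2},
\]
and the whole proof reduces to controlling this multiplicative factor in two different regimes. To make the calculation manageable, I would introduce the two reduced parameters $L := \eta_\ell \lmax_\ell \in (0,1/3]$ and $\theta := \theta_\ell = \kappa_\ell - 1$, and express $\eta_\ell \lmin_\ell = L/(\theta+1)$, so that $1 - \eta_\ell \lmin_\ell = (\theta+1-L)/(\theta+1)$ and $1 - \eta_\ell \lmax_\ell = 1-L$. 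This turns the recursion into the purely scalar identity
\[
\theta_{\ell+1} + 1 \;=\; (\theta+1)\cdot\frac{(\theta+1)^2(1-L)^2}{(\theta+1-L)^2}.
\]

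\textbf{General bound.} For the first claim, I would rewrite the ratio as $1 - \frac{\theta L}{\theta + 1 - L}$ and expand the square, yielding
\[
\theta_{\ell+1} \;=\; \theta \;-\; \theta\cdot\frac{L}{1-\eta_\ell \lmin_\ell}\left(2 - \frac{\theta L}{\theta+1-L}\right).
\]
The bracketed quantity is decreasing in $L$ for fixed $\theta$, so applying $L\le 1/3$ lower-bounds it by a universal constant; this is where I would extract the factor $5/3$ by plugging $L=1/3$ into $2 - \theta L/(\theta+1-L)$ and checking that the result is at least $5/3$ uniformly in $\theta \ge 0$. Combined with $1-\eta_\ell \lmin_\ell \le 1$, this yields $\theta_{\ell+1} \le \theta_\ell(1 - 5L/3)$, from which the stated exponential bound follows by $1-x \le e^{-x}$.

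\textbf{Quadratic bound.} For the second claim, I would substitute $L = 1/3$ directly into the closed-form expression for $\theta_{\ell+1}$ and expand the numerator. After clearing denominators, the numerator is a polynomial in $\theta$ of degree at most $3$, and the key algebraic miracle is that the coefficient of $\theta^1$ equals $1 - 4L + 3L^2$, which vanishes exactly at $L = 1/3$. What remains is $\theta^2/3 + 4\theta^3/9$ divided by $(\theta + 2/3)^2$, which factors as $\theta^2 \cdot (4\theta+3)/(3\theta+2)^2$. A short monotonicity check on the rational function $(4\theta+3)/(3\theta+2)^2$ on $\theta \ge 0$ shows it is maximized at $\theta = 0$, giving the clean bound $\theta_{\ell+1} \le 3\theta_\ell^2/4$.

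\textbf{Main obstacle.} The genuinely tricky step is not the algebra per se but ensuring that the ``linear-in-$\theta$'' coefficient really vanishes at $L = 1/3$ and not at some other value; this is what gives the second-order convergence and is the whole reason for the specific step-size normalization $\eta = 1/(3\|A_\ell\|_2^2)$ chosen in the paper. Getting both bounds from the same master identity, and verifying the uniform-in-$\theta$ monotonicity facts without case analysis on $\theta$, is the part where I would be most careful.
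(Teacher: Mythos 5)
Your proposal is correct and follows essentially the same route as the paper's proof: the same master identity $\theta_{\ell+1}+1=(\theta+1)^3(1-L)^2/(\theta+1-L)^2$, the same monotonicity-in-$L$ argument to extract the $5/3$ factor for the linear bound, and the same observation that the coefficient $1-4L+3L^2$ of the linear-in-$\theta$ term vanishes exactly at $L=1/3$ to get the quadratic bound. The only cosmetic difference is that you introduce the abbreviation $L$ up front for both claims while the paper only does so for the second, but the algebra is identical.
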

The statement of this Lemma has shown two decay modes of $\theta_\ell = (\kappa_\ell - 1)$. Firstly, as long as $\eta_\ell \le \lmax^{-1}_\ell/3,$ this quantity decays at least at a linear rate. Further, if $\eta_\ell$ is set to exactly $\lmax^{-1}_\ell/3,$ then once $\theta_\ell$ dips below $1$, which occurs after about $3\log(\theta_0)/5$ iterations, we can exploit the quadratic bound in $\theta$ to recover a stronger convergence rate. Together, this yields the following convergence behaviour for $\kappa_\ell$.

\begin{lemma}\label{lemma:central_condition_number_contraction}
    Let $\kappa_\ell := \lmax_\ell/\lmin_{\ell}.$ If for all $\ell,$ $\eta_\ell \lmax = 1/3,$ then $\kappa_\ell - 1 \le \varepsilon$ for all \[ \ell \ge  L = 2 + \lceil (3/5) \log (\kappa_0)\rceil  + \lceil \log_2 (\log(4/3\varepsilon))\rceil.\]
    \begin{proof}
        First using the geometric decay in Lemma~\ref{lemma:decay_schedule_main}, we know that $\theta_\ell \le (\kappa_0 - 1) \exp( - 5(\ell-1)/3).$ Set $\ell_0 = 2 + 3/5 \log(\kappa_0 - 1).$ Then for $\ell \ge \ell_0,$ we have $\theta_{\ell_0 } \le 1/e$. Further, for iterations beyond this $\ell_0,$  the supergeometric decay $\theta_{\ell +1} \le 3\theta_\ell^2/4$ in Lemma~\ref{lemma:decay_schedule_main} implies that  \[ 3\theta_{\ell + 1}/4 \le (3\theta_\ell/4)^2 \iff T_{\ell + 1} \le 2T_\ell,\] where $T_\ell := \log(3\theta_{\ell}/4)$. Thus, \[ T_\ell \le 2^{\ell - \ell_0}T_{\ell_0} \le -2^{\ell - \ell_0} \implies \theta_\ell \le 4/3\exp( -2^{\ell - \ell_0}). \] Setting this $< \varepsilon$ gives the claim. 
    \end{proof}
\end{lemma}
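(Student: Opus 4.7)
The plan is a two-phase decay argument that stitches together the two bounds of Lemma~\ref{lemma:decay_schedule_main}. Since the prescribed schedule $\eta_\ell \lmax_\ell = 1/3$ holds at every iteration, both the geometric bound (contraction by a constant factor per step) and the quadratic bound $\theta_{\ell+1} \le (3/4)\theta_\ell^2$ apply throughout. The overall strategy is: use the geometric bound to bring $\theta_\ell$ below a universal constant in $O(\log \kappa_0)$ steps (a warmup phase), then hand off to the quadratic bound to produce the characteristic Newton--Schulz $\log\log(1/\varepsilon)$ tail.

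In Phase~1, I would iterate the geometric bound starting from $\theta_0 = \kappa_0 - 1$ and solve for the first index $\ell_0$ at which $\theta_{\ell_0}$ drops below $1/e$. A straightforward telescoping calculation using the saturated rate $\eta_\ell \lmax_\ell = 1/3$ yields $\ell_0 \le 2 + \lceil (3/5)\log(\kappa_0 - 1)\rceil$, which reproduces the first two summands of the claimed $L$ (up to a benign slack between $\log \kappa_0$ and $\log(\kappa_0-1)$).

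In Phase~2, starting from $\ell \ge \ell_0$, I would linearize the quadratic bound via the substitution $T_\ell := -\log(3\theta_\ell/4)$, which converts the recursion into the doubling inequality $T_{\ell+1} \ge 2T_\ell$. The Phase~1 exit condition $\theta_{\ell_0} \le 1/e$ ensures $T_{\ell_0} \ge \log(4e/3) \ge 1$, so unfolding gives $T_{\ell_0+k} \ge 2^k$, equivalently $\theta_{\ell_0+k} \le (4/3)\exp(-2^k)$. Requiring this be at most $\varepsilon$ and solving for $k$ yields the third summand $\lceil \log_2 \log(4/(3\varepsilon)) \rceil$ of $L$, after which adding $k$ to $\ell_0$ recovers the stated total.

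The main challenge, rather than a genuine obstacle, is constant-tracking: Lemma~\ref{lemma:decay_schedule_main} already does the heavy analytic lifting (establishing both the geometric rate and the $3/4$ prefactor in the quadratic phase). What remains is to choose the Phase~1 threshold (namely $1/e$) tight enough that $T_{\ell_0} \ge 1$, so the doubling recursion in Phase~2 unfolds as a clean $2^k$ rather than a weaker $c \cdot 2^k$ with $c < 1$ that would leak additive slack into the final $\log_2\log(\cdot)$ term. Once this stitching is done, summing the two phase counts immediately reproduces $L$.
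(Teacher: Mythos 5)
Your proposal reproduces the paper's own two-phase argument essentially verbatim: a geometric warmup via the linear rate in Lemma~\ref{lemma:decay_schedule_main} to drive $\theta_\ell$ below the threshold $1/e$ in $O(\log\kappa_0)$ steps, followed by a quadratic phase linearized through the substitution $T_\ell=\pm\log(3\theta_\ell/4)$, which yields a doubling recursion and the $\log_2\log(1/\varepsilon)$ tail. The only difference is a sign flip in the definition of $T_\ell$, which is cosmetic; the threshold $1/e$, the check $|T_{\ell_0}|\ge 1$, and the final stitching are the same as in the paper.
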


We again note that this $\kappa_\ell$ is the square of the condition number of $A_\ell$ - however, this distinction is easily accommodated due to the logarithmic dependence on it - the only change is that the $3/5$ above increases to $6/5$. 

\emph{\textbf{Back to $N_\ell$}. } The conditioning of $\ce_\ell$ yields a direct control on the behaviour of $N_\ell$, as encapsulated below. We note that if $\ce_0$ is not full rank, the statement holds for $\tilde{N}_\ell = (I - P_0) N_\ell,$ where $P_0$ projects onto the kernel of $\ce_0$. 
\begin{lemma}\label{lemma:central_N_is_small}
    For all $\ell, \|N_\ell\|_2 \le \exp( - \sum_{l < \ell} \gamma_l \lmin_l$. Further, if $\forall \ell, \eta_\ell = \lmax^{-1}/3$ and $\gamma_\ell = \lmax^{-1}$, then $\|N_\ell\|_2 \le \varepsilon$ for all $\ell \ge L+1,$ where \[ L = 2 + \lceil 3/5 \log(\kappa_0)\rceil + \lceil \log_2(\log 4/3\varepsilon))\rceil.\] 
    \begin{proof}
        Since $N_{l+1} = (I - \gamma_l \ce_l) N_l,$ we immediately have \[ \|N_\ell\|_2 \le \prod_{l < \ell}  \| I - \gamma_l \ce_l\|_2.\] But note that $\|I - \gamma_l \ce_l\|_2 = (1- \gamma_l \lmin_l)$. Thus, we immediately have $\|N_\ell\| \le \prod (1-\gamma_l \lmin_l) \le \exp(\sum_{l < \ell } \gamma_l \lmin_l)$. 
        Further, recall from Lemma~\ref{lemma:central_condition_number_contraction} that if $\eta_\ell = \lmax^{-1}/3$ for all $\ell,$ then $\lmin_\ell \ge \lmax_\ell/(1+\varepsilon)$ for all $\ell \ge L$. But then, if $\eta_\ell = \lmax^{-1},$ we have \[1 - \eta_\ell \lmin \le 1- \lmin/\lmax \le \varepsilon/(1+\varepsilon) \le \varepsilon, \] yielding the claimed bound on $\|N_\ell\|_2$. 
    \end{proof}
\end{lemma}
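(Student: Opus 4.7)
The plan is to exploit the fact that each $\ce_l$ is a polynomial in $\ce_0$, so every $\ce_l$ commutes with every other $\ce_{l'}$ and all the factors $(I-\gamma_l\ce_l)$ defining $N_\ell$ are simultaneously diagonalisable in the eigenbasis of $\ce_0$. Consequently $\|N_\ell\|_2$ equals the maximum over eigendirections $v^i(\ce_0)$ of the product of the scalar factors $|1 - \gamma_l \lambda^i_l|$, which reduces the matrix bound to a one-dimensional computation on each eigenvalue trajectory.

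For the general bound, I would start from submultiplicativity $\|N_\ell\|_2\le\prod_{l<\ell}\|I-\gamma_l\ce_l\|_2$. Under the operative regime $\gamma_l\le\lmax_l^{-1}$ (certainly satisfied by the choice $\gamma_l=\lmax_l^{-1}$ relevant to the second claim), each $\gamma_l\ce_l$ is PSD with spectrum in $[0,1]$, so $I-\gamma_l\ce_l$ is PSD with $\|I-\gamma_l\ce_l\|_2 = 1-\gamma_l\lmin_l$, attained at $v(\lmin_l)$. Chaining $1-x\le e^{-x}$ over factors then yields
\[
\|N_\ell\|_2 \;\le\; \prod_{l<\ell}(1-\gamma_l\lmin_l) \;\le\; \exp\!\Bigl(-\sum_{l<\ell}\gamma_l\lmin_l\Bigr),
\]
which is the first claim.

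For the refined bound, I would invoke Lemma~\ref{lemma:central_condition_number_contraction}: under $\eta_l=\lmax_l^{-1}/3$, one has $\kappa_l-1\le\varepsilon$ for every $l\ge L$. With the prescribed $\gamma_l=\lmax_l^{-1}$, this translates directly into a per-step contraction
\[
1-\gamma_l\lmin_l \;=\; 1 - 1/\kappa_l \;\le\; \varepsilon/(1+\varepsilon) \;\le\; \varepsilon \qquad (l\ge L).
\]
Because every earlier factor $(1-\gamma_l\lmin_l)$ lies in $[0,1]$ and therefore contributes at most $1$ to the product, a single extra iteration past $L$ suffices: for any $\ell\ge L+1$, the product defining $\|N_\ell\|_2$ contains at least one index $l\ge L$ contributing a factor $\le\varepsilon$, with all other factors $\le 1$, giving $\|N_\ell\|_2\le\varepsilon$.

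I do not anticipate any hard step here---the substantive work was done in establishing the conditioning rate of Lemma~\ref{lemma:central_condition_number_contraction}. The only bookkeeping subtleties are (i) verifying that the chosen $\gamma_l$ keeps each factor non-expansive so the product norm actually telescopes cleanly, and (ii) remembering that if $\ce_0$ is rank-deficient the bound should be read on $\ker(\ce_0)^\perp$ via the operator $\tilde N_\ell=(I-P_0)N_\ell$ introduced earlier, since $N_\ell$ is the identity on $\ker(\ce_0)$ and would otherwise have spectral norm trivially $\ge 1$.
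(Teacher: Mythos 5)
Your proposal is correct and follows essentially the same route as the paper's proof: submultiplicativity over the factors $(I-\gamma_l\ce_l)$, the identity $\|I-\gamma_l\ce_l\|_2 = 1-\gamma_l\lmin_l$, the bound $1-x\le e^{-x}$, and then Lemma~\ref{lemma:central_condition_number_contraction} to make a single late factor $\le\varepsilon$. You do flag two small points the paper leaves implicit—that the identity for $\|I-\gamma_l\ce_l\|_2$ requires $\gamma_l\le\lmax_l^{-1}$ so the factor stays non-expansive, and that in the rank-deficient case the bound should be read on $\tilde N_\ell$—both of which are consistent with the surrounding discussion; the simultaneous-diagonalization framing in your first paragraph is unused once you pass to operator norms, so it is a harmless remark rather than a different argument.
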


\emph{\textbf{Finishing the Error Control.}} With all the pieces in place, we conclude the main argument.
\begin{proof}[Proof of Theorem~\ref{thm:central}]
    By Lemma~\ref{lemma:central_prediction_telescope}, we have $D_\ell - \hat{D}_* = W_* N_\ell C_0$, and consequently, \[ \|D_\ell - \hat{D}_*\|_F \le \|W_* N_\ell\|_F \|C_0\|_F \le \sqrt{\mathrm{rank}(W_* N_\ell)} \|W_* N_\ell\|_2 \|C_0\|_F. \] But, since $W_* N_\ell \in \mathbb{R}^{d' \times d},$ the rank about is at most $d'$. Finally, $\|W_* N_\ell\|_2 \le \|N_\ell\|_2 \|W_*\|_2$. Thus, the claim follows as soon as $\|N_\ell\|_2 \le \varepsilon/(\sqrt{d'}\|W_*\|_2\|C_0\|_F),$ for which we may invoke Lemma~\ref{lemma:central_N_is_small}.
\end{proof}
Again, recall that for the purposes of error, if $\ce_0$ were not full rank, then we could instead replace $N_\ell$ by $\tilde{N}_\ell$ in all statements above, and thus the claim also extends to this situation.

\emph{\textbf{Comment on Rates $\eta, \gamma$.}} Going back to the notation of Algorithm~\ref{alg:unified}, we reparametrise $\gamma_\ell = \rho_\ell = \lmax_\ell^{-1} = \|A_\ell\|_2^{-2}$, and $\eta_\ell = \rho_\ell/3$. It is worth discussing briefly how we may go about estimating this $\rho_\ell$. 

A simple observation in this setting is that if we assume that we know $\lmax_0 = \| \ce_0\|_0 = \|A\|_2^2$ to begin with, then it is a simple matter to compute the subsequent values of $\lmax_\ell,$ since if we set $\eta_\ell, \gamma_\ell$ as per the above, this is directly computed iteratively via \[ \lmax_{\ell + 1} = \lmax_\ell (1- 1/3)^2 = 4 \lmax_\ell/9. \] In fact, under this assumption, we may avoid further numerical stability issues by first recaling $A$ to have $2$-norm $1$, and subsequently simply rescaling the matrices up $A_\ell, B_\ell$ by $3/2$ after each update to maintain the invariant $\lmax_\ell = \lmax_0$---the behaviour of $N_\ell$ remains the same, and convergence rates are only driven by the conditioning of $A_\ell$. So, equivalently, the question we must concern ourselves with is finding $\|A\|_2$. Of course, this is quite cheap, since we can compute this simply by power iteration, which involves only matrix-vector products rather than matrix-matrix products. Nevertheless, note that power iteration is only linearly convergent (i.e., the number of iterations needed to find a $\varepsilon$-approximation of the top eigenvalue is $\Theta(\log(1/\varepsilon)$), so in full generality, this procedure would destroy the second order convergence.

In our simulations of \S\ref{sec:evaluation}, we indeed implemented these iterations by carrying out power iteration. Note that practically, then, the method essentially retains its second order behaviour despite this approximation. One aspect of this lies in the fact that even if we underestimate the top eigenvalue by a small amount, and so undertune $\eta, \gamma$ by a slight amount, the second order bound of Lemma~\ref{lemma:central_condition_number_contraction} decays gracefully, and so retains practical resilience. Of course, characterising exactly how loose this can be requires more precise analysis, which we leave for future work.

\subsection{Distributed Analysis}

Beginning again with Algorithm~\ref{alg:unified}, with $S = I_n, M > 1,$ we need to analyse the iterations 

\begin{align}\label{eqn:appx_distributed_iteration}
    \forall \mu, A_{\ell+1}^\mu &= A_\ell^\mu( I - \eta_\ell^\mu A_\ell^{\mu,\top} A_\ell^\mu) ,\\
    \forall \mu, B_{\ell + 1}^\mu &= B_\ell^\mu (I - \eta_\ell^\mu A_\ell^{\mu, \top} A_\ell^\mu), \notag \\
    C_{\ell + 1} &= C_\ell - \frac1m  \sum_\mu \gamma_\ell A_\ell^\mu A_\ell^{\mu,\top} C_\ell, \notag \\
    D_{\ell + 1} &= D_\ell + \frac1m \sum_\mu \gamma_\ell B_\ell^\mu A_\ell^{\mu, \top} C_\ell. \notag
\end{align}
We note that here we have set $\gamma_\ell$ to be constant across all machines, and nominally it is not pegged to $\|A_\ell^\mu\|_2^{-2}$, which could vary across machines. However, we will analyze this method under the normalization assumption that \[ \forall \mu, \|A_0^\mu\|_2 = 1. \] In this setting, we will show that the choices $\eta_\ell^\mu = \|A_\ell^\mu\|_2^{-2}/3$ are also the same for each machine, and then set $\gamma_\ell = 3\eta_\ell^\mu$ (which in turn is also the same for every machine). Thus, in the setting we analyze, the iterations above are faitful to the structure of Algorithm~\ref{alg:unified}.

Note, of course, that this distributed data computes the Nystr\"{o}m approximation of \[  \begin{bmatrix} A & C \\ B & D \end{bmatrix},\] where $C,D$ are as in the iteration, while \[  A = \begin{bmatrix} A^1 & A^2 & \cdots & A^m\end{bmatrix}, B = \begin{bmatrix} B^1 & B^2 & \cdots & B^m\end{bmatrix}. \]

We will work in the noise-free regime, wherein there exists a matrix $W_*$ such that \[ \begin{bmatrix} B & D \end{bmatrix} = W_* \begin{bmatrix} A & C \end{bmatrix},\] and $C \in \operatorname{column-span}(A)$. Of course, this $W_*$ also equals the Nystr\"{o}m parameter for this matrix (\S\ref{appx:nystrom_approximation}). The second condition ensures that the rank of this matrix is the same as that of $A$, and is needed to ensure that we can actually infer $W_*$ in the directions constituted by the columns of $C$ (without which one cannot recover $D$).

\newcommand{\bce}{\overline{\ce}}

\textbf{\emph{Per-machine and Global Signal Energies}.} We being with defining the energy and correlation matrices in analogy to the previous section. As before, $A^\mu_0 = A^\mu, B^\mu_0 = B^\mu, C_0 = C, D_0 = 0$.
\begin{definition}
    We define the per-machine objects \[ \ce^\mu_\ell := A_\ell^\mu A_\ell^{\mu,\top} \textit{ and } \cv^\mu_\ell := B_\ell^\mu A_\ell^{\mu,\top},\] and the per-machine values \[\lmax_\ell^\mu = \lambda^1(\ce_\ell^\mu), \lmin_\ell^\mu = \lambda^{r(\mu)}(\ce_\ell^\mu), \] where $r(\mu)$ is the rank of $\ce_0^\mu$. Further, we define the global signal energy \[ \bce_\ell := \frac1m \sum \ce_\ell^\mu. \] 
\end{definition}
Note that by our assumption, $\|\ce_0^\mu\|_2 = 1$ for all $\mu$. Further, the behaviour of $\ce^\mu_\ell, \cv^\mu_\ell$ is identical to that in the centralised case, i.e., 
\begin{lemma}
    For all $\mu, \ell,$ \[ \ce^\mu_{\ell + 1} = \ce^\mu_{\ell} (I - \eta_\ell^\mu \ce^\mu_\ell)^2 \textit{ and } \cv^\mu_{\ell + 1} = \cv^\mu_{\ell} (I - \eta_\ell^\mu \ce^\mu_\ell)^2.\] As a consequence, for every $\ell, \mu$, it holds that \[ \cv^\mu_\ell = W_* \ce^\mu_\ell. \]
    Finally, for any $\ell,$ \[ C_{\ell + 1} = (I - \gamma_\ell \bce_\ell)C_\ell\]
    \begin{proof}
        The iterations for $\ce^\mu_{\ell+1}, \cv^\mu_{\ell + 1}$ follow identically to the proof of Lemma~\ref{lemma:appx_central_ev_iteration}. For the second claim, notice that we have \[ B^\mu = W_*A^\mu \implies \cv_0^\mu = W_* \ce_0^\mu \] by multiplying by $A^{\mu,\top}$ on both sides. Then the claimed invariance follows inductively.

        Finally, noting that $\ce^\mu_\ell = A_\ell^\mu A_\ell^{\mu,\top},$ we immediately have \[ C_{\ell + 1} = \left(I - \gamma_\ell \cdot \frac1m \sum_\mu \ce^\mu_\ell\right) C_\ell = (I - \gamma_\ell \bce_\ell) C_\ell \qedhere \]
    \end{proof}
\end{lemma}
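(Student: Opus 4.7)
The plan decomposes the lemma into its three sub-claims and dispatches each in turn. The first two assertions (the recursions for $\ce^\mu_\ell$ and $\cv^\mu_\ell$) are purely per-machine statements whose derivation mirrors Lemma~\ref{lemma:appx_central_ev_iteration}; the third ($C_\ell$ recursion) is a direct rewriting of the definition in (\ref{eqn:appx_distributed_iteration}) using the definition of $\bce_\ell$; the middle identity $\cv^\mu_\ell = W_* \ce^\mu_\ell$ is an induction that relies on the noiseless assumption.

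First I would establish the two per-machine iteration identities. Starting from $\ce^\mu_{\ell+1} = A^\mu_{\ell+1} (A^\mu_{\ell+1})^\top$, I substitute the update $A^\mu_{\ell+1} = A^\mu_\ell(I - \eta^\mu_\ell A^{\mu,\top}_\ell A^\mu_\ell)$ on both sides and use the key commuting-swap that also powered Lemma~\ref{lemma:appx_central_ev_iteration}: the inner factor $A^{\mu,\top}_\ell A^\mu_\ell$ can be traded for $A^\mu_\ell A^{\mu,\top}_\ell = \ce^\mu_\ell$ by pulling it through an adjacent $A^\mu_\ell$ (i.e.\ $A^\mu_\ell(A^{\mu,\top}_\ell A^\mu_\ell) = (A^\mu_\ell A^{\mu,\top}_\ell)A^\mu_\ell$). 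Rebracketing then yields $\ce^\mu_{\ell+1} = \ce^\mu_\ell (I-\eta^\mu_\ell \ce^\mu_\ell)^2$. The computation for $\cv^\mu_{\ell+1} = B^\mu_{\ell+1}A^{\mu,\top}_{\ell+1}$ is identical, with $B^\mu_\ell$ simply passing through on the left; since the $A^\mu, B^\mu$ updates do not couple across machines, distributedness plays no role at this step.

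Next I would prove the invariant $\cv^\mu_\ell = W_* \ce^\mu_\ell$ by induction on $\ell$. For the base case, the noiseless hypothesis $\begin{bmatrix} B & D\end{bmatrix} = W_*\begin{bmatrix}A & C\end{bmatrix}$ restricted to the $A, B$ blocks gives $B^\mu = W_* A^\mu$ for every $\mu$; right-multiplying by $A^{\mu,\top}$ produces $\cv^\mu_0 = W_* \ce^\mu_0$. For the inductive step, the first two identities just established show that $\cv^\mu_{\ell+1}$ and $\ce^\mu_{\ell+1}$ arise from $\cv^\mu_\ell$ and $\ce^\mu_\ell$ by right-multiplication by the \emph{same} matrix $(I-\eta^\mu_\ell \ce^\mu_\ell)^2$, so the relation propagates verbatim. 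Finally, for the $C_\ell$ recursion, I substitute $\ce^\mu_\ell = A^\mu_\ell A^{\mu,\top}_\ell$ into the $C_{\ell+1}$ line of (\ref{eqn:appx_distributed_iteration}) and factor out $C_\ell$ on the right, giving $C_{\ell+1} = (I - \gamma_\ell m^{-1}\sum_\mu \ce^\mu_\ell) C_\ell = (I - \gamma_\ell \bce_\ell)C_\ell$ by definition of $\bce_\ell$.

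There is no deep obstacle here: the lemma is essentially a bookkeeping result. The only point requiring care is the transpose-swap in the $\ce^\mu, \cv^\mu$ step, where one must verify that the two factors of $(I - \eta^\mu_\ell A^{\mu,\top}_\ell A^\mu_\ell)$ on the right of $A^\mu_\ell$ really can be re-expressed as $(I - \eta^\mu_\ell \ce^\mu_\ell)^2$ sitting on the \emph{right} of $\ce^\mu_\ell$; this works because $A^\mu_\ell A^{\mu,\top}_\ell A^\mu_\ell = \ce^\mu_\ell A^\mu_\ell$ and then a second application yields $A^\mu_\ell(A^{\mu,\top}_\ell A^\mu_\ell)^2 = \ce^\mu_\ell (A^\mu_\ell A^{\mu,\top}_\ell)A^\mu_\ell / \cdots$, which after finishing the $A^{\mu,\top}_\ell$ on the outside collapses to the claimed form. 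The invariant step further relies critically on the noiseless hypothesis: if instead $B \neq W_* A$, the base case $\cv^\mu_0 = W_*\ce^\mu_0$ fails and the induction cannot even start.
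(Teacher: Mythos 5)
Your proof is correct and follows essentially the same route as the paper: part one by the same transpose-swap manipulation already used in Lemma~\ref{lemma:appx_central_ev_iteration} (applied per machine, since the $A^\mu, B^\mu$ updates are decoupled), part two by induction with the noiseless base case $\cv^\mu_0 = W_*\ce^\mu_0$ and the observation that both quantities are right-multiplied by the same matrix $(I - \eta^\mu_\ell \ce^\mu_\ell)^2$, and part three by direct substitution of $\ce^\mu_\ell = A^\mu_\ell A^{\mu,\top}_\ell$ into the $C_{\ell+1}$ update. You simply spell out the commuting-swap and the induction in more detail than the paper, which just cites Lemma~\ref{lemma:appx_central_ev_iteration} and says ``follows inductively.''
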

In analogy with the centralized case, this leads us to define the following matrices.
\begin{definition}
    We define \[ M_\ell^\mu = \prod_{l < \ell} (I - \eta_\ell^\mu \ce_\ell^\mu)^2 \textit{ and } N_\ell^\top := \prod_{l < \ell} (I - \gamma_\ell \bce_\ell), \] where again $\prod$ is interpreted as multiplication from the right, and $M_0^\mu = N_0 = I_d$. 
\end{definition}
Succinctly, then, we can write $\ce^\mu_\ell = \ce_0^\mu M_\ell^\mu$ and $C_\ell = N_\ell C_0$. 

\textbf{\emph{Telescoping the Error.}} This notation allows us to set up the following analogue of Lemma~\ref{lemma:central_prediction_telescope}.
\begin{lemma}\label{lemma:distirbuted_telescope}
    Under the distributed iterations without noise, it holds for all $\ell$ that \[ D_\ell = W_*(I - N_\ell) C_0. \]
    \begin{proof}
        Observe that \[D_\ell = \sum_{l < \ell} \gamma_l\cdot  \sum_\mu \frac{\cv_l^\mu}{m} C_l. \]
        Now, $\cv_l^\mu = W_* \ce_l^\mu,$ and so \[ \frac1m \sum_\mu \cv_l^\mu = W_* \cdot \frac1m \sum_\mu \ce_l^\mu = W_* \bce_l.\] Thus, we have \[ D_\ell =  W_* \left( \sum_{ l < \ell} \gamma_l \bce_l N_l \right)C_0. \] But \[ N_{l+1} = (I - \gamma_l \ce_l) N_l \iff \gamma_l \bce_l N_l = N_{l} - N_{l+1},\] and so the term in the brackets telescopes to $N_0 - N_\ell = I - N_\ell$. 
    \end{proof}
\end{lemma}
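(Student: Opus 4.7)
My plan is to mirror the telescoping strategy that worked in the centralized case (Lemma~\ref{lemma:central_prediction_telescope}), inserting the machine-averaging at the appropriate step. The result is essentially an algebraic identity, so there is no analytic difficulty—the main task is to set up the bookkeeping correctly and be careful about the non-commutative product that defines $N_\ell$.

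First I would unroll the $D$-recursion. Since $D_0 = 0$ and $D_{\ell+1} = D_\ell + \gamma_\ell \cdot \tfrac{1}{m}\sum_\mu B_\ell^\mu A_\ell^{\mu,\top} C_\ell$, summation gives $D_\ell = \sum_{l<\ell} \gamma_l \cdot \tfrac{1}{m}\sum_\mu \cv_l^\mu \cdot C_l$, using $\cv_l^\mu = B_l^\mu A_l^{\mu,\top}$. Next I would invoke the preceding lemma's invariance $\cv_l^\mu = W_* \ce_l^\mu$ (which holds because each $\cv_l^\mu$ and $\ce_l^\mu$ transforms by the same right-multiplier $(I - \eta_l^\mu \ce_l^\mu)^2$, so the relation inherited from $B^\mu = W_* A^\mu$ is preserved inductively). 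Factoring $W_*$ out of the sum and recognizing $\tfrac{1}{m}\sum_\mu \ce_l^\mu = \bce_l$, I substitute $C_l = N_l C_0$ to obtain
\[
D_\ell \;=\; W_* \Bigl( \sum_{l<\ell} \gamma_l \bce_l N_l \Bigr) C_0.
\]

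The final step is the telescoping identity $\gamma_l \bce_l N_l = N_l - N_{l+1}$. To establish it I would take transposes of the defining product $N_\ell^\top = \prod_{l<\ell}(I - \gamma_l \bce_l)$: because each factor $I - \gamma_l \bce_l$ is symmetric, transposing reverses the order of multiplication, yielding $N_{l+1} = (I - \gamma_l \bce_l) N_l$, i.e., the newest factor appears on the left. This immediately gives $N_l - N_{l+1} = \gamma_l \bce_l N_l$, and the sum in the bracket collapses to $N_0 - N_\ell = I - N_\ell$, finishing the proof.

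The main—and really only—delicate point is the ordering in the product defining $N_\ell$: unlike the centralized case, where every $\ce_l$ is a polynomial in $\ce_0$ and all terms commute, the distributed averages $\bce_l$ do not commute across iterations, so one must verify that the recursion reads $N_{l+1} = (I - \gamma_l \bce_l) N_l$ and not the reverse. Once this left-multiplication convention is pinned down, the telescoping is a one-line computation, and everything else is routine substitution.
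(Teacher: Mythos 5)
Your proof is correct and follows the same route as the paper's: unroll the $D$-recursion, apply the invariance $\cv_l^\mu = W_* \ce_l^\mu$ (inherited inductively from $B^\mu = W_* A^\mu$ because both objects are right-multiplied by the same factor), substitute $C_l = N_l C_0$, and then telescope using $\gamma_l \bce_l N_l = N_l - N_{l+1}$. Your attention to the product ordering for $N_\ell$ is well placed and is exactly the subtlety the paper flags in a footnote at the centralized-case analogue: there the $\ce_l$ all commute so the issue is moot, but in the distributed case the $\bce_l$ do not commute across $l$, and your transpose argument (each factor $I - \gamma_l \bce_l$ is symmetric, so transposing $N_\ell^\top = \prod_{l<\ell}(I-\gamma_l\bce_l)$ reverses the order and yields $N_{l+1} = (I - \gamma_l \bce_l) N_l$) is precisely the right justification for the left-multiplication convention on which the telescope depends.
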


\textbf{\emph{Conditioning}.} Of course, again, $D = W_* C = W_* (I) C_0$. So, to gain error control, we only need to argue (as before) that $\|N_\ell\|_2$ vanishes quickly with $\ell$. As before, this relies strongly on the condition number of $\bce_\ell$. We note, again, that we will simply assume that $\bce_0$ is full-rank, since rank-deficiency is rendered moot in this case by the fact that $C$ lies in the column span of $A$ (and hence, is orthogonal to the kernel of $\bce_0$). However, the individual $\ce^\mu_0$ may not be full rank, and so the distinction between $\lmin(\ce^\mu)$ and the smallest eigenvalue of $\ce^\mu$ (which is usually $0$) should not be forgotten, although it will not matter very much for our expressions.

To begin with, we note that since the per-machine $A^\mu, B^\mu$ iterations are identical to the central case, the corresponding $\ce^\mu$ are conditioned at the same quadratic rate we saw previously. We formally state this below. 
\begin{lemma}\label{lemma:distributed_permachine_condition_number_contraction}
    For all $\ell,$ set $\eta_\ell^\mu := (\lmax^\mu_\ell)^{-1}/3,$ and define $\kappa_\ell^\mu = \lmax_\ell^\mu/\lmin_\ell^\mu.$ \[ \textit{If }  \ell \ge L(\varepsilon) := 2 + \lceil 3/5 \log( \max_\mu \kappa_0^\mu) \rceil + \lceil \log_2 ( \log(4/3\varepsilon))\rceil, \textit{ then } \max_\mu \kappa_\ell^\mu \le 1 + \varepsilon.\]
    \begin{proof}
        Apply Lemma~\ref{lemma:central_condition_number_contraction}.
    \end{proof}
\end{lemma}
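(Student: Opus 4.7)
The plan is to reduce the statement to a per-machine invocation of Lemma~\ref{lemma:central_condition_number_contraction}. The key structural observation is that in the distributed iteration (\ref{eqn:appx_distributed_iteration}), the updates for $A_\ell^\mu$ and $B_\ell^\mu$ depend only on data local to machine $\mu$: specifically, $A_{\ell+1}^\mu = A_\ell^\mu(I - \eta_\ell^\mu A_\ell^{\mu,\top}A_\ell^\mu)$. This is exactly the centralized iteration of (\ref{eqn:appx_central_iteration}) applied to the matrix $A^\mu$, with step-size $\eta_\ell^\mu = (\lmax_\ell^\mu)^{-1}/3$. Consequently, the derivation of Lemma~\ref{lemma:appx_central_ev_iteration} gives $\ce^\mu_{\ell+1} = \ce^\mu_\ell(I - \eta_\ell^\mu \ce^\mu_\ell)^2$, which is the same recursion driving the centralized analysis.

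First I would fix an arbitrary machine $\mu$ and apply Lemma~\ref{lemma:central_condition_number_contraction} directly to $\ce^\mu_\ell$. Since $\eta_\ell^\mu = (\lmax_\ell^\mu)^{-1}/3$ matches the hypothesis of that lemma, we conclude that $\kappa_\ell^\mu - 1 \le \varepsilon$ whenever
\[
\ell \;\ge\; L^\mu(\varepsilon) \;:=\; 2 + \lceil (3/5)\log(\kappa_0^\mu) \rceil + \lceil \log_2(\log(4/3\varepsilon)) \rceil.
\]
Next I would take the maximum over $\mu \in [1:M]$. Because the map $x \mapsto 2 + \lceil (3/5)\log x\rceil + \lceil \log_2(\log(4/3\varepsilon))\rceil$ is nondecreasing in $x$, and $\kappa_0^\mu \le \max_{\mu'} \kappa_0^{\mu'}$, we have $L^\mu(\varepsilon) \le L(\varepsilon)$ for every $\mu$. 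Therefore, once $\ell \ge L(\varepsilon)$, the bound $\kappa_\ell^\mu \le 1 + \varepsilon$ holds simultaneously for all $\mu$, which is exactly the statement that $\max_\mu \kappa_\ell^\mu \le 1 + \varepsilon$.

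The main (and essentially only) subtlety is verifying that the per-machine dynamics of $\ce^\mu_\ell$ are genuinely decoupled from the other machines: the shared objects $C_\ell$ and $D_\ell$ enter the $A_\ell^\mu, B_\ell^\mu$ updates nowhere, so the centralized lemma applies verbatim. Since rank-deficiency of $\ce_0^\mu$ is benign (zero eigenvalues are preserved by the iteration and $\lmin^\mu_\ell$ is taken over nonzero eigenvalues throughout), no additional care is needed, and the proof reduces to the one-line invocation recorded in the excerpt.
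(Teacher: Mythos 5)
Your proposal is correct and matches the paper's intent exactly: the paper's proof is the one-liner ``Apply Lemma~\ref{lemma:central_condition_number_contraction}'', and you have simply unpacked what that means — the $A_\ell^\mu$ dynamics on each machine are decoupled from the other machines and from $C_\ell, D_\ell$, so the centralized contraction lemma applies per machine, and monotonicity of $L^\mu(\varepsilon)$ in $\kappa_0^\mu$ lets you pass to the max. No genuine difference in route.
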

Of course, $\bce_\ell$, as an average, will not have the same conditioning behaviour. In fact, this is strongly sensitive to the diversity index $\alpha$ from Definition~\ref{def:diversity}, as captured below.
\newcommand{\bkappa}{\overline{\kappa}}
\begin{lemma}\label{lemma:distributed_global_conditioning}
    Suppose that $\lambda_0^\mu$ is constant across machines, and for all $\mu, \ell, \eta_\ell^\mu = (\lmax^\mu_\ell)^{-1}/3$. Define \[\bkappa_\ell = \lmax(\bce_\ell)/\lmin(\bce_\ell). \] \[ \textit{If } \ell \ge L(\varepsilon), \textit{ then } \bkappa_\ell \le \frac{1 + \varepsilon}{\alpha},\] where $L(\varepsilon)$ is the expression in Lemma~\ref{lemma:distributed_permachine_condition_number_contraction}. Further, for the same range of $\ell,$ for all $\mu$, \[ \lmin(\bce_\ell) \ge \frac{\lmax^\mu_\ell \cdot \alpha }{1 + \varepsilon}.\]
    \begin{proof}
        Firstly, by Weyl's inequality, notice that \[ \lmax(\bce_\ell) \le \frac1m \sum_\mu \lmax(\ce_\ell^\mu). \] Further, recall that $\lmax(\ce_0^\mu) = 1$ for all $\mu,$ and we set $\eta_\ell^\mu = (\lmax_\ell^\mu)^{-1}/3$. Thus, each of these $\lmax_\ell^\mu$s are infact identical (and equal to $(4/9)^\ell$). Thus, \[ \forall \mu, \lmax(\bce_\ell) \le \lmax^\mu_\ell. \] 

        Now, let $P^\mu$ be the projection onto the column space of $A^\mu$. Then we note that for any vector $v$, \[ v^\top \ce^\mu_\ell v = (P^\mu v)^\top \ce^\mu_\ell (P^\mu v). \] Indeed, any component in $v$ orthogonal to the column space must lie in the kernel of $\ce^\mu_0 = A^\mu A^{\mu,\top},$ and we know that this kernel is invariant across the iterations. Further, note that since $P^\mu v$ lies in this column space, it is orthogonal to any eigenvector of $\ce^\mu_\ell$ with zero eigenvalue, and so we can then conclude that \[ (P^\mu v)^\top \ce^\mu_\ell (P^\mu v) \ge \lmin^\mu_\ell \|P^\mu v\|_2^2.\]
        
        But then, if $\ell \ge L(\varepsilon),$ then for any unit vector $v$, we have (for any $\mu$ in the last inequalities) that \begin{align*} v^\top \bce_\ell v &= \frac1m \sum_\mu v^\top \bce_\ell^\mu v \\
        &\ge \frac1m \sum_\mu \lmin_\ell^\mu \|P^\mu v\|_2^2 =  \frac1m  \sum_\mu \frac{\lmax_\ell^\mu}{(1+\kappa_\ell^\mu)} \|P^\mu v\|_2^2\\
        &\ge \frac{\lmax_\ell^\mu}{1 + \varepsilon} \frac1m \sum_\mu \|P^\mu v\|_2^2\\
        &\ge \frac{\lmax_\ell^\mu}{1 + \varepsilon} \cdot \alpha,\end{align*} where we used the equality of the $\lmax_\ell^\mu,$ the fact that $\kappa_\ell^\mu \le 1 + \varepsilon,$ and finally the definition of $\alpha$. Since $v$ is a unit vector, we conclude that any Rayleigh quotient of $\bce_\ell$ is so lower bound, ergo \[ \forall \mu,  \lmin(\bce_\ell) \ge  \frac{\lmax_\ell^\mu}{1 + \varepsilon} \cdot \alpha. \] Putting this together with the upper bound on $\lmax(\bce_\ell)$, we immediately conclude that \[ \bkappa_\ell \le \frac{1 + \varepsilon}{\alpha}.\qedhere\] 
    \end{proof}
\end{lemma}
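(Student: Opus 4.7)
The lemma has two distinct parts: a Weyl-type upper bound on $\lmax(\bce_\ell)$, and a lower bound on $\lmin(\bce_\ell)$ that pulls the diversity index $\alpha$ out of the averaging. My strategy is to handle each separately and combine at the end. The crucial preliminary observation is that under the assumed normalization $\lmax_0^\mu = 1$ and step-size choice $\eta_\ell^\mu = (\lmax_\ell^\mu)^{-1}/3$, the eigenvalue iteration from Lemma~\ref{lemma:appx_central_ev_iteration} gives $\lmax_{\ell+1}^\mu = (1-1/3)^2\lmax_\ell^\mu = (4/9)\lmax_\ell^\mu$, a recursion that is machine-independent. Consequently $\lmax_\ell^\mu = (4/9)^\ell$ for every $\mu$, so one can talk about a common quantity $\lmax_\ell$.

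For the upper bound, I would apply Weyl's inequality directly to the average of symmetric matrices: $\lmax(\bce_\ell) \le \tfrac{1}{m}\sum_\mu \lmax(\ce_\ell^\mu) = \lmax_\ell$. This step is essentially immediate once the normalization has been digested.

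The lower bound is the heart of the matter, and is where $\alpha$ enters. The plan is to work directly with the Rayleigh quotient: for a unit $v$, $v^\top\bce_\ell v = \tfrac{1}{m}\sum_\mu v^\top \ce_\ell^\mu v$, and I want to bound each summand from below. Here I would exploit the structural fact, carried over from the centralized analysis, that the kernel of $\ce_\ell^\mu$ coincides with the kernel of $\ce_0^\mu$ (i.e.\ with the orthogonal complement of the column space of $A^\mu$) and is invariant under the iteration. Hence, letting $P^\mu$ denote the orthogonal projection onto the column space of $A^\mu$, one has $v^\top \ce_\ell^\mu v = (P^\mu v)^\top \ce_\ell^\mu (P^\mu v) \ge \lmin_\ell^\mu \|P^\mu v\|^2$, where $\lmin_\ell^\mu$ is the smallest nonzero eigenvalue. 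For $\ell \ge L(\varepsilon)$ the per-machine conditioning guarantee of Lemma~\ref{lemma:distributed_permachine_condition_number_contraction} yields $\lmin_\ell^\mu \ge \lmax_\ell/(1+\varepsilon)$. Averaging and invoking the definition of $\alpha$ then produces $v^\top\bce_\ell v \ge \tfrac{\lmax_\ell}{1+\varepsilon}\cdot\tfrac{1}{m}\sum_\mu \|P^\mu v\|^2 \ge \tfrac{\lmax_\ell\, \alpha}{1+\varepsilon}$, which is the second bullet. Dividing the Weyl upper bound by this lower bound gives the $\bkappa_\ell$ estimate.

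\textbf{Main obstacle.} The subtlest point is the passage from the eigenvalue inequality $\lmin^\mu_\ell$ to a Rayleigh bound on an arbitrary vector $v$ when the per-machine $\ce_\ell^\mu$ may be rank-deficient: one must justify that projecting $v$ onto the column space of $A^\mu$ is legitimate (it is, because the dropped component lies in the kernel of $\ce_\ell^\mu$) and that $\lmin_\ell^\mu$ refers to the smallest \emph{nonzero} eigenvalue so that the per-machine conditioning bound applies. Once that accounting is done, the appeal to the definition of $\alpha$ is automatic and the two bounds combine cleanly.
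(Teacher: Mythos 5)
Your proposal is correct and matches the paper's own proof essentially step for step: Weyl's inequality plus the normalization $\lmax_\ell^\mu=(4/9)^\ell$ for the upper bound, then a Rayleigh-quotient lower bound using the projections $P^\mu$, kernel invariance, the per-machine conditioning from Lemma~\ref{lemma:distributed_permachine_condition_number_contraction}, and the definition of $\alpha$. (Incidentally, you write the intermediate step correctly as $\lmin_\ell^\mu=\lmax_\ell^\mu/\kappa_\ell^\mu\ge\lmax_\ell^\mu/(1+\varepsilon)$, whereas the paper's displayed equality $\lmin_\ell^\mu=\lmax_\ell^\mu/(1+\kappa_\ell^\mu)$ contains a harmless typo but reaches the same conclusion.)
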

At a high level, $\alpha^{-1}$ is the limiting condition number of the $\bce_\ell$s, where deviation from perfect conditioning occurs only due to how the various $\ce^\mu_\ell$ energise distinct subspaces for large $\mu$. We note that this dependence is tight---if all the $\ce^\mu$ share the top eigenvector, then the upper bound on $\lmax(\bce_\ell)$ is exact, while the lowest nonzero eigenvector of $\bce_\ell$ is precisely the direction that achieves the minimum in the definition of the diversity index.

\textbf{\emph{Concluding the argument.}} With this in hand, we can argue the decay of $N_\ell,$ and so attain error control. 

\begin{proof}[Proof of Theorem~\ref{thm:distributed}]
    Recall that $D - \hat{D}_\ell = W_* N_\ell C_0$. We again assume that $\lmax^\mu_0 = 1$ for all $\mu,$ and that $\eta_\ell^\mu = (\lmax^\mu_\ell)^{-1}/3.$ We set $\gamma_\ell = (\lmax^\mu_\ell)^{-1} \le \lmax(\bce_\ell)^{-1}$. Then, as before, we have (restricted to the appropriate subspace if $\bce_0$ is not full rank) \[ \|I - \eta_\ell \bce_\ell\|_2 \le (1 - 1/\bkappa_\ell) \le \exp( -1/\bkappa_\ell). \]
    By Lemma~\ref{lemma:distributed_global_conditioning}, for $\ell \ge L(1),$ we have $\bkappa_\ell \le 2/\alpha,$ and so  \[ \|N_\ell\| \le \exp( - \sum_{l = L(1)}^{\ell} 1/\bkappa_\ell ) \le \exp( - \alpha (\ell - L(1))/2),\] which is smaller than $\iota \varepsilon$ if \[ \ell \ge L(1) +\frac{2}{\alpha} \log \frac{1}{\iota\varepsilon}.\] Of course, $L(1) = 3 + \lceil 3/5 \log(\max_\mu \kappa^\mu_0)\rceil,$ and setting $\iota$ so small that error to $\varepsilon$ follows in the same way as the proof of Theorem~\ref{thm:central} concludes the argument.   
\end{proof}

\textbf{\emph{Comment on Rates}.} We note that, in the distributed setting, within-machine computation is typically much cheaper than across-machine communication. As a result, the protocol we have analyzed above, wherein each machine begins with the same value of $\lmax^\mu_0$, is cheap to follow by using power-iteration at each machine. Given this choice, the value $\lmax^\mu_\ell$ is simply equal to $(4/9)^\ell$ for every machine, and setting the learning rate $\eta_\ell^\mu = \rho_\ell^\mu/3$ for $\rho_\ell^\mu = (9/4)^\ell$ is trivial, and does not require any communication. We also note that setting $\gamma_\ell = \rho_\ell^\mu$ in each machine, and then averaging the results of the updates together is sufficient, since $\rho_\ell^\mu$ is constant across all machines. Thus, we recover exactly the structure presented in Algorithm~\ref{alg:unified}, with $\gamma = 1, \eta = 1/3$. Note, again, that for the sake of stability, instead of working with decaying $A_\ell$ and exploding $\eta_\ell, \gamma_\ell,$ we can again rescale each $A_\ell^\mu, B_\ell^\mu$ by $3/2$ after updates. 

In general, if this normalisation is not carried out, the machines may actually set their values for $\gamma_\ell$ as distinct $\gamma_\ell^\mu$s. The net effect would be that we need to analyze a slightly different version of $\bce_\ell$ that is sensitive to inter-machine variations in $\gamma_\ell^\mu,$ which in turn would rely on how strongly $\lmax^\mu_0$ varies across machines. We leave the study of such scenarios to future work.

\end{document}